\DeclarePairedDelimiterX{\infdivx}[2]{(}{)}{%
  #1\;\delimsize\|\;#2%
}
\newcommand{\gps}{\textsc{gp}s\xspace}
\newcommand{\gp}{\textsc{gp}\xspace}
\newcommand{\kld}{KL divergence\xspace}
\crefname{appsec}{appendix}{appendices}
\Crefname{appsec}{Appendix}{Appendices}
\definecolor{mydarkblue}{rgb}{0,0.08,0.45}
\newtheorem{proposition}{Proposition}
\newenvironment{customproposition}[1]
  {\innercustomproposition}
  {\endinnercustomproposition}
\newcommand{\ba}{\mathbf{a}}
\newcommand{\bs}{\mathbf s}
\newcommand{\bS}{\mathbf S}
\newcommand{\calH}{\mathcal{H}}
\newcommand{\calN}{\mathcal{N}}
\newcommand{\calO}{\mathcal{O}}
\renewcommand{\a}{\mathbf{a}}
\newcommand{\bmu}{{\boldsymbol{\mu}}}
\newcommand{\bsigma}{{\boldsymbol{\sigma}}}
\newcommand{\btau}{\bm{\tau}}
\newcommand{\bSigma}{\boldsymbol\Sigma}
\newcommand{\closer}[3]{{\kern-#1ex{#2}\kern-#3ex}}
\newcommand{\DKL}{\mathbb{D}_{\textrm{KL}}\infdivx}
\DeclareMathOperator*{\argmax}{arg\,max}
\mathchardef\mhyphen="2D
\DeclareMathOperator{\E}{\mathbb{E}}
\definecolor{darkgreen}{rgb}{0.0, 0.7, 0.0}
\newcommand\defines{\,\,\dot{=}\,\,}
\newcommand{\vbar}{\,|\,}
\newcommand{\calS}{\mathcal{S}}
\newcommand{\calA}{\mathcal{A}}
\newcommand{\calD}{\mathcal{D}}
\newcommand{\bA}{\mathbf{A}}
\title{On Pathologies in KL-Regularized Reinforcement Learning from Expert Demonstrations}
\author{%
    \hspace*{8pt}Tim G. J. Rudner\thanks{Equal contribution. $^\dagger$\,Corresponding author: \href{mailto:tim.rudner@cs.ox.ac.uk}{\texttt{tim.rudner@cs.ox.ac.uk}}.}~~$^\dagger$ \\
    University of Oxford \\
    \And
    \hspace*{4pt}Cong Lu$^\ast$ \\
    University of Oxford \\
    \AND
    Michael A. Osborne \\
    University of Oxford \\
    \And
    Yarin Gal \\
    University of Oxford \\
    \And
    Yee Whye Teh \\
    University of Oxford \\
}
\begin{document}

\maketitle

\begin{abstract}
KL-regularized reinforcement learning from expert demonstrations has proved successful in improving the sample efficiency of deep reinforcement learning algorithms, allowing them to be applied to challenging physical real-world tasks. However, we show that KL-regularized reinforcement learning with behavioral reference policies derived from expert demonstrations can suffer from pathological training dynamics that can lead to slow, unstable, and suboptimal online learning. We show empirically that the pathology occurs for commonly chosen behavioral policy classes and demonstrate its impact on sample efficiency and online policy performance. Finally, we show that the pathology can be remedied by \textit{non-parametric} behavioral reference policies and that this allows KL-regularized reinforcement learning to significantly outperform state-of-the-art approaches on a variety of challenging locomotion and dexterous hand manipulation tasks.
\end{abstract}

\section{Introduction}
\label{sec:intro}

Reinforcement learning (RL)~\citep{kaelbling1996reinforcement, mnih-atari-2013,Sutton1998,tesauro1995temporal} is a powerful paradigm for learning complex behaviors.
Unfortunately, many modern reinforcement learning algorithms require agents to carry out millions of interactions with their environment to learn desirable behaviors, making them of limited use for a wide range of practical applications that cannot be simulated~\citep{dulac2019challenges,navarro2012real}.
This limitation has motivated the study of algorithms that can incorporate pre-collected offline data into the training process, either fully offline or with online exploration, to improve sample efficiency, performance, and reliability~\citep{pmlr-v139-ball21a,cang2021behavioral,morel,lu2021revisiting,mopo,combo}.
An important and well-motivated subset of these methods consists of approaches for efficiently incorporating expert demonstrations into the learning process~\citep{brys2015reinforcement, gao2018reinforcement,konidaris2012robot,schaal1997learning}.

\begin{figure}[t!]
\vspace*{-5pt}
\centering
\begin{subfigure}[t]{0.49\columnwidth}
    \begin{center}
        \large Parametric
    \end{center}
    \includegraphics[width=\columnwidth]{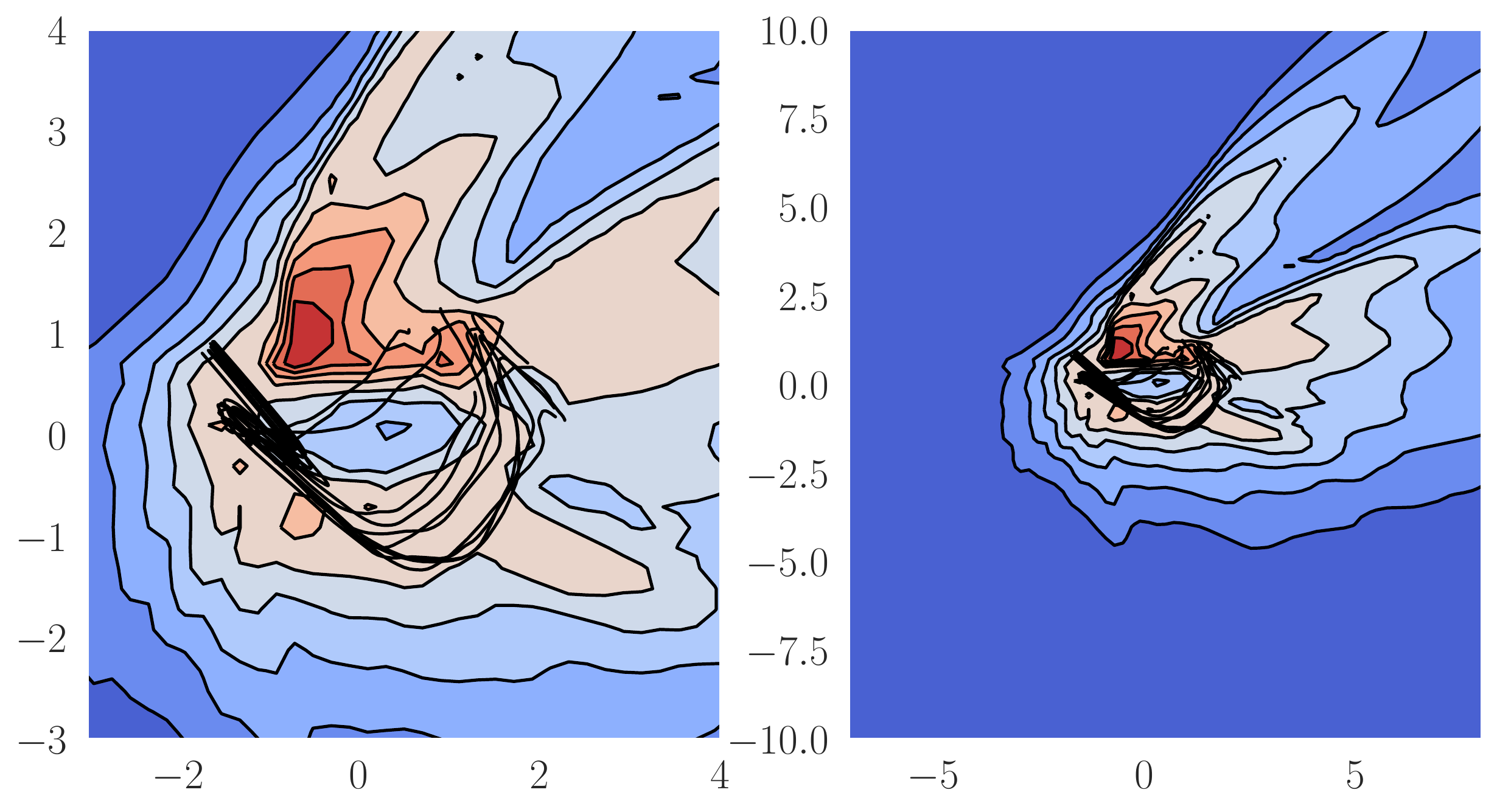}
\end{subfigure}
\begin{subfigure}[t]{0.49\columnwidth}
    \begin{center}
        \large Non-Parametric
    \end{center}
    \includegraphics[width=\columnwidth]{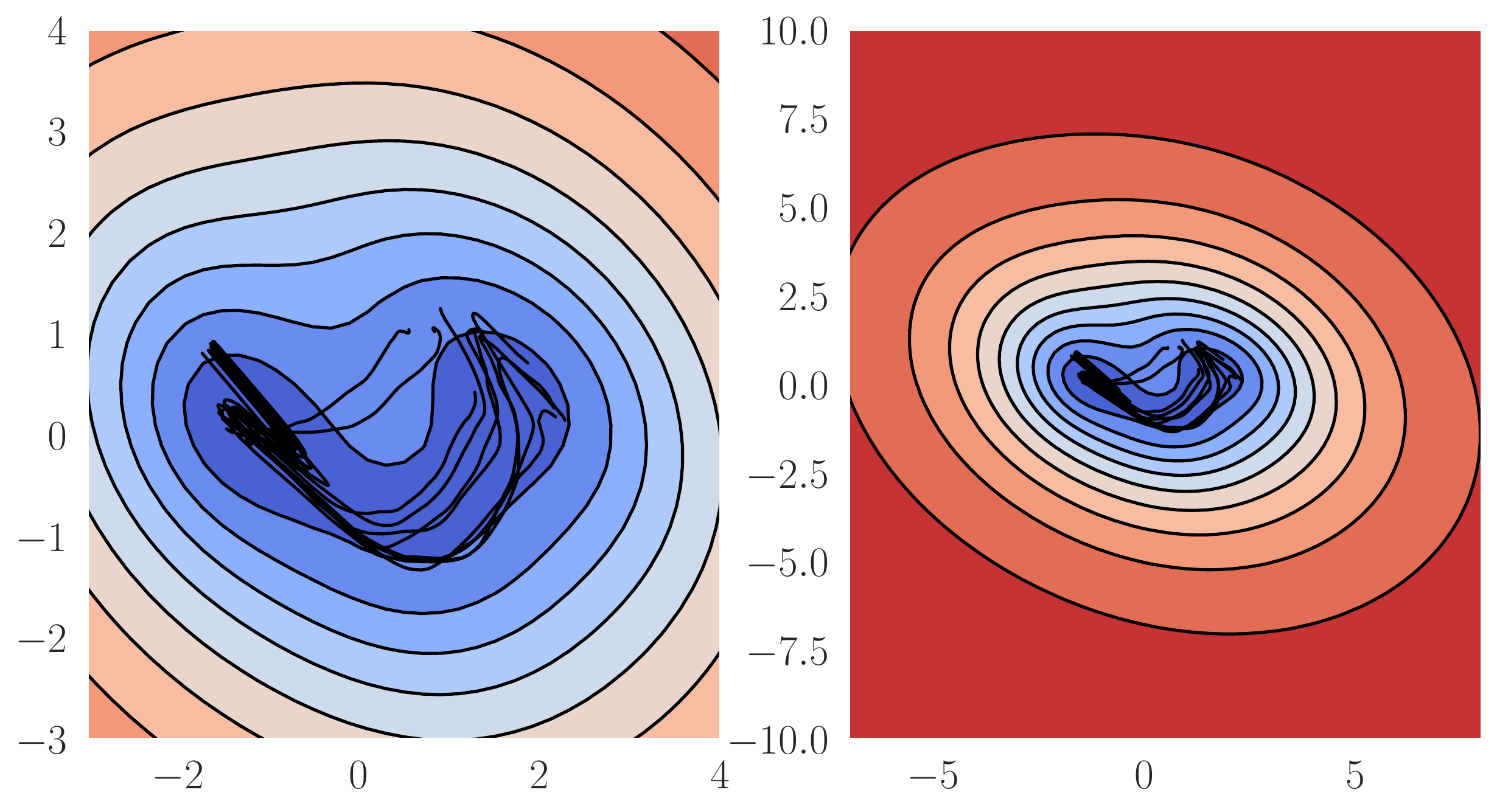}
\end{subfigure}
\\
\centering
\hspace*{6pt}
\begin{subfigure}[t]{0.47\columnwidth}
    \includegraphics[width=\columnwidth]{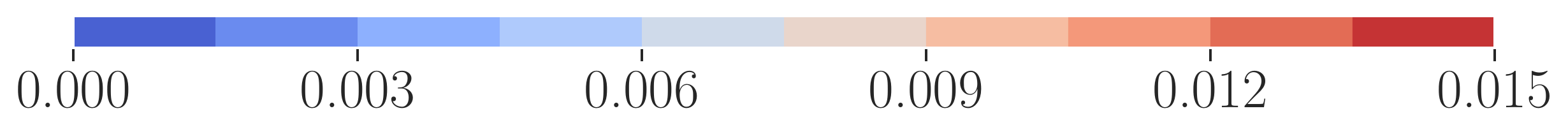}
\end{subfigure}
\hspace*{6pt}
\begin{subfigure}[t]{0.47\columnwidth}
    \includegraphics[width=\columnwidth]{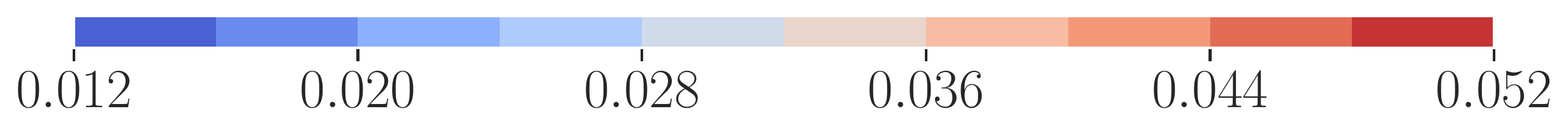}
\end{subfigure}
\caption{
    Predictive variances of non-parametric and parametric behavioral policies on a low dimensional representation (the first two principal components) of a 39-dimensional dexterous hand manipulation state space (see ``door-binary-v0'' in~\Cref{fig:dex_eval}).
    \textbf{Left}: Parametric neural network Gaussian behavioral policy \mbox{$\pi_{\psi}(\cdot \vbar \bs) = \calN(\bmu_{\psi}(\bs), \bsigma^2_{\psi}(\bs))$}.
    \textbf{Right}: Non-parametric Gaussian process posterior behavioral policy \mbox{$\pi_{\mathcal{GP}}(\cdot \vbar \bs, \calD_{0}) = \mathcal{GP}(\bmu_{0}(\bs), \bSigma_{0}(\bs, \bs'))$}.
    Expert trajectories $\calD$ used to train the behavioral policies are shown in black.
    The \gp predictive variance is well-calibrated:
    It is small near the expert trajectories and large in other parts of the state space.
    In contrast, the neural network predictive variance is poorly calibrated:
    It is relatively small on the expert trajectories, and collapses to near zero elsewhere.
    Note the significant difference in scales.
    }
\label{fig:heatmaps}
\vspace{-10pt}
\end{figure}

Reinforcement learning with Kullback-Leibler (KL) regularization is a particularly successful approach for doing so~\citep{boularias2011relative,nair2020accelerating,10.5555/645529.657801,peng2019advantageweighted, Siegel2020Keep,wu2019behavior}.
In KL-regularized reinforcement learning, the standard reinforcement learning objective is augmented by a Kullback-Leibler divergence term that penalizes dissimilarity between the online policy and a behavioral reference policy derived from expert demonstrations.
The resulting regularized objective pulls the agent's online policy towards the behavioral reference policy while also allowing it to improve upon the behavioral reference policy by exploring and interacting with the environment.
Recent advances that leverage explicit or implicit KL-regularized objectives, such as BRAC~\citep{wu2019behavior}, ABM~\citep{Siegel2020Keep}, and AWAC~\citep{nair2020accelerating}, have shown that KL-regularized reinforcement learning from expert demonstrations is able to significantly improve the sample efficiency of online training and reliably solve challenging environments previously unsolved by standard deep reinforcement learning algorithms.

\newpage

\textbf{Contributions.}$~$
In this paper, we show that despite some empirical success, KL-regularized reinforcement learning from expert demonstrations can suffer from previously unrecognized pathologies that lead to instability and sub-optimality in online learning.
To summarize, our core contributions are as follows:\vspace{-5pt}
\begin{itemize}[leftmargin=20pt]
\setlength\itemsep{0.1em}
    \item
    We illustrate empirically that commonly used classes of parametric behavioral policies experience a collapse in predictive variance about states away from the expert demonstrations.
    \item
    We demonstrate theoretically and empirically that KL-regularized reinforcement learning algorithms can suffer from pathological training dynamics in online learning when regularized against behavioral policies that exhibit such a collapse in predictive variance.
    \item
    We show that the pathology can be remedied by \textit{non-parametric} behavioral policies, whose predictive variances are well-calibrated and guaranteed not to collapse about previously unseen states, and that fixing the pathology results in online policies that significantly outperform state-of-the-art approaches on a range of challenging locomotion and dexterous hand manipulation tasks.
\end{itemize}
The left panel of~\Cref{fig:heatmaps} shows an example of the collapse in predictive variance away from the expert trajectories in parametric behavioral policies.
In contrast, the right panel of~\Cref{fig:heatmaps} shows the predictive variance of a non-parametric behavioral policy, which---unlike in the case of the parametric policy---increases off the expert trajectories.
By avoiding the pathology, we obtain a stable and reliable approach to sample-efficient reinforcement learning, applicable to a wide range of reinforcement learning algorithms that leverage KL-regularized objectives.\footnote{Code and visualizations of our results can be found at~\url{https://sites.google.com/view/nppac}.}

\vspace*{-3pt}
\section{Background}
\label{sec:background}

We consider the standard reinforcement learning setting where an agent interacts with a discounted Markov Decision Process (MDP)~\citep{Sutton1998} given by a 5-tuple $(\calS, \calA, p, r, \gamma)$, where $\calS$ and $\calA$ are the state and action spaces, $p(\cdot \vbar \bs_{t}, \ba_{t})$ are the transition dynamics, $r(\bs_{t}, \ba_{t})$ is the reward function, and $\gamma$ is a discount factor.
$\rho_{\pi}(\btau_{t})$ denotes the state--action trajectory distribution from time $t$ induced by a policy $\pi(\cdot \vbar \bs_{t})$.
The discounted return from time step $t$ is given by $R(\btau_{t}) = \sum_{k=t}^\infty \gamma^{k} r(\bs_{k}, \ba_{k})$ for $t \in \mathbb{N}_{0}$.
The standard reinforcement learning objective to be maximized is the expected discounted return $J_{\pi}(\btau_{0}) = \mathbb{E}_{\rho_{\pi}(\btau_{0})}[R(\btau_{0})]$ under the policy trajectory distribution.

\vspace*{-3pt}
\subsection{Improving and Accelerating Online Training via Behavioral Cloning}

We consider settings where we have a set of expert demonstrations \emph{without reward}, \mbox{$\calD_{0} = \{ (\bs_{n}, \ba_{n}) \}_{n=1}^{N} = \{ \bar{\bS}, \bar{\bA} \}$}, which we would like to use to speed up and improve online learning~\citep{brys2015reinforcement,schaal1997learning}.
A standard approach for turning expert trajectories into a policy is behavioral cloning~\citep{bain1995framework, bratko1995behavioural} which involves learning a mapping from states in the expert demonstrations to their corresponding actions, that is, $\pi_{0}: \calS \rightarrow \calA$.
As such, behavioral cloning does not assume or require access to a reward function and only involves learning a mapping from states to action in a supervised fashion.

Since expert demonstrations are costly to obtain and often only available in small number, behavioral cloning alone is typically insufficient for agents to learn good policies in complex environments and has to be complemented by a method that enables the learner to build on the cloned behavior by interacting with the environment.
A particularly successful and popular class of algorithms used for incorporating behavioral policies into online training is KL-regularized reinforcement learning~\citep{DBLP:conf/iclr/GalashovJHTSDCT19,rawlik2012stochastic,schulman2018equivalence,NIPS2006_d806ca13}.

\vspace*{-3pt}
\subsection{KL-Regularized Objectives in Reinforcement Learning}

KL-regularized reinforcement learning modifies the standard reinforcement learning objective by augmenting the return with a negative KL divergence term from the learned policy $\pi$ to a reference policy $\pi_{0}$, given a temperature parameter $\alpha$.
The resulting discounted return from time step $t \in \mathbb{N}_{0}$ is then given by
\begin{align}
\SwapAboveDisplaySkip
\label{eq:return_kl_regularized}
    \tilde{R}(\btau_{t})
    =
    \sum_{k=t}^\infty \gamma^{k} \bigl[r(\bs_{k}, \ba_{k}) - \alpha \DKL{ \pi (\cdot \vbar \bs_{k}) }{ \pi_{0}(\cdot \vbar \bs_{k}) } \bigr]
\end{align}
and the reinforcement learning objective becomes
$
    \tilde{J}_{\pi}(\btau_{0}) = \mathbb{E}_{\rho_{\pi}(\btau_{0})}[\tilde{R}(\btau_{0})].
$
When the reference policy $\pi_{0}$ is given by a uniform distribution, we recover the entropy-regularized reinforcement learning objective used in Soft Actor--Critic (SAC)~\citep{haarnoja2019soft} up to an additive constant.

Under a uniform reference policy $\pi_{0}$, the resulting objective encourages exploration, while also choosing high-reward actions.
In contrast, when $\pi_{0}$ is non-uniform, the agent is discouraged to explore areas of the state space $\calS$ where the variance of $\pi_{0}(\cdot \vbar \bs)$ is low (i.e., more certain) and encouraged to explore areas of the state space where the variance of $\pi_{0}(\cdot \vbar \bs)$ is high.
The KL-regularized reinforcement learning objective can be optimized via policy--gradient and actor--critic algorithms.

\vspace*{-3pt}
\subsection{KL-Regularized Actor--Critic}

An optimal policy $\pi$ that maximizes the expected KL-augmented discounted return $\tilde{J}_{\pi}$ can be learned by directly optimizing the policy gradient $\nabla_{\pi} \tilde{J}_{\pi}$.
However, this policy gradient estimator exhibits high variance, which can lead to unstable learning.
Actor--critic algorithms \citep{degris2012off,konda2000actor,peters2005natural, rosenstein2004supervised} attempt to reduce this variance by making use of the state value function \mbox{$V^{\pi}(\bs_{t}) = \mathbb{E}_{\rho_{\pi}(\btau_{t})}[\tilde{R}(\btau_{t}) \vbar \bs_{t}] $} or the state--action value function \mbox{$Q^{\pi}(\bs_{t}, \ba_{t}) = \mathbb{E}_{\rho_{\pi}(\btau_{t})}[\tilde{R}(\btau_{t}) \vbar \bs_{t}, \ba_{t}] $} to stabilize training.

Given a reference policy $\pi_{0}(\ba_{t} \vbar \bs_{t})$, the state value function can be shown to satisfy the modified Bellman equation
\begin{align*}
    V^\pi(\bs_{t})
    \defines
    \mathbb{E}_{\ba_{t}\sim \pi(\cdot | \bs_{t})} [ Q^\pi(\bs_{t}, \ba_{t}) ] - \alpha \mathbb{D}_{\textrm{KL}}\bigl( \pi(\cdot \vbar \bs_{t}) \,||\, \pi_{0}(\cdot \vbar \bs_{t}) \bigr)
\end{align*}
with a recursively defined $Q$-function
\begin{align*}
    Q^{\pi}(\bs_{t}, \ba_{t})
    \defines
    r(\bs_{t}, \ba_{t}) + \gamma \E_{\bs_{t+1} \sim p(\cdot | \bs_{t}, \ba_{t})} [  V^\pi(\bs_{t+1}) ].
\end{align*}
Instead of directly optimizing the objective function $\tilde{J}_\pi$ via the policy gradient, actor--critic methods alternate between policy evaluation and policy improvement~\citep{degris2012off,haarnoja2019soft}:

\textbf{Policy Evaluation.}$~$ During the policy evaluation step, $Q_{\theta}^{\pi}(\bs, \ba)$, parameterized by parameters $\theta$, is trained by minimizing the Bellman residual
\begin{align}
\begin{split}
\label{eq:objective_qfunction}
   J_{Q}(\theta)
   \defines
   \mathbb{E}_{(\bs_{t}, \ba_{t}) \sim \calD} \Big[ (Q_{\theta } (\bs_{t}, \ba_{t})
   - (r(\bs_{t}, \ba_{t}) + \gamma \mathbb{E}_{\bs_{t+1} \sim p(\cdot | \bs_{t}, \ba_{t})} [V_{\bar{\theta}}(\bs_{t+1})]))^{2}\Big] ,
\end{split}
\end{align}
where $\calD$ is a replay buffer and $\bar{\theta}$ is a stabilizing moving average of parameters.

\textbf{Policy Improvement.}$~$ In the policy improvement step, the policy $\pi_{\phi}$, parameterized by parameters $\phi$, is updated towards the exponential of the KL-augmented $Q$-function,
\begin{align}
\begin{split}
\label{eq:objective_policy}
   J_{\pi}(\phi )
   \defines
   \mathbb{E}_{\bs_{t} \sim \calD} \left[ \alpha \DKL{ \pi_\phi (\cdot \vbar \bs_{t}) }{ \pi_{0}(\cdot \vbar \bs_{t}) } \right]
   - \mathbb{E}_{\bs_{t} \sim \calD} \left[ \mathbb{E}_{\ba_{t} \sim \pi_{\phi}(\cdot | \bs_{t})} \left[ Q_{\theta}(\bs_{t}, \ba_{t}) \right] \right] ,
\end{split}
\end{align}
with states sampled from a replay buffer $\calD$ and actions sampled from the parameterized online policy $\pi_\phi$.
The following sections will focus on the policy improvement objective and how certain types of references policies can lead to pathologies when optimizing $J_{\pi}(\phi )$ with respect to $\phi$.

\section{Identifying the Pathology}
\label{sec:pathology_identify}

In this section, we investigate the effect of KL-regularization on the training dynamics.
To do so, we first consider the properties of the \kld to identify a potential failure mode for KL-regularized reinforcement learning.
Next, we consider parametric Gaussian behavioral reference policies commonly used in practice for continuous control tasks~\citep{haarnoja2019soft,wu2019behavior} and show that for Gaussian behavioral reference policies with small predictive variance, the policy improvement objective suffers from exploding gradients with respect to the policy parameters $\phi$.
We confirm that this failure occurs empirically and demonstrate that it results in slow, unstable, and suboptimal online learning.
Lastly, we show that various regularization techniques used for estimating behavioral policies are unable to prevent this failure and also lead to suboptimal online policies.

\vspace*{-3pt}
\subsection{When Are KL-Regularized Reinforcement Learning Objectives Meaningful?}
\label{subsec:kl_divergence}

We start by considering the properties of the \kld and discuss how these properties can lead to potential failure modes in KL-regularized objectives.
A well-known property of KL-regularized objectives in the variational inference literature is the occurrence of singularities when the support of one distribution is not contained in the support of the other.

To illustrate this problem, we consider the case of Gaussian behavioral and online policies commonly used in practice.
Mathematically, the KL divergence between two full Gaussian distributions is always finite and well-defined.
Hence, we might hope KL-regularized reinforcement learning with Gaussian behavioral and online policies to be unaffected by the failure mode described above.
However, the support of a Gaussian online policy $\pi_{\phi}(\cdot \vbar \bs_{t})$ will not be contained in the support of a behavioral reference policy $\pi_{0}(\cdot \vbar \bs_{t})$ as the predictive variance $\bsigma^2_{0}(\bs_{t})$ tends to zero, and hence
$\DKL{ \pi_\phi (\cdot \vbar \bs_{t}) }{ \pi_{0}(\cdot \vbar \bs_{t}) } \rightarrow \infty$ as $\bsigma^2_{0}(\bs_{t}) \rightarrow 0.$
In other words, as the variance of a behavioral reference policy tends to zero and the behavioral distribution becomes degenerate, the \kld blows up to infinity~\citep{murphy2013probabilistic}.
While in practice, Gaussian behavioral policy would not operate in the limit of zero variance, the functional form of the \kld between (univariate) Gaussians,
\begin{align*}
    \DKL{ \pi_\phi (\cdot \vbar \bs_{t}) }{ \pi_{0}(\cdot \vbar \bs_{t}) }
    \propto
    \log \frac{\bsigma_{0}(\bs_{t})}{\bsigma_{\phi}(\bs_{t})}+\frac{\bsigma_{\phi}^{2}(\bs_{t}) + (\bmu_{\phi}(\bs_{t}) - \bmu_{0}(\bs_{t}))^{2}}{2 \bsigma_{0}^{2}(\bs_{t}) },
\end{align*}
implies a continuous, quadratic increase in the magnitude of the divergence as $\bsigma_{0}(\bs_{t})$ decreases, further exacerbated by a large difference in predictive means, $|\bmu_{\phi}(\bs_{t}) - \bmu_{0}(\bs_{t})|$.

As a result, for Gaussian behavioral reference policies $\pi_{0}(\cdot \vbar \bs_{t})$ that assign very low probability to sets of points in sample space far away from the distribution's mean $\bmu_{0}(\bs_{t})$, computing the \kld can result in divergence values so large to cause numerical instabilities and arithmetic overflow.
Hence, even for a suitably chosen behavioral reference policy class, vanishingly small behavioral reference policy predictive variances can cause the \kld to `blow up' and cause numerical issues at evaluation points far away from states in the expert demonstrations.

One way to address this failure mode may be to lower-bound the output of the variance network (e.g., by adding a small constant bias).
However, placing a floor on the predictive variance of the behavioral reference policy is not sufficient to encourage effective learning.
While it would prevent the \kld from blowing up, it would also lead to poor gradient signals, as well-calibrated predictive variance estimates that \emph{increase} on states far away from the expert trajectories are necessary to keep the KL penalty from pulling the predictive mean of the online policy towards poor behavioral reference policy predictive means on states off the expert trajectories.
Another possible solution could be to use heavy-tailed behavioral reference policies distributions, for example, Laplace distributions, to avoid pathological training dynamics.
However, in~\Cref{appsec:laplace} we show that Laplace behavioral reference policies also suffer from pathological training dynamics, albeit less severely.

In the following sections, we explain how an explosion in \mbox{$ \DKL{ \pi_\phi(\cdot \vbar \bs_{t}) }{ \pi_{0}(\cdot \vbar \bs_{t}) }$} caused by small $\bsigma^2_{0}(\bs_{t})$ affects the gradients of $J_\pi(\phi)$ in KL-regularized RL and discuss of how and why $\bsigma^2_{0}(\bs_{t})$ may tend to zero in practice.

\vspace*{-3pt}
\subsection{Exploding Gradients in KL-Regularized Reinforcement Learning Objectives}
\label{subsec:explodgrad}

To understand how small predictive variances in behavioral reference policies can affect---and possibly destabilize---online training in KL-regularized RL, we consider the contribution of the behavioral reference policy's variance to the gradient of the policy objective in~\Cref{eq:objective_policy}.
Compared to entropy-regularized actor--critic methods (SAC, ~\citet{haarnoja2019soft}), which implicitly regularize against a uniform policy, the gradient estimator $\hat{\nabla}_{\phi} J_{\pi}(\phi )$ in KL-regularized RL gains an extra scaling term $\nabla_{\ba_{t}} \log \pi_{0}(\ba_{t} \vbar \bs_{t})$, the gradient of the prior log-density evaluated actions $\ba_{t} \sim \pi_{\phi}(\cdot \vbar \bs)$:
\begin{proposition}[Exploding Gradients in KL-Regularized RL]
\label{prop:gradients_policy}
Let $\pi_{0}(\cdot \vbar \bs)$ be a Gaussian behavioral reference policy with mean $\bmu_{0}(\bs_{t})$ and variance $\bsigma^{2}_{0}(\bs_{t})$, and let $\pi_{\phi}(\cdot \vbar \bs)$ be an online policy with reparameterization $\ba_{t} = f_{\phi} (\epsilon_{t} ; \bs_{t})$ and random vector $\epsilon_{t}$.
The gradient of the policy loss with respect to the online policy's parameters $\phi$ is then given by
\begin{align}
\begin{split}
    \hat{\nabla}_{\phi} J_{\pi}(\phi)
    &=
    \big( \alpha \nabla_{\ba_{t}} \log  \pi_{\phi}(\ba_{t} \mid \bs_{t}) - \alpha \textcolor{black}{ \nabla_{\ba_{t}} \log  \pi_{0}(\ba_{t} \mid \bs_{t})}
    \\
    &\qquad
    - \nabla_{\ba_{t}} Q(\bs_{t}, \ba_{t}) \big) \nabla_{\phi} f_{\phi} (\epsilon_{t} ; \bs_{t}) + \alpha \nabla_{\phi} \log \pi_{\phi}(\ba_{t} \mid \bs_{t})
\end{split}
\end{align}%
with
\mbox{$
\textcolor{black}{\nabla_{\ba_{t}} \log \pi_{0}(\ba_{t} \vbar \bs_{t})} = -\frac{\ba_{t}-\bmu_{0}(\bs_{t})}{\bsigma^{2}_{0}(\bs_{t})}.
$}
For fixed \mbox{$|\ba_{t}-\bmu_{0}(\bs_{t})|$}, \mbox{$\nabla_{\ba_{t}} \log \pi_{0}(\ba_{t} | \bs_{t})$} grows as \mbox{$\mathcal{O}(\bsigma^{-2}_{0}(\bs_{t}))$}; thus,
\begin{align*}
    \vbar \hat{\nabla}_{\phi} J_{\pi}(\phi) \vbar \rightarrow \infty \quad \text{as} \quad \bsigma^2_{0}(\bs_{t}) \rightarrow 0 \quad \text{whenever} \quad \nabla_{\phi} f_{\phi} (\epsilon_{t} ; \bs_{t}) \neq 0.
\end{align*}
\end{proposition}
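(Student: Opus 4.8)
The plan is to prove the statement in two movements: first derive the closed-form reparameterization gradient of $J_{\pi}(\phi)$, then read off its asymptotics as $\bsigma^2_{0}(\bs_{t}) \to 0$. First I would rewrite the objective in \Cref{eq:objective_policy} by expanding the KL divergence as $\DKL{\pi_{\phi}(\cdot \vbar \bs_{t})}{\pi_{0}(\cdot \vbar \bs_{t})} = \E_{\ba_{t}\sim\pi_{\phi}(\cdot\vbar\bs_{t})}[\log\pi_{\phi}(\ba_{t}\vbar\bs_{t}) - \log\pi_{0}(\ba_{t}\vbar\bs_{t})]$, so that $J_{\pi}(\phi)$ becomes a single expectation over $\bs_{t}\sim\calD$ and $\ba_{t}\sim\pi_{\phi}(\cdot\vbar\bs_{t})$ of the integrand $\alpha\log\pi_{\phi}(\ba_{t}\vbar\bs_{t}) - \alpha\log\pi_{0}(\ba_{t}\vbar\bs_{t}) - Q_{\theta}(\bs_{t},\ba_{t})$. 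Substituting the reparameterization $\ba_{t} = f_{\phi}(\epsilon_{t};\bs_{t})$ moves the $\phi$-dependence out of the sampling distribution and into the integrand, leaving an expectation over the $\phi$-independent noise $\epsilon_{t}$, so that $\nabla_{\phi}$ may be passed inside the expectation.

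The crux is then a careful total-derivative bookkeeping. The term $\log\pi_{\phi}(f_{\phi}(\epsilon_{t};\bs_{t})\vbar\bs_{t})$ depends on $\phi$ in two ways---pathwise, through the sampled action $\ba_{t}=f_{\phi}(\epsilon_{t};\bs_{t})$, and explicitly, through the distribution parameters---and so contributes both a pathwise term $\nabla_{\ba_{t}}\log\pi_{\phi}(\ba_{t}\vbar\bs_{t})\,\nabla_{\phi}f_{\phi}(\epsilon_{t};\bs_{t})$ and the explicit score term $\nabla_{\phi}\log\pi_{\phi}(\ba_{t}\vbar\bs_{t})$ obtained by holding $\ba_{t}$ fixed. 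In contrast, $\log\pi_{0}$ and $Q_{\theta}$ carry no explicit $\phi$-dependence, so each contributes only a single pathwise term. Collecting these contributions and taking the single-sample estimator yields exactly the stated expression; equivalently, one recovers the standard SAC reparameterization gradient and simply adds the extra pathwise term $-\alpha\nabla_{\ba_{t}}\log\pi_{0}(\ba_{t}\vbar\bs_{t})\,\nabla_{\phi}f_{\phi}(\epsilon_{t};\bs_{t})$ coming from the reference policy. Differentiating the Gaussian log-density $\log\pi_{0}(\ba_{t}\vbar\bs_{t}) = -\tfrac{1}{2}\log(2\pi\bsigma^2_{0}(\bs_{t})) - (\ba_{t}-\bmu_{0}(\bs_{t}))^{2}/(2\bsigma^2_{0}(\bs_{t}))$ with respect to $\ba_{t}$ then gives the claimed $\nabla_{\ba_{t}}\log\pi_{0}(\ba_{t}\vbar\bs_{t}) = -(\ba_{t}-\bmu_{0}(\bs_{t}))/\bsigma^2_{0}(\bs_{t})$.

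For the divergence claim, I would isolate the reference-policy term and apply a reverse-triangle-inequality argument. For fixed nonzero $|\ba_{t}-\bmu_{0}(\bs_{t})|$ we have $|\nabla_{\ba_{t}}\log\pi_{0}(\ba_{t}\vbar\bs_{t})| = |\ba_{t}-\bmu_{0}(\bs_{t})|/\bsigma^2_{0}(\bs_{t}) = \mathcal{O}(\bsigma^{-2}_{0}(\bs_{t}))$, which diverges as $\bsigma^2_{0}(\bs_{t})\to 0$, whereas the remaining terms---$\nabla_{\ba_{t}}\log\pi_{\phi}$, $\nabla_{\ba_{t}}Q$, and $\nabla_{\phi}\log\pi_{\phi}$---depend only on the fixed online policy and critic and hence stay bounded in this limit. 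Since the divergent term enters multiplied by the nonzero Jacobian $\nabla_{\phi}f_{\phi}(\epsilon_{t};\bs_{t})$, its norm grows like $\mathcal{O}(\bsigma^{-2}_{0}(\bs_{t}))\,\|\nabla_{\phi}f_{\phi}(\epsilon_{t};\bs_{t})\|$, and bounding the full gradient norm below by this quantity minus the bounded remainder shows $|\hat{\nabla}_{\phi}J_{\pi}(\phi)|\to\infty$.

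The main obstacle is the chain-rule bookkeeping in the second step: one must correctly separate the dual (pathwise and explicit) dependence of $\log\pi_{\phi}$ on $\phi$ from the purely pathwise dependence of $\log\pi_{0}$ and $Q_{\theta}$, since conflating them either drops the score term $\alpha\nabla_{\phi}\log\pi_{\phi}$ or spuriously introduces one for the reference policy. Once the terms are correctly separated, the asymptotic blow-up is a short dominant-term argument, the only mild caveat being that $|\ba_{t}-\bmu_{0}(\bs_{t})|$ must be bounded away from zero---precisely the regime of interest, where the online policy evaluates actions far from the (possibly miscalibrated) reference mean off the expert trajectories.
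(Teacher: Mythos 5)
Your proposal is correct and follows essentially the same route as the paper's own proof in \Cref{appsec:grad_calcs}: rewrite $J_{\pi}(\phi)$ as an expectation over the reparameterization noise, take the standard SAC-style gradient estimator separating the pathwise terms from the explicit score term $\alpha\nabla_{\phi}\log\pi_{\phi}$, differentiate the Gaussian log-density to obtain $-(\ba_{t}-\bmu_{0}(\bs_{t}))/\bsigma^{2}_{0}(\bs_{t})$, and conclude the blow-up from the $\mathcal{O}(\bsigma^{-2}_{0}(\bs_{t}))$ growth. Your explicit reverse-triangle-inequality step bounding the full gradient below by the divergent term minus the bounded remainder is a slightly more careful rendering of the paper's final assertion, but it is the same argument.
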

\begin{proof}
    See~\Cref{appsec:grad_calcs}.
\end{proof}
\vspace*{-5pt}
This result formalizes the intuition presented in~\Cref{subsec:kl_divergence} that a behavioral reference policy with a sufficiently small predictive variance may cause KL-regularized reinforcement learning to suffer from pathological training dynamics in gradient-based optimization.
The smaller the behavioral reference policy's predictive variance, the more sensitive the policy objective's gradients will be to differences in the means of the online and behavioral reference policies.
As a result, for behavioral reference policies with small predictive variance, the \kld will heavily penalize online policies whose predictive means diverge from the predictive means of the behavioral policy---even in regions of the state space away from the expert trajectory where the behavioral policy's mean prediction is poor.

\vspace*{-3pt}
\subsection{Predictive Uncertainty Collapse Under Parametric Policies}
\label{subsec:preduc}

The most commonly used method for estimating behavioral policies is maximum likelihood estimation (MLE)~\citep{Siegel2020Keep,wu2019behavior}, where we seek $\pi_{0} \defines \pi_{\psi^\star}$ with \mbox{$\psi^\star
    \defines
    \argmax_{\psi} \left\{ \mathbb{E}_{(\bs, \ba) \sim \calD_{0}} [\log \pi_{\psi} (\ba \vbar \bs)] \right\} $}
for a parametric behavioral policy $\pi_\psi$.
In practice, $\pi_\psi$ is often assumed to be Gaussian, \mbox{$\pi_\psi(\cdot \vbar \bs) = \calN(\bmu_{\psi}(\bs), \bsigma_{\psi}^2(\bs))$}, with $\bmu_{\psi}(\bs)$ and $\bsigma_{\psi}^2(\bs)$ parameterized by a neural network.

While maximizing the likelihood of the expert trajectories under the behavioral policy is a sensible choice for behavioral cloning, the limited capacity of the neural network parameterization can produce unwanted behaviors in the resulting policy.
The maximum likelihood objective ensures that the behavioral policy's predictive mean reflects the expert's actions and the predictive variance the (aleatoric) uncertainty inherent in the expert trajectories.

However, the maximum likelihood objective encourages parametric policies to use their model capacity toward fitting the expert demonstrations and reflecting the aleatoric uncertainty in the data.
As a result, for states off the expert trajectories, the policy can become degenerate and collapse to point predictions instead of providing meaningful predictive variance estimates that reflect that the behavioral policy ought to be highly uncertain about its predictions in previously unseen regions of the state space.
Similar behaviors are well-known in parametric probabilistic models and well-documented in the approximate Bayesian inference literature~\citep{quinonero2005unifying,rudner2021fsvi}.

\begin{wrapfigure}{R}{0.5\textwidth}
    \centering
    \vspace*{-10pt}
    \includegraphics[width=0.5\textwidth]{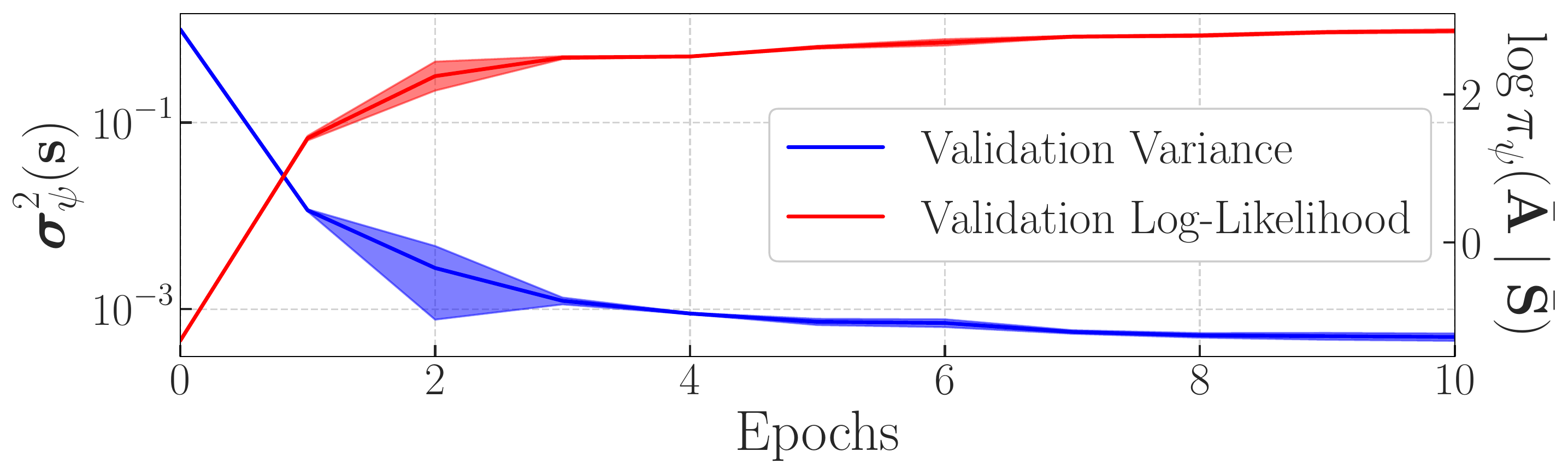}
    \vspace*{-19pt}
    \caption{
        Collapse in the predictive variance (in \textcolor{blue}{\textbf{blue}}) of a Gaussian behavioral policy parameterized by a neural network when training via maximum likelihood estimation.
        Lines and shaded regions denote means and standard deviations over five random seeds, respectively.
    }
    \vspace*{-10pt}
    \label{fig:nn_var_descent}
\end{wrapfigure}

\Cref{fig:heatmaps} demonstrates the collapse in predictive variance under maximum likelihood estimation in a low-dimensional representation of the ``door-binary-v0'' dexterous hand manipulation environment.
It shows that while the predictive variance is small close to the expert trajectories (depicted as black lines), it rapidly decreases further away from them.
Examples of variance collapse in other environments are presented in~\Cref{appsec:further_env_vis}.
\Cref{fig:nn_var_descent} shows that the predictive variance off the expert trajectories consistently decreases during training.
As shown in~\Cref{prop:gradients_policy}, such a collapse in predictive variance can result in pathological training dynamics in KL-regularized online learning---steering the online policy towards suboptimal trajectories in regions of the state space far away from the expert demonstrations and deteriorating performance.

\textbf{Effect of regularization on uncertainty collapse.}$~$
To prevent a collapse in the behavioral policy's predictive variance, prior work proposed adding entropy or Tikhonov regularization to the MLE objective~\citep{wu2019behavior}.
However, doing so does not succeed in preventing a collapse in predictive variance off the expert demonstration trajectories, as we show in~\Cref{appsec:entregpredvar}.
Deep ensembles~\citep{NIPS2017_7219}, whose predictive mean and variance are computed from the predictive means and variances of multiple Gaussian neural networks, are a widely used method for uncertainty quantification in regression settings.
However, model ensembling can be costly and unreliable, as it requires training multiple neural networks from scratch and does not guarantee well-calibrated uncertainty estimates~\citep{rudner2021fsvi,vanamersfoort2021improving}.
We provide visualizations in~\Cref{appsec:policy_ensemble} which show that ensembling multiple neural network policies does not fully prevent a collapse in predictive variance.

\begin{figure*}[t!]
\centering
\includegraphics[width=0.95\textwidth]{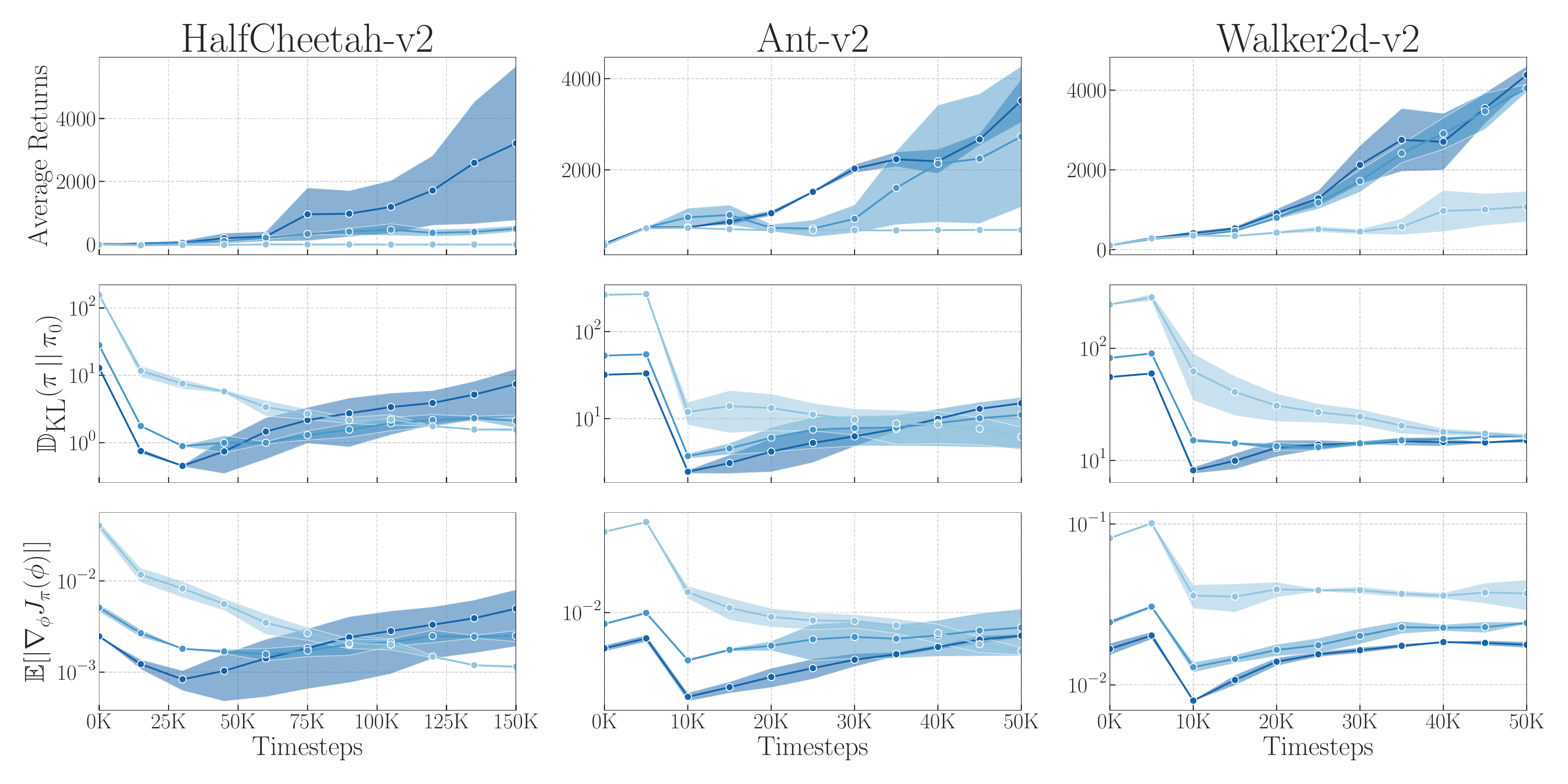}
\includegraphics[trim=0 100 0 100, clip, width=0.6\textwidth]{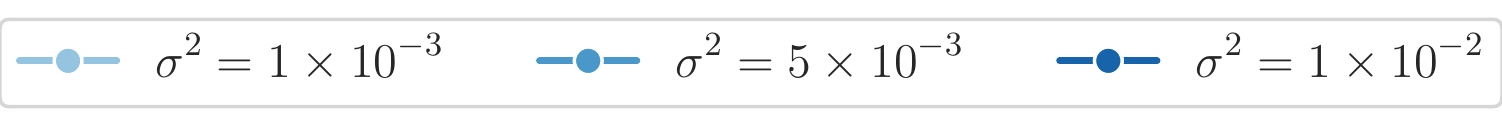}
    \caption{
        Ablation study showing the effect of predictive variance collapse on the performance of KL-regularized RL on MuJoCo environments.
        The plots show the average return of the learned policy, the magnitude of the KL penalty, and the magnitude of the average absolute gradients of the policy loss during online training.
        The lighter the shading, the lower the behavioral policy's predictive variance.
    }
    \label{fig:mujoco_gp_mean_fixed_var_abl}
    \vspace*{-10pt}
\end{figure*}

\vspace*{-3pt}
\subsection{Empirical Confirmation of Uncertainty Collapse}
\label{sec:empirical_evaluation}

To confirm~\Cref{prop:gradients_policy} empirically and assess the effect of the collapse in predictive variance on the performance of KL-regularized RL, we perform an ablation study where we fix the predictive mean function of a behavioral policy to a mean function that attains ~60\% of the optimal performance and vary the magnitude of the policy's predictive variance.
Specifically, we set the behavioral policy's predictive variance to different constant values in the set \mbox{$\{ 1\times 10^{-3}, 5\times 10^{-3}, 1\times 10^{-2} \}$} (following a similar implementation in~\citet{nair2020accelerating}).\footnote{We attempted to use  smaller values, but the gradients grew too large and caused arithmetic overflow.}
The results of this experiment are shown in~\Cref{fig:mujoco_gp_mean_fixed_var_abl}, which shows the average returns, the \kld, and the average absolute gradients of the policy loss over training.
The plots confirm that as the predictive variance of the offline behavioral policy tends to zero, the KL terms and average policy gradient magnitude explode as implied by~\Cref{prop:gradients_policy}, leading to unstable training and a collapse or dampening in average returns.

In other words, even for behavioral policies with accurate predictive means, smaller predictive variances slow down or even entirely prevent learning good behavioral policies.
This observation confirms that the pathology identified in~\Cref{prop:gradients_policy} occurs in practice and that it can have a significant impact on KL-regularized RL from expert demonstrations, calling into question the usefulness of KL regularization as a means for accelerating and improving online training.
In~\Cref{appsec:qfngrad}, we show that an analogous relationship exists for the gradients of the $Q$-function loss.

\clearpage

\vspace*{-3pt}
\section{Fixing the Pathology}
\label{sec:pathology_fix}

In order to address the collapse in predictive uncertainty for behavioral policies parameterized by a neural network trained via MLE, we specify a \emph{non-parametric} behavioral policy whose predictive variance is \emph{guaranteed} not to collapse about previously unseen states.
Noting that KL-regularized RL with a behavioral policy can be viewed as approximate Bayesian inference with an empirical prior policy~\citep{haarnoja2019soft,levine2018reinforcement,rudner2021odrl}, we  propose \textit{Non-Parametric Prior Actor--Critic} (\textsc{n-ppac}), an off-policy temporal difference algorithm for improved, accelerated, and stable online learning with behavioral policies.

\vspace*{-3pt}
\subsection{Non-Parametric Gaussian Processes Behavioral Policies}
\label{subsec:gps}

Gaussian processes (\gps)~\citep{10.5555/1162254} are models over functions defined by a mean $m(\cdot )$ and covariance function $k(\cdot, \cdot )$.
When defined in terms of a non-parametric covariance function, that is, a covariance function constructed from infinitely many basis functions, we obtain a non-degenerate \gp, which has sufficient capacity to prevent a collapse in predictive uncertainty away from the training data.
Unlike parametric models, whose capacity is limited by their parameterization, a non-parametric model's capacity \emph{increases} with the amount of training data.

Considering a non-parametric \gp behavioral policy, $\pi_{0}(\cdot \vbar \bs)$, with
\begin{align}
    \bA \vbar \bs \sim \pi_{0}(\cdot \vbar \bs)
    =
    \mathcal{GP}\bigl(m(\bs), k(\bs, \bs')\bigr) ,
\end{align}
we can obtain a \emph{non-degenerate} posterior distribution over actions conditioned on the offline data \mbox{$\calD_{0} = \{ \bar{\bS}, \bar{\bA} \}$} with actions sampled according to the 
\begin{align}
    \bA \vbar \bs, \calD_{0} \sim \pi_{0}(\cdot \vbar \bs, \calD_{0})
    =
    \mathcal{GP}\bigl(\bmu_{0}(\bs), \bSigma_{0}(\bs, \bs')\bigr),
\end{align}\\[-18pt]
with
\begin{align*}
    \mu(\bs)
    \hspace*{-2pt}=\hspace*{-2pt}
    m(\bs) + k(\bs, \bar{\bS}) k(\bar{\bS}, \bar{\bS})^{-1} (\bar{\bA} - m(\bar{\bA}))
    ~~\,\textrm{and}~~\,
    \Sigma(\bs, \bs')
    \hspace*{-2pt}=\hspace*{-2pt}
    k(\bs, \bs') + k(\bs, \bar{\bS}) k(\bar{\bS}, \bar{\bS})^{-1} k(\bar{\bS}, \bs').
\end{align*}
To obtain this posterior distribution, we perform exact Bayesian inference, which naively scales as $\calO(N^3)$ in the number of training points $N$, but \citet{NEURIPS2019_01ce8496} show that exact inference in \gp regression can be scaled to $N>1,000,000$.
Since expert demonstrations usually contain less than 100k datapoints, non-parametric \gp behavioral policies are applicable to a wide array of real-world tasks.
For an empirical evaluation of the time complexity of using a \gp prior, see~\Cref{sec:complexity}.

\Cref{fig:heatmaps} confirms that the non-parametric \gp's predictive variance is well-calibrated:
It is small in magnitude in regions of the state space near the expert trajectories and large in magnitude in other regions of the state space.
While actor--critic algorithms like SAC implicitly use a uniform prior to explore the state space, using a behavioral policy with a well-calibrated predictive variance has the benefit that in regions of the state space close to the expert demonstrations the online policy learns to match the expert, while elsewhere the predictive variance increases and encourages exploration.

\textbf{Algorithmic details.}$~$
In our experiments, we use a KL-regularized objective with a standard actor--critic implementation and Double DQN~\citep{NIPS2010_3964}.
Pseudocode is provided in~(\Cref{appsec:algorithm}).

\begin{figure*}[t!]
\centering
 \begin{subfigure}[b]{0.96\textwidth}
    \includegraphics[width=\textwidth]{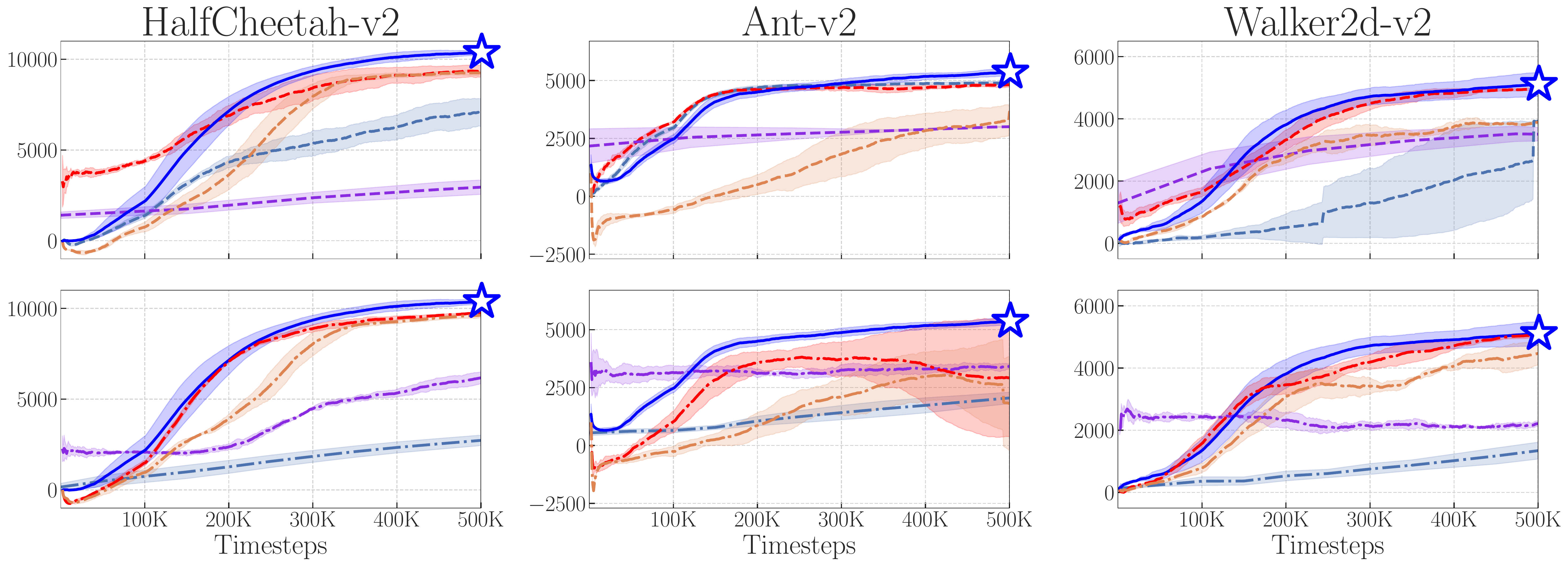}
    \label{fig:mujoco_combined}
  \end{subfigure}
  \begin{subfigure}[b]{\textwidth}
  \vspace*{-10pt}
    \includegraphics[trim=0 95 0 95, clip, width=\columnwidth]{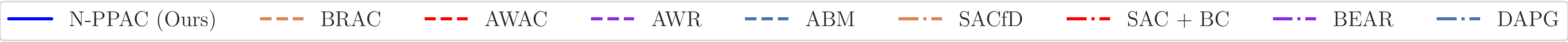}
    \label{fig:mujoco_legend}
  \end{subfigure}
  \vspace{-20pt}
      \caption{Comparison of \textsc{n-ppac} (ours) vs. previous baselines on standard MuJoCo benchmark tasks.
      \textbf{Top}: KL-based methods (dashed lines), \textbf{Bottom}: Non-KL-based methods (dash-dotted lines).
      Both top and bottom plots include \textsc{n-ppac} (\textcolor{blue}{\textbf{blue}}).
      BRAC uses the same actor--critic algorithm as \textsc{n-ppac}, but uses a parametric behavioral policy, and results in slower learning and worse final performance.
      }
    \label{fig:mujcoo_eval}
    \vspace*{-10pt}
\end{figure*}

\vspace*{-3pt}
\section{Empirical Evaluation}
\label{sec:experiments}

We carry out a comparative empirical evaluation of our proposed approach vis-\`a-vis related methods that integrate offline data into online training.
We provide a detailed description of the algorithms we compare against in~\Cref{appsec:prior_comparisons}.
We perform experiments on the MuJoCo benchmark suite and the substantially more challenging dexterous hand manipulation suite with sparse rewards.

We show that KL-regularized RL with a non-parametric behavioral reference policy can rapidly learn to solve difficult high-dimensional continuous control problems given only a small set of expert demonstrations and (often significantly) outperforms state-of-the-art methods, including ones that use offline reward information---which our approach does not require.
Furthermore, we demonstrate that the \gp behavioral policy's predictive variance is crucial for KL-regularized objectives to learn good online policies from expert demonstrations.
Finally, we perform ablation studies that illustrate that non-parametric \gp behavioral reference policies also outperform parametric behavioral reference policies with improved uncertainty quantification, such as deep ensembles and Bayesian neural networks (BNNs) with Monte Carlo dropout, and that the difference between non-parametric and parametric models is exacerbated the fewer expert demonstrations are available.
We use the expert data from~\citet{nair2020accelerating}, every experiment uses six random seeds, and we use a fixed KL-temperature for each environment class.
For further implementation details, see~\Cref{appsec:hyperparam}.

\vspace*{-3pt}
\subsection{Environments}

\textbf{MuJoCo locomotion tasks.}$~$
We evaluate \textsc{n-ppac} on three representative tasks: ``Ant-v2'', ``HalfCheetah-v2'', and ``Walker2d-v2''.
For each task, we use 15 demonstration trajectories collected by a pre-trained expert, each containing 1,000 steps.
The behavioral policy is specified as the posterior distribution of a \gp with a squared exponential kernel, which is well-suited for modeling smooth functions.

\textbf{Dexterous hand manipulation tasks.}$~$
Real-world robot learning is a setting where human demonstration data is readily available, and many deep RL approaches fail to learn efficiently.
We study this setting in a suite of challenging dexterous manipulation tasks~\citep{rajeswaran2018learning} using a 28-DoF five-fingered simulated ADROIT hand.
The tasks simulate challenges common to real-world settings with high-dimensional action spaces, complex physics, and a large number of intermittent contact forces.
We consider two tasks in particular: in-hand rotation of a pen to match a target and opening a door by unlatching and pulling a handle.
We use binary rewards for task completion, which is significantly more challenging than the original setting considered in~\citet{rajeswaran2018learning}.
25 expert demonstrations were provided for each task, each consisting of 200 environment steps which are not fully optimal but do successfully solve the task.
The behavioral policy is specified as the posterior distribution of a \gp with a Mat\'ern kernel, which is more suitable for modeling non-smooth data.

\vspace*{-3pt}
\subsection{Results}

On MuJoCo environments, KL-regularized RL with a non-parametric behavioral policy consistently outperforms all related methods across all three tasks, successfully accelerating learning from offline data, as shown in~\Cref{fig:mujcoo_eval}.
Most notably, it outperforms methods such as AWAC~\citep{nair2020accelerating}---the previous state-of-the-art---which attempts to eschew the problem of learning behavioral policies but instead uses an implicit constraint.
Our approach, \textsc{n-ppac}, exhibits an increase in stability and higher returns compared to comparable methods such as ABM and BRAC that explicitly regularize the online policy against a parametric behavioral policy and plateau at suboptimal performance levels as they are being forced to copy poor actions from the behavioral policy away from the expert data.
In contrast, using a non-parametric behavioral policy allows us to avoid such undesirable behavior.

On dexterous hand manipulation environments, KL-regularized RL with a non-parametric behavioral policy performs on par or outperforms all related methods on both tasks, as shown in~\Cref{fig:dex_eval}.
Most notably, on the door opening task, it achieves a stable success rate of 90\% within only 100,000 environment interactions
For comparison, AWAC requires 4$\times$ as many environment interactions to achieve the same performance and is significantly less stable, while most other methods fail to learn any meaningful behaviors.

\textbf{Alternative divergence metrics underperform KL-regularization.}$~$
KL-regularized RL with a non-parametric behavioral policy consistently outperforms methods that use alternative divergence metrics, as shown in the bottom plots of Figures~\ref{fig:mujcoo_eval}~and~\ref{fig:dex_eval}.
\begin{figure*}[t!]
\centering
   \begin{subfigure}[b]{0.68\textwidth}
   \begin{subfigure}[t]{\textwidth}
    \includegraphics[width=\textwidth]{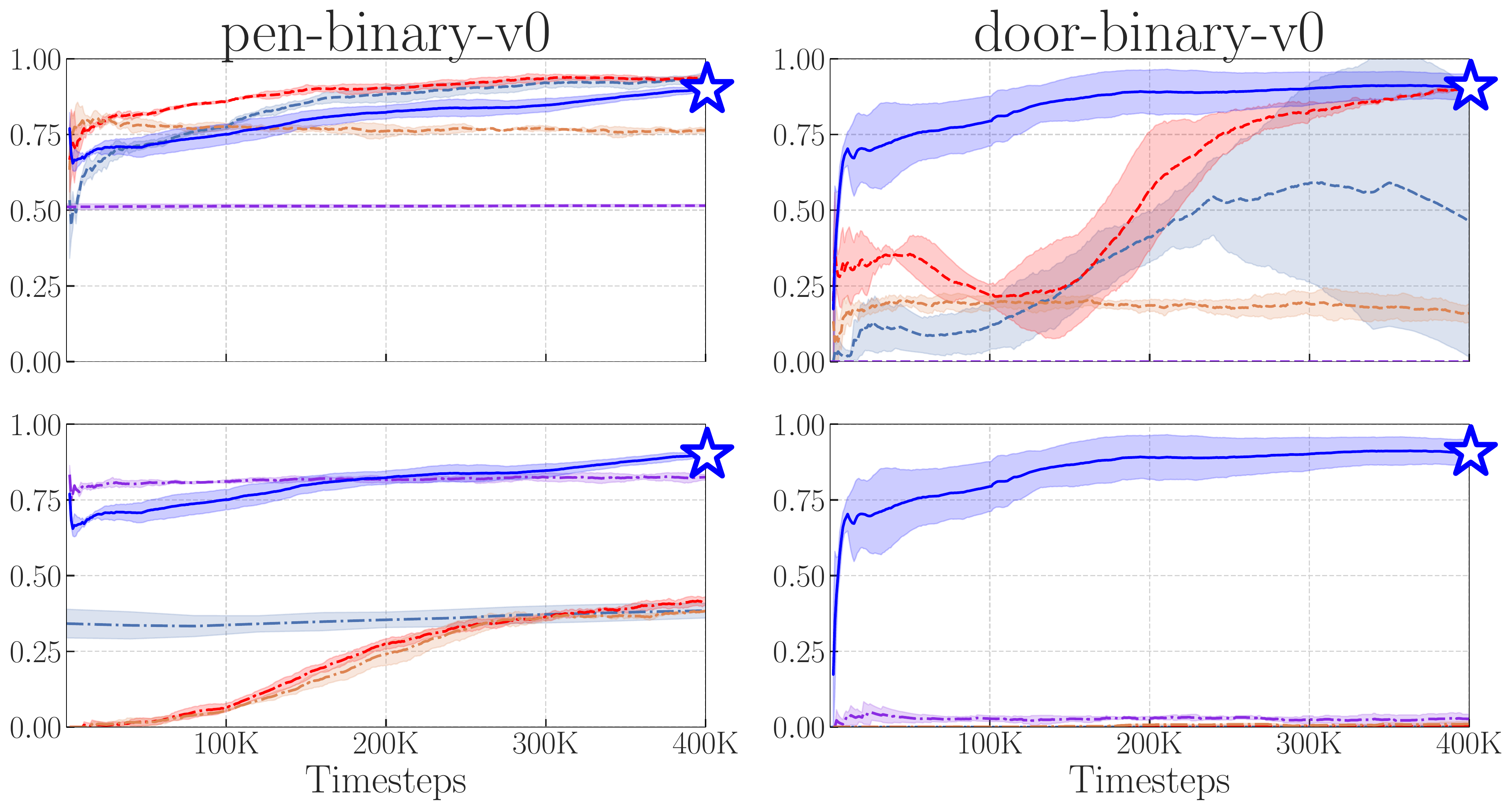}
    \label{fig:hand_combined}
   \end{subfigure}%
  \vspace*{-2pt}
    \centering
  \begin{subfigure}[b]{0.95\textwidth}
    \vspace*{-8pt}\includegraphics[trim=0 95 0 95, clip, width=\columnwidth]{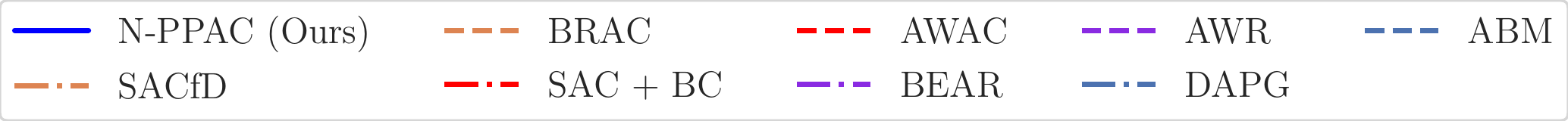}
    \label{fig:dex_legend}
  \end{subfigure}
  \end{subfigure}
  \begin{subfigure}[t]{0.26\textwidth}
  \vspace*{-172pt}
  \centering
  \hspace*{16pt}\begin{subfigure}[t]{\textwidth}
    \includegraphics[width=85pt, trim=2 2 2 2]{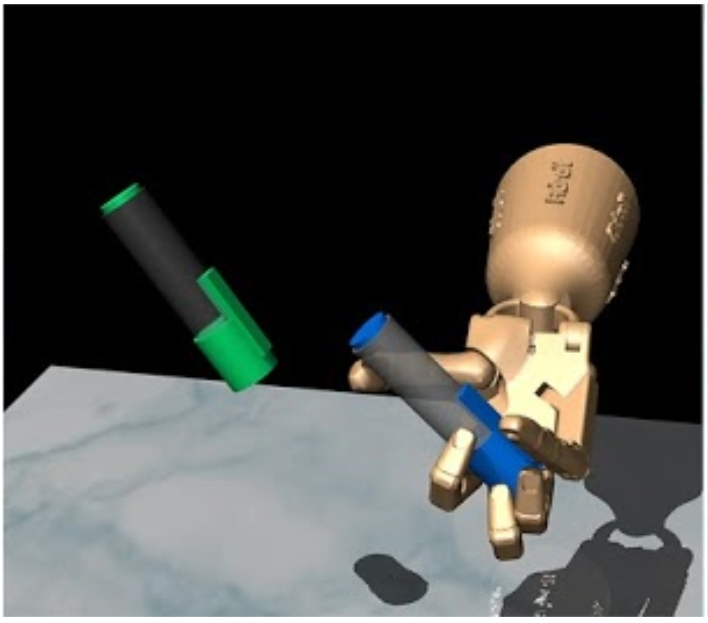}
  \end{subfigure}
  \\[5pt]
  \centering
  \hspace*{16pt}\begin{subfigure}[b]{\textwidth}
    \includegraphics[width=85pt, trim=0 2 0 0]{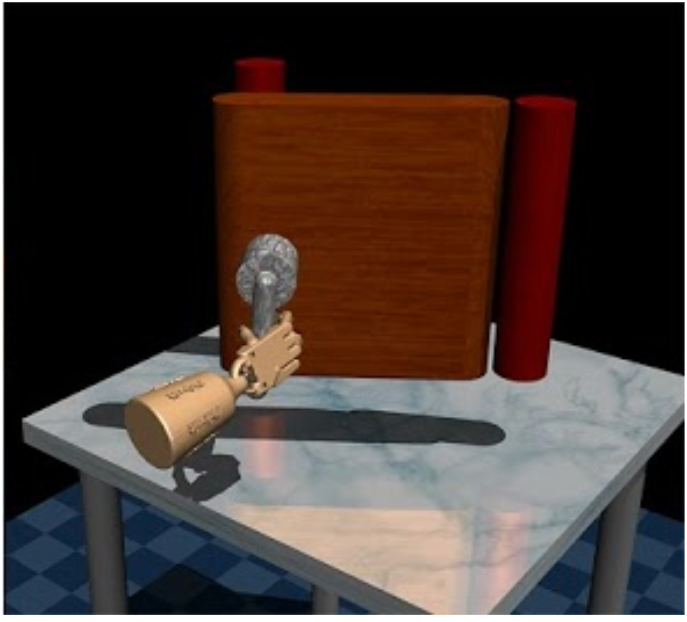}
      \vspace*{-10pt}
  \end{subfigure}
  \end{subfigure}
  \vspace{-1.1\baselineskip}
  \caption{
  \textbf{Left \& Center}: Comparison of \textsc{n-ppac} (ours) vs. previous baselines on dexterous hand manipulation tasks. \textbf{Top}: KL-based methods (dashes), \textbf{Bottom}: Non-KL-based methods (dots and dashes).
  Both top and bottom plots include \textsc{n-ppac} (\textcolor{blue}{\textbf{blue}}).
  \textbf{Right}:
  The pen-binary-v0 (top) and door-binary-v0 (bottom) environments.
  }
  \label{fig:dex_eval}
  \vspace*{-10pt}
\end{figure*}

\vspace*{-3pt}
\subsection{Can the Pathology Be Fixed by Improved Parametric Uncertainty Quantification?}

\begin{wrapfigure}{R}{0.43\textwidth}
\centering
\vspace*{-17pt}
  \centering
  \begin{subfigure}[t]{0.9\linewidth}
    \includegraphics[trim=8 0 10 0, clip, width=\linewidth]{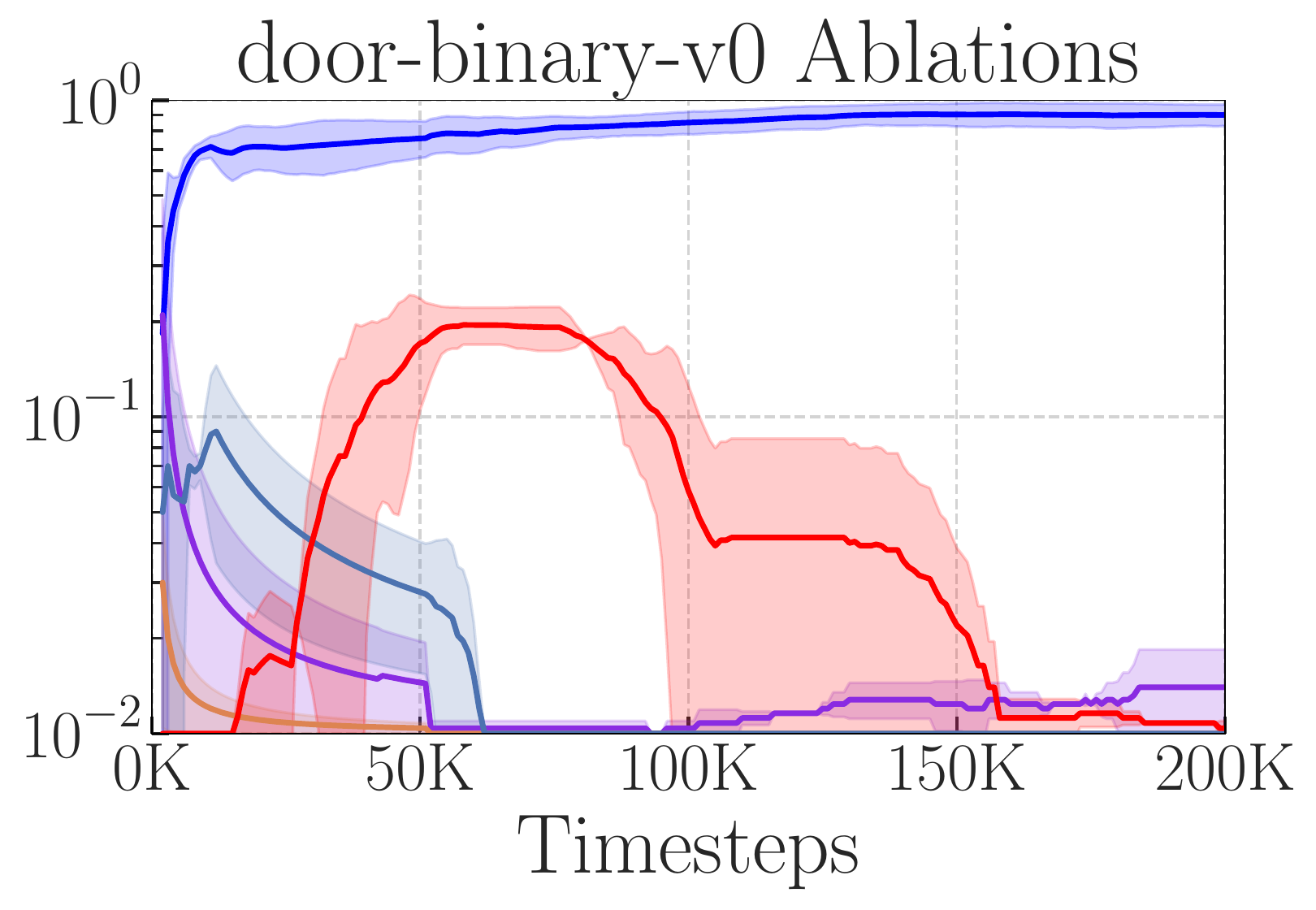}
  \end{subfigure}
  \\
  \centering
  \begin{subfigure}[b]{0.9\linewidth}
    \includegraphics[trim=30 50 30 50, clip, width=\textwidth]{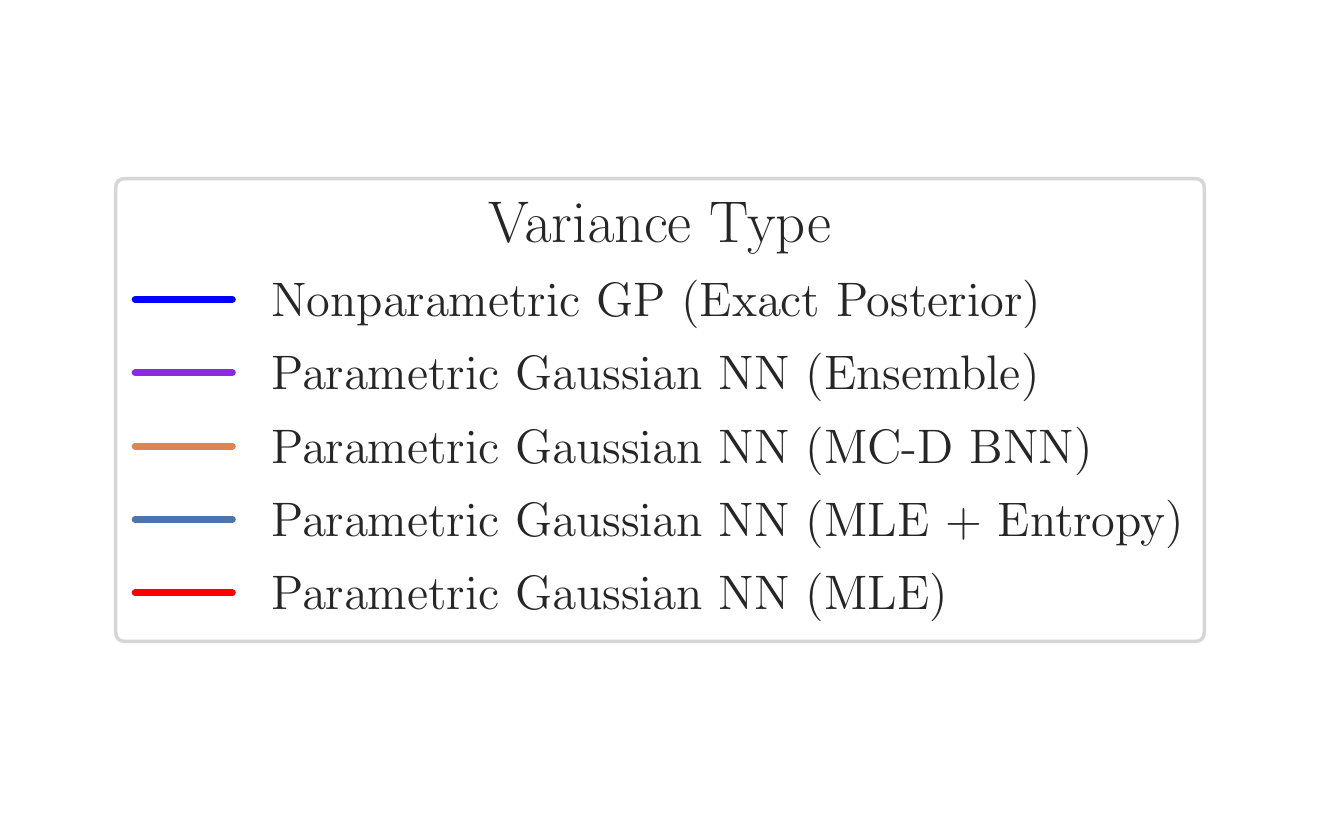}
  \end{subfigure}
  \vspace*{-3pt}
    \caption{
        Post-online training success rates with different behavioral policy variance functions.
    }
    \label{fig:dex_varabl}
  \vspace*{-15pt}
\end{wrapfigure}

To assess whether the success of non-parametric behavioral reference policies is due to their predictive variance estimates---as suggested by~\Cref{prop:gradients_policy}---or due to better generalization from their predictive means, we perform an ablation study on the predictive variance of the behavioral policy.
To isolate the effect of the predictive variance on optimization, we perform online training using behavioral policies with different predictive variance functions (parametric and non-parametric) and identical mean functions, which we set to be the predictive mean of the \gp posterior (which achieves a success rate of \char`~80\%).
If the pathology identified in~\Cref{prop:gradients_policy} can be remedied by commonly used parametric uncertainty quantification methods, we would expect the parametric and non-parametric behavioral policy variance functions to result in similar online policy success rates.
We consider the challenging ``door-binary-v0'' environment for this ablation study.

\textbf{Parametric uncertainty quantification is insufficient.}$~$
\Cref{fig:dex_eval} shows that parametric variance functions result in online policies that only achieve success rates of up to 20\% and eventually deteriorate, whereas the non-parametric variance yields an online policy that achieves a success rate of nearly $100\%$.
This finding shows that commonly used uncertainty quantification methods, such as deep ensembles or BNNs with Monte Carlo dropout, do not generate sufficiently well-calibrated uncertainty estimates to remedy the pathology, and better methods may be needed~\citep{farquhar2020radial,rudner2021fsvi,rudner2021cfsvi}.

\textbf{Lower-bounding the predictive variance does not remedy the pathology.}$~$
The predictive variance of all MLE-based and ensemble behavioral reference policies in all experiments are bounded away from zero at a minimum value of $\approx 10^{-2}$.
Hence, setting a floor on the variance is not sufficient to prevent pathological training dynamics.
This result further demonstrates the importance of accurate predictive variance estimation in allowing the online policy to match expert actions in regions of the state space with low behavioral policy predictive variance and explore elsewhere.

\vspace*{-3pt}
\subsection{Can a Single Expert Demonstration Be Sufficient to Accelerate Online Training?}
\label{sec:model_comp_reduced}

\begin{wrapfigure}{R}{0.6\textwidth}
\vspace*{-10pt}
\centering
\hspace*{-10pt}\begin{subfigure}[t]{0.52\linewidth}
\includegraphics[width=\linewidth]{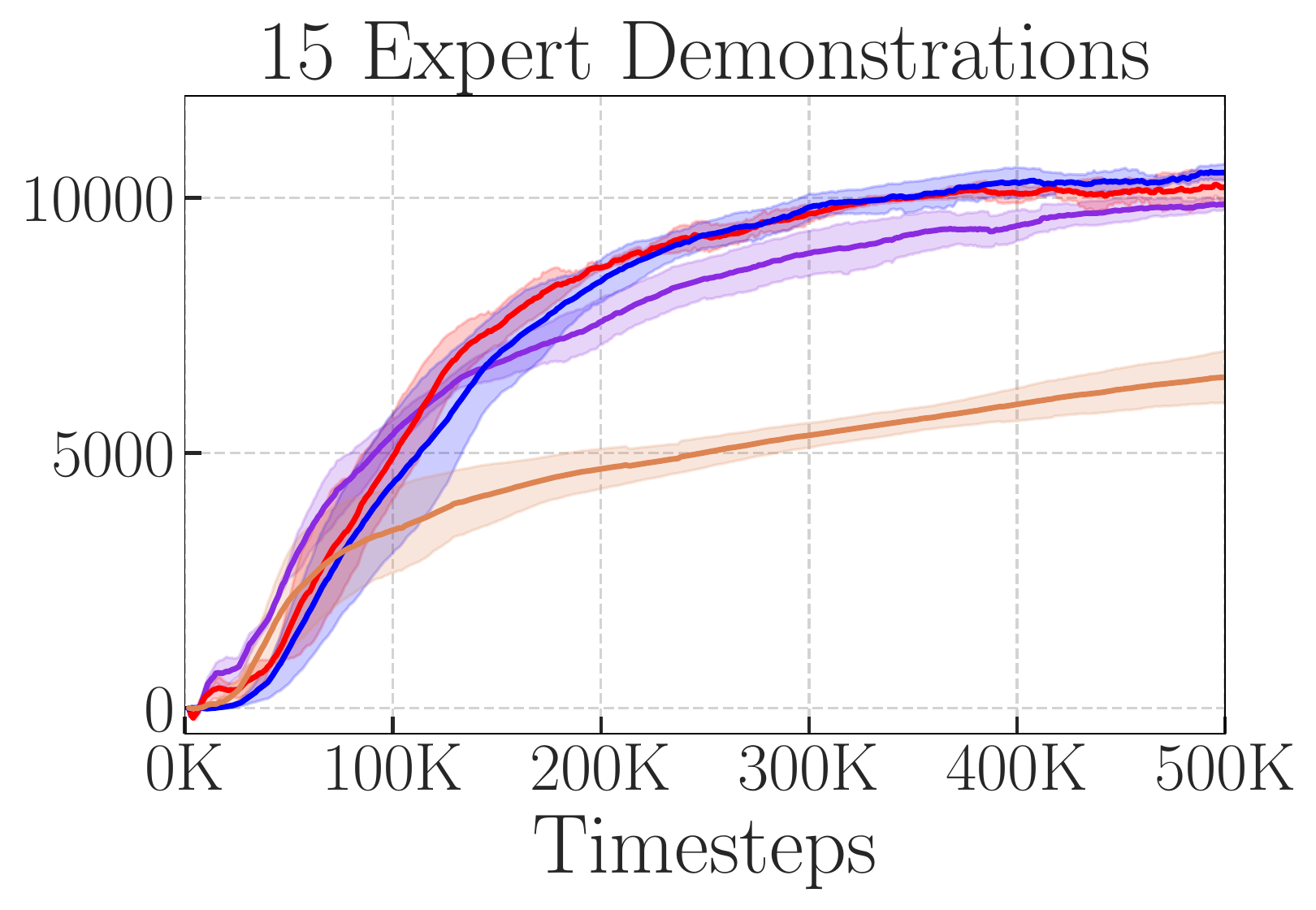}
\end{subfigure}%
\hspace*{-2pt}\begin{subfigure}[t]{0.52\linewidth}
\includegraphics[width=\linewidth]{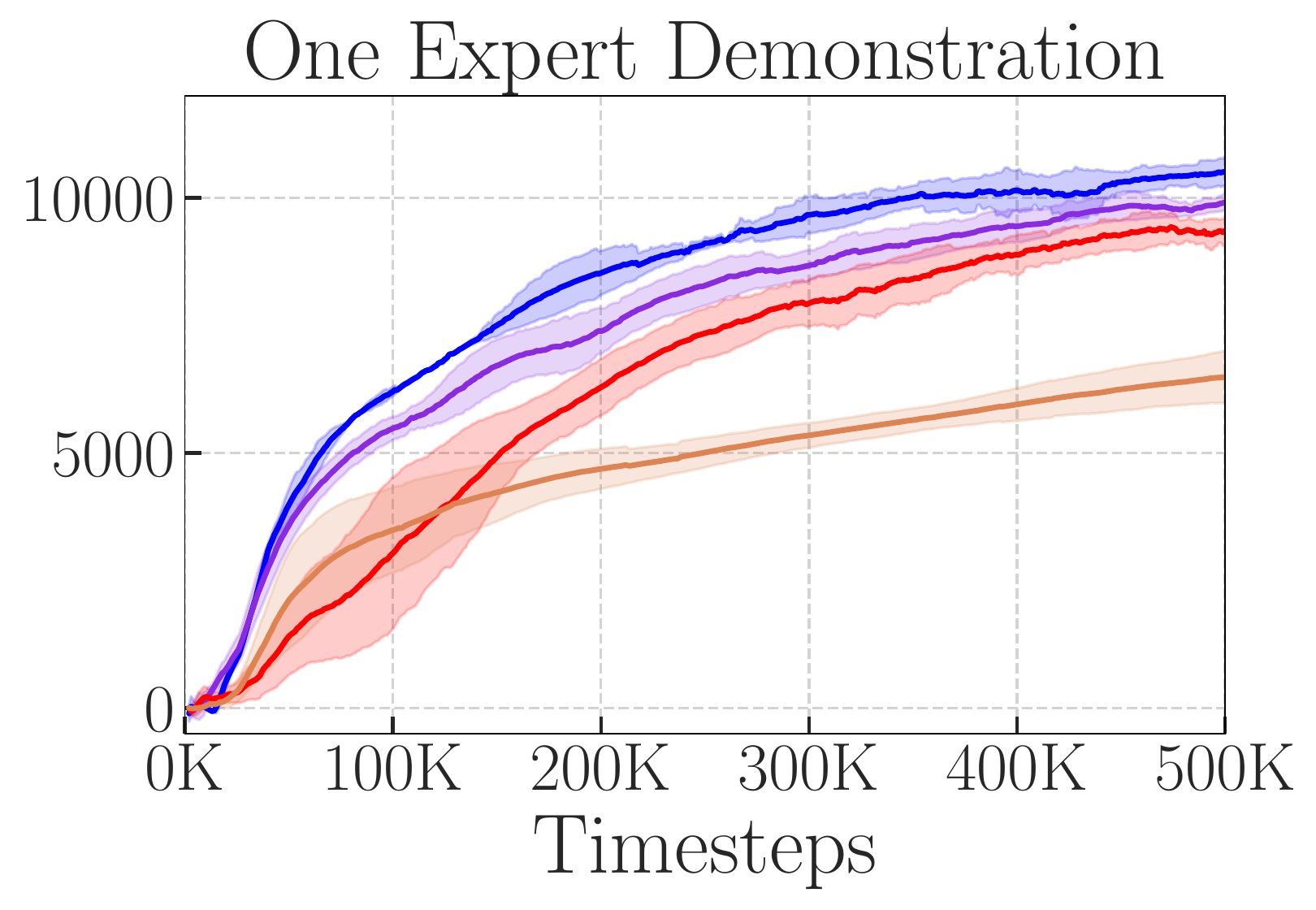}
\end{subfigure}%
\\[-1pt]
\begin{subfigure}[b]{0.93\linewidth}
\includegraphics[trim=0 70 0 90, clip, width=\linewidth]{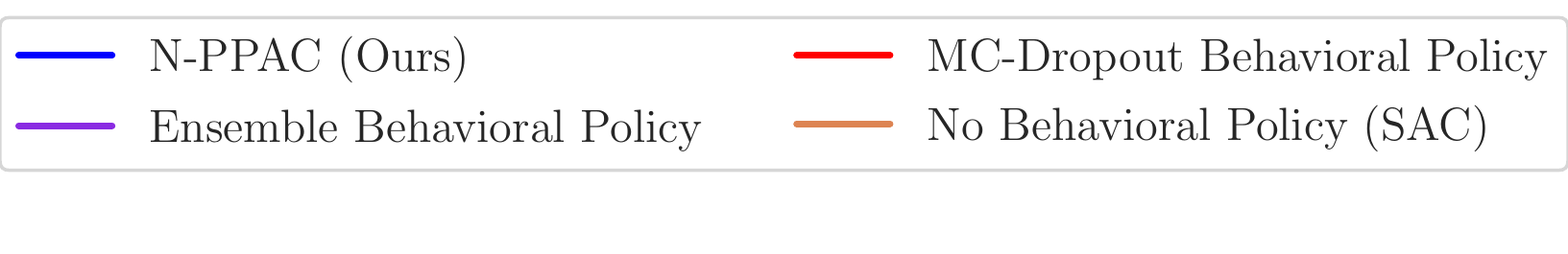}
\end{subfigure}
\vspace*{-13pt}
\caption{
    Returns during online training with different behavioral policies and varying amounts of expert demonstration data on ``HalfCheetah-v2''.
    }
\label{fig:model_classes}
\vspace{-10pt}
\end{wrapfigure}

To assess the usefulness of non-parametric behavioral reference policies in settings where only few expert demonstrations are available, we investigate whether the difference in performance between online policies trained with non-parametric and parametric behavioral reference policies, respectively, is exacerbated the fewer expert demonstrations are available.
To answer this question, we consider the ``HalfCheetah-v2'' environment and compare online policies trained with different behavioral reference policies---non-parametric \gps, deep ensembles, and BNNs with Monte Carlo dropout---estimated either from 15 expert demonstrations (i.e., 15 state--action trajectories, containing 15,000 samples) or from a single expert demonstration (i.e., a single state--action trajectory, containing 1,000 samples).

\textbf{A single expert demonstration is sufficient for non-parametric behavioral reference policies.}$~$
\Cref{fig:model_classes} shows the returns for online policies trained with behavioral reference policies estimated from the full dataset (top plot) and from only a single expert state--action trajectory (bottom plot).
On the full dataset, we find that all three methods are competitive and improve on the prior state-of-the-art but that the \gp behavioral policy leads to the highest return.
Remarkably, non-parametric \gp behavioral policies perform just as well with only a single expert demonstration as with all 15 (i.e., with 1,000 data points, instead of 15,000 data points).
These results further emphasizes the usefulness of non-parametric behavioral policies when accelerating online training with expert demonstrations---even when only very few expert demonstrations are available.

\vspace*{-3pt}
\subsection{Are Non-Parametric GP Behavioral Reference Policies Too Computationally Expensive?}
\label{sec:complexity}

\Cref{tab:gp-time-comparison} presents the time complexity of KL-regularized RL under non-parametric \gp and parametric neural network behavioral reference policies, as measured by the average time elapsed per epoch on the ``door-binary-v0'' and ``HalfCheetah-v2'' environments.
One epoch of online training on ``door-binary-v0'' and ``HalfCheetah-v2'' requires computing the \kld over 1,000 mini-batches of size 256 and 1,024, respectively.
The time complexity of evaluating the log-density of a \gp behavioral reference policy---needed for computing gradients of the \kld during online training---scales quadratically in the number of training data points and linearly in the dimensionality of the state and action space, respectively.
As can be seen in~\Cref{tab:gp-time-comparison}, non-parametric \gp behavioral reference policies only lead to a modest increase in the time needed to complete one epoch of training while resulting in significantly improved performance as shown in Figures~\ref{fig:mujcoo_eval}~and~\ref{fig:dex_eval}.

\setlength{\tabcolsep}{13.2pt}
\begin{table}[h!]
\vspace*{-12pt}
\centering
\caption{
    Time per epoch under different behavioral reference policies for expert demonstration data of varying size computed on a GeForce RTX 3080 GPU.
    The first and second value in each entry of the table give the time required when using a single parametric neural network and a \gp behavioral reference policy, respectively.
}
\vspace*{3pt} 
\begin{tabular}{lccc}
\toprule
\textbf{Dataset} & \textbf{1,000 Data Points} & \textbf{5,000 Data Points} & \textbf{15,000 Data Points}
\\
\midrule
HalfCheetah-v2   & 12.00s / 16.06s           & 11.59s / 18.31s         & 12.00s / 46.54s          \\
door-binary-v0   & 19.62s / 23.78s          & 19.62s / 33.62s         & -                       \\ \hline\hline
\end{tabular}
\label{tab:gp-time-comparison}
\vspace*{-5pt}
\end{table}

\vspace{-5pt}
\section{Conclusion}
\label{sec:conclusion}

We identified a previously unrecognized pathology in KL-regularized RL from expert demonstrations and showed that this pathology can significantly impede and even entirely prevent online learning.
To remedy the pathology, we proposed the use of non-parametric behavioral reference policies, which we showed can significantly accelerate and improve online learning and yield online policies that (often significantly) outperform current state-of-the-art methods on challenging continuous control tasks.
We hope that this work will encourage further research into better model classes for deep reinforcement learning algorithms, including and especially for reinforcement from image inputs.

\begin{ack}
We thank Ashvin Nair for sharing his code and results, as well as for providing helpful insights about the dexterous hand manipulation suite.
We also thank Clare Lyle, Charline Le Lan, and Angelos Filos for detailed feedback on an early draft of this paper, Avi Singh for early discussions about behavioral cloning in entropy-regularized RL, and Tim Pearce for a useful discussion on the role of good models in RL.
TGJR and CL are funded by the Engineering and Physical Sciences Research Council (EPSRC).
TGJR is also funded by the Rhodes Trust and by a Qualcomm Innovation Fellowship.
We gratefully acknowledge donations of computing resources by the Alan Turing Institute.
\end{ack}

\bibliographystyle{plainnat}
\bibliography{references}

\begin{thebibliography}{53}
\providecommand{\natexlab}[1]{#1}
\providecommand{\url}[1]{\texttt{#1}}
\expandafter\ifx\csname urlstyle\endcsname\relax
  \providecommand{\doi}[1]{doi: #1}\else
  \providecommand{\doi}{doi: \begingroup \urlstyle{rm}\Url}\fi

\bibitem[Bain and Sammut(1995)]{bain1995framework}
Michael Bain and Claude Sammut.
\newblock A framework for behavioural cloning.
\newblock In \emph{Machine Intelligence 15}, pages 103--129, 1995.

\bibitem[Ball et~al.(2021)Ball, Lu, Parker-Holder, and
  Roberts]{pmlr-v139-ball21a}
Philip~J Ball, Cong Lu, Jack Parker-Holder, and Stephen Roberts.
\newblock Augmented world models facilitate zero-shot dynamics generalization
  from a single offline environment.
\newblock In Marina Meila and Tong Zhang, editors, \emph{Proceedings of the
  38th International Conference on Machine Learning}, volume 139 of
  \emph{Proceedings of Machine Learning Research}, pages 619--629. PMLR, 18--24
  Jul 2021.

\bibitem[Boularias et~al.(2011)Boularias, Kober, and
  Peters]{boularias2011relative}
Abdeslam Boularias, Jens Kober, and Jan Peters.
\newblock Relative entropy inverse reinforcement learning.
\newblock In \emph{Proceedings of the Fourteenth International Conference on
  Artificial Intelligence and Statistics}, pages 182--189. JMLR Workshop and
  Conference Proceedings, 2011.

\bibitem[Bratko et~al.(1995)Bratko, Urbancic, and
  Sammut]{bratko1995behavioural}
Ivan Bratko, Tanja Urbancic, and Claude Sammut.
\newblock Behavioural cloning: phenomena, results and problems.
\newblock \emph{IFAC Proceedings Volumes}, 28\penalty0 (21):\penalty0 143--149,
  1995.

\bibitem[Brys et~al.(2015)Brys, Harutyunyan, Suay, Chernova, Taylor, and
  Now{\'e}]{brys2015reinforcement}
Tim Brys, Anna Harutyunyan, Halit~Bener Suay, Sonia Chernova, Matthew~E Taylor,
  and Ann Now{\'e}.
\newblock Reinforcement learning from demonstrations through shaping.
\newblock In \emph{Twenty-fourth International Joint Conference on Artificial
  Intelligence}, 2015.

\bibitem[Cang et~al.(2021)Cang, Rajeswaran, Abbeel, and
  Laskin]{cang2021behavioral}
Catherine Cang, Aravind Rajeswaran, Pieter Abbeel, and Michael Laskin.
\newblock Behavioral priors and dynamics models: Improving performance and
  domain transfer in offline {RL}, 2021.

\bibitem[Degris et~al.(2012)Degris, White, and Sutton]{degris2012off}
Thomas Degris, Martha White, and Richard~S. Sutton.
\newblock Off-policy actor-critic.
\newblock In \emph{Proceedings of the 29th International Coference on
  International Conference on Machine Learning}, ICML'12, page 179–186,
  Madison, WI, USA, 2012. Omnipress.

\bibitem[Dulac-Arnold et~al.(2019)Dulac-Arnold, Mankowitz, and
  Hester]{dulac2019challenges}
Gabriel Dulac-Arnold, Daniel Mankowitz, and Todd Hester.
\newblock Challenges of real-world reinforcement learning.
\newblock \emph{arXiv preprint arXiv:1904.12901}, 2019.

\bibitem[Farquhar et~al.(2020)Farquhar, Osborne, and Gal]{farquhar2020radial}
Sebastian Farquhar, Michael~A. Osborne, and Yarin Gal.
\newblock Radial bayesian neural networks: Beyond discrete support in
  large-scale bayesian deep learning.
\newblock In Silvia Chiappa and Roberto Calandra, editors, \emph{Proceedings of
  the Twenty Third International Conference on Artificial Intelligence and
  Statistics}, volume 108 of \emph{Proceedings of Machine Learning Research},
  pages 1352--1362. PMLR, 26--28 Aug 2020.

\bibitem[Galashov et~al.(2019)Galashov, Jayakumar, Hasenclever, Tirumala,
  Schwarz, Desjardins, Czarnecki, Teh, Pascanu, and
  Heess]{DBLP:conf/iclr/GalashovJHTSDCT19}
Alexandre Galashov, Siddhant~M. Jayakumar, Leonard Hasenclever, Dhruva
  Tirumala, Jonathan Schwarz, Guillaume Desjardins, Wojciech~M. Czarnecki,
  Yee~Whye Teh, Razvan Pascanu, and Nicolas Heess.
\newblock Information asymmetry in kl-regularized {RL}.
\newblock In \emph{7th International Conference on Learning Representations,
  {ICLR} 2019, New Orleans, LA, USA, May 6-9, 2019}, 2019.

\bibitem[Gao et~al.(2018)Gao, Xu, Lin, Yu, Levine, and
  Darrell]{gao2018reinforcement}
Yang Gao, Huazhe Xu, Ji~Lin, Fisher Yu, Sergey Levine, and Trevor Darrell.
\newblock Reinforcement learning from imperfect demonstrations.
\newblock \emph{arXiv preprint arXiv:1802.05313}, 2018.

\bibitem[Groetsch(1984)]{groetsch1984theory}
CW~Groetsch.
\newblock The theory of {T}ikhonov regularization for {F}redholm equations.
\newblock \emph{Boston Pitman Publication}, 1984.

\bibitem[Haarnoja et~al.(2019)Haarnoja, Zhou, Hartikainen, Tucker, Ha, Tan,
  Kumar, Zhu, Gupta, Abbeel, and Levine]{haarnoja2019soft}
Tuomas Haarnoja, Aurick Zhou, Kristian Hartikainen, George Tucker, Sehoon Ha,
  Jie Tan, Vikash Kumar, Henry Zhu, Abhishek Gupta, Pieter Abbeel, and Sergey
  Levine.
\newblock Soft actor-critic algorithms and applications, 2019.

\bibitem[Hasselt(2010)]{NIPS2010_3964}
Hado~V. Hasselt.
\newblock Double {Q}-learning.
\newblock In J.~D. Lafferty, C.~K.~I. Williams, J.~Shawe-Taylor, R.~S. Zemel,
  and A.~Culotta, editors, \emph{Advances in Neural Information Processing
  Systems 23}, pages 2613--2621, 2010.

\bibitem[Kaelbling et~al.(1996)Kaelbling, Littman, and
  Moore]{kaelbling1996reinforcement}
Leslie~Pack Kaelbling, Michael~L Littman, and Andrew~W Moore.
\newblock Reinforcement learning: A survey.
\newblock \emph{Journal of artificial intelligence research}, 4:\penalty0
  237--285, 1996.

\bibitem[Kidambi et~al.(2020)Kidambi, Rajeswaran, Netrapalli, and
  Joachims]{morel}
Rahul Kidambi, Aravind Rajeswaran, Praneeth Netrapalli, and Thorsten Joachims.
\newblock {MOReL}: Model-based offline reinforcement learning.
\newblock In H.~Larochelle, M.~Ranzato, R.~Hadsell, M.~F. Balcan, and H.~Lin,
  editors, \emph{Advances in Neural Information Processing Systems}, volume~33,
  pages 21810--21823. Curran Associates, Inc., 2020.

\bibitem[Konda and Tsitsiklis(2000)]{konda2000actor}
Vijay~R Konda and John~N Tsitsiklis.
\newblock Actor-critic algorithms.
\newblock In \emph{Advances in neural information processing systems}, pages
  1008--1014, 2000.

\bibitem[Konidaris et~al.(2012)Konidaris, Kuindersma, Grupen, and
  Barto]{konidaris2012robot}
George Konidaris, Scott Kuindersma, Roderic Grupen, and Andrew Barto.
\newblock Robot learning from demonstration by constructing skill trees.
\newblock \emph{The International Journal of Robotics Research}, 31\penalty0
  (3):\penalty0 360--375, 2012.

\bibitem[Kumar et~al.(2019)Kumar, Fu, Soh, Tucker, and Levine]{NIPS2019_9349}
Aviral Kumar, Justin Fu, Matthew Soh, George Tucker, and Sergey Levine.
\newblock Stabilizing off-policy {Q}-learning via bootstrapping error
  reduction.
\newblock In \emph{Advances in Neural Information Processing Systems 32}, pages
  11784--11794, 2019.

\bibitem[Lakshminarayanan et~al.(2017)Lakshminarayanan, Pritzel, and
  Blundell]{NIPS2017_7219}
Balaji Lakshminarayanan, Alexander Pritzel, and Charles Blundell.
\newblock Simple and scalable predictive uncertainty estimation using deep
  ensembles.
\newblock In I.~Guyon, U.~V. Luxburg, S.~Bengio, H.~Wallach, R.~Fergus,
  S.~Vishwanathan, and R.~Garnett, editors, \emph{Advances in Neural
  Information Processing Systems 30}, pages 6402--6413, 2017.

\bibitem[Levine(2018)]{levine2018reinforcement}
Sergey Levine.
\newblock Reinforcement learning and control as probabilistic inference:
  Tutorial and review, 2018.

\bibitem[Lillicrap et~al.(2016)Lillicrap, Hunt, Pritzel, Heess, Erez, Tassa,
  Silver, and Wierstra]{DBLP:journals/corr/LillicrapHPHETS15}
Timothy~P. Lillicrap, Jonathan~J. Hunt, Alexander Pritzel, Nicolas Heess, Tom
  Erez, Yuval Tassa, David Silver, and Daan Wierstra.
\newblock Continuous control with deep reinforcement learning.
\newblock In \emph{ICLR (Poster)}, 2016.

\bibitem[Lu et~al.(2021)Lu, Ball, Parker-Holder, Osborne, and
  Roberts]{lu2021revisiting}
Cong Lu, Philip~J. Ball, Jack Parker-Holder, Michael~A. Osborne, and Stephen~J.
  Roberts.
\newblock Revisiting design choices in model-based offline reinforcement
  learning, 2021.

\bibitem[Mnih et~al.(2013)Mnih, Kavukcuoglu, Silver, Graves, Antonoglou,
  Wierstra, and Riedmiller]{mnih-atari-2013}
Volodymyr Mnih, Koray Kavukcuoglu, David Silver, Alex Graves, Ioannis
  Antonoglou, Daan Wierstra, and Martin~A. Riedmiller.
\newblock Playing atari with deep reinforcement learning.
\newblock \emph{CoRR}, abs/1312.5602, 2013.

\bibitem[Murphy(2013)]{murphy2013probabilistic}
Kevin~P. Murphy.
\newblock \emph{Machine learning: A Probabilistic Perspective}.
\newblock MIT Press, 2013.

\bibitem[{Nair} et~al.(2018){Nair}, {McGrew}, {Andrychowicz}, {Zaremba}, and
  {Abbeel}]{nair2018overcoming}
A.~{Nair}, B.~{McGrew}, M.~{Andrychowicz}, W.~{Zaremba}, and P.~{Abbeel}.
\newblock Overcoming exploration in reinforcement learning with demonstrations.
\newblock In \emph{2018 IEEE International Conference on Robotics and
  Automation (ICRA)}, pages 6292--6299, 2018.

\bibitem[Nair et~al.(2020)Nair, Dalal, Gupta, and Levine]{nair2020accelerating}
Ashvin Nair, Murtaza Dalal, Abhishek Gupta, and Sergey Levine.
\newblock Accelerating online reinforcement learning with offline datasets,
  2020.

\bibitem[Navarro-Guerrero et~al.(2012)Navarro-Guerrero, Weber, Schroeter, and
  Wermter]{navarro2012real}
Nicol{\'a}s Navarro-Guerrero, Cornelius Weber, Pascal Schroeter, and Stefan
  Wermter.
\newblock Real-world reinforcement learning for autonomous humanoid robot
  docking.
\newblock \emph{Robotics and Autonomous Systems}, 60\penalty0 (11):\penalty0
  1400--1407, 2012.

\bibitem[Ng and Russell(2000)]{10.5555/645529.657801}
Andrew~Y. Ng and Stuart~J. Russell.
\newblock Algorithms for inverse reinforcement learning.
\newblock In \emph{Proceedings of the Seventeenth International Conference on
  Machine Learning}, ICML '00, page 663–670, San Francisco, CA, USA, 2000.
  Morgan Kaufmann Publishers Inc.

\bibitem[Pacchiano et~al.(2020)Pacchiano, Parker-Holder, Tang, Choromanski,
  Choromanska, and Jordan]{pmlr-v119-pacchiano20a}
Aldo Pacchiano, Jack Parker-Holder, Yunhao Tang, Krzysztof Choromanski, Anna
  Choromanska, and Michael Jordan.
\newblock Learning to score behaviors for guided policy optimization.
\newblock In Hal~Daumé III and Aarti Singh, editors, \emph{Proceedings of the
  37th International Conference on Machine Learning}, volume 119 of
  \emph{Proceedings of Machine Learning Research}, pages 7445--7454. PMLR,
  13--18 Jul 2020.

\bibitem[Peng et~al.(2019)Peng, Kumar, Zhang, and
  Levine]{peng2019advantageweighted}
Xue~Bin Peng, Aviral Kumar, Grace Zhang, and Sergey Levine.
\newblock Advantage-weighted regression: Simple and scalable off-policy
  reinforcement learning, 2019.

\bibitem[Peters et~al.(2005)Peters, Vijayakumar, and Schaal]{peters2005natural}
Jan Peters, Sethu Vijayakumar, and Stefan Schaal.
\newblock Natural actor-critic.
\newblock In \emph{European Conference on Machine Learning}, pages 280--291.
  Springer, 2005.

\bibitem[Qui\~{n}onero Candela and Rasmussen(2005)]{quinonero2005unifying}
Joaquin Qui\~{n}onero Candela and Carl~Edward Rasmussen.
\newblock A unifying view of sparse approximate {G}aussian process regression.
\newblock \emph{J. Mach. Learn. Res.}, 6:\penalty0 1939–1959, December 2005.
\newblock ISSN 1532-4435.

\bibitem[Rajeswaran et~al.(2018{\natexlab{a}})Rajeswaran, Kumar, Gupta,
  Vezzani, Schulman, Todorov, and Levine]{Rajeswaran-RSS-18}
Aravind Rajeswaran, Vikash Kumar, Abhishek Gupta, Giulia Vezzani, John
  Schulman, Emanuel Todorov, and Sergey Levine.
\newblock Learning complex dexterous manipulation with deep reinforcement
  learning and demonstrations.
\newblock In \emph{Proceedings of Robotics: Science and Systems (RSS)},
  2018{\natexlab{a}}.

\bibitem[Rajeswaran et~al.(2018{\natexlab{b}})Rajeswaran, Kumar, Gupta,
  Vezzani, Schulman, Todorov, and Levine]{rajeswaran2018learning}
Aravind Rajeswaran, Vikash Kumar, Abhishek Gupta, Giulia Vezzani, John
  Schulman, Emanuel Todorov, and Sergey Levine.
\newblock Learning complex dexterous manipulation with deep reinforcement
  learning and demonstrations, 2018{\natexlab{b}}.

\bibitem[Rasmussen and Williams(2005)]{10.5555/1162254}
Carl~Edward Rasmussen and Christopher K.~I. Williams.
\newblock \emph{{G}aussian Processes for Machine Learning (Adaptive Computation
  and Machine Learning)}.
\newblock The MIT Press, 2005.

\bibitem[Rawlik et~al.(2012)Rawlik, Toussaint, and
  Vijayakumar]{rawlik2012stochastic}
Konrad Rawlik, Marc Toussaint, and Sethu Vijayakumar.
\newblock On stochastic optimal control and reinforcement learning by
  approximate inference.
\newblock \emph{Proceedings of Robotics: Science and Systems VIII}, 2012.

\bibitem[Rosenstein et~al.(2004)Rosenstein, Barto, Si, Barto, and
  Powell]{rosenstein2004supervised}
Michael~T Rosenstein, Andrew~G Barto, Jennie Si, Andy Barto, and Warren Powell.
\newblock Supervised actor-critic reinforcement learning.
\newblock \emph{Learning and Approximate Dynamic Programming: Scaling Up to the
  Real World}, pages 359--380, 2004.

\bibitem[Rudner et~al.(2021{\natexlab{a}})Rudner, Chen, and
  Gal]{rudner2021fsvi}
Tim G.~J. Rudner, Zonghao Chen, and Yarin Gal.
\newblock Rethinking function-space variational inference in {B}ayesian neural
  networks.
\newblock In \emph{Third Symposium on Advances in Approximate Bayesian
  Inference}, 2021{\natexlab{a}}.

\bibitem[Rudner et~al.(2021{\natexlab{b}})Rudner, Pong, McAllister, Gal, and
  Levine]{rudner2021odrl}
Tim G.~J. Rudner, Vitchyr~H. Pong, Rowan~Thomas McAllister, Yarin Gal, and
  Sergey Levine.
\newblock Outcome-driven reinforcement learning via variational inference.
\newblock In A.~Beygelzimer, Y.~Dauphin, P.~Liang, and J.~Wortman Vaughan,
  editors, \emph{Advances in Neural Information Processing Systems},
  2021{\natexlab{b}}.
\newblock URL \url{https://openreview.net/forum?id=4bzanicqvy8}.

\bibitem[Rudner et~al.(2021{\natexlab{c}})Rudner, Smith, Feng, Teh, and
  Gal]{rudner2021cfsvi}
Tim G.~J. Rudner, Freddie~Bickford Smith, Qixuan Feng, Yee~Whye Teh, and Yarin
  Gal.
\newblock Continual learning via function-space variational inference.
\newblock In \emph{ICML Workshop on Theory and Foundations of Continual
  Learning}, 2021{\natexlab{c}}.

\bibitem[Schaal et~al.(1997)]{schaal1997learning}
Stefan Schaal et~al.
\newblock Learning from demonstration.
\newblock \emph{Advances in neural information processing systems}, pages
  1040--1046, 1997.

\bibitem[Schulman et~al.(2017)Schulman, Chen, and
  Abbeel]{schulman2018equivalence}
John Schulman, Xi~Chen, and Pieter Abbeel.
\newblock Equivalence between policy gradients and soft {Q}-learning.
\newblock \emph{arXiv preprint arXiv:1704.06440}, 2017.

\bibitem[Siegel et~al.(2020)Siegel, Springenberg, Berkenkamp, Abdolmaleki,
  Neunert, Lampe, Hafner, Heess, and Riedmiller]{Siegel2020Keep}
Noah Siegel, Jost~Tobias Springenberg, Felix Berkenkamp, Abbas Abdolmaleki,
  Michael Neunert, Thomas Lampe, Roland Hafner, Nicolas Heess, and Martin
  Riedmiller.
\newblock Keep doing what worked: Behavior modelling priors for offline
  reinforcement learning.
\newblock In \emph{International Conference on Learning Representations}, 2020.

\bibitem[Smola et~al.(2007)Smola, Gretton, Song, and
  Sch{\"o}lkopf]{10.1007/978-3-540-75225-7_5}
Alex Smola, Arthur Gretton, Le~Song, and Bernhard Sch{\"o}lkopf.
\newblock A hilbert space embedding for distributions.
\newblock In Marcus Hutter, Rocco~A. Servedio, and Eiji Takimoto, editors,
  \emph{Algorithmic Learning Theory}, pages 13--31, Berlin, Heidelberg, 2007.
  Springer Berlin Heidelberg.

\bibitem[Sutton and Barto(2018)]{Sutton1998}
Richard~S. Sutton and Andrew~G. Barto.
\newblock \emph{Reinforcement Learning: An Introduction}.
\newblock The MIT Press, second edition, 2018.

\bibitem[Tesauro(1995)]{tesauro1995temporal}
Gerald Tesauro.
\newblock Temporal difference learning and td-gammon.
\newblock \emph{Communications of the ACM}, 38\penalty0 (3):\penalty0 58--68,
  1995.

\bibitem[Todorov(2007)]{NIPS2006_d806ca13}
Emanuel Todorov.
\newblock Linearly-solvable markov decision problems.
\newblock In B.~Sch\"{o}lkopf, J.~Platt, and T.~Hoffman, editors,
  \emph{Advances in Neural Information Processing Systems}, volume~19, pages
  1369--1376. MIT Press, 2007.

\bibitem[van Amersfoort et~al.(2021)van Amersfoort, Smith, Jesson, Key, and
  Gal]{vanamersfoort2021improving}
Joost van Amersfoort, Lewis Smith, Andrew Jesson, Oscar Key, and Yarin Gal.
\newblock Improving deterministic uncertainty estimation in deep learning for
  classification and regression, 2021.

\bibitem[Wang et~al.(2019)Wang, Pleiss, Gardner, Tyree, Weinberger, and
  Wilson]{NEURIPS2019_01ce8496}
Ke~Wang, Geoff Pleiss, Jacob Gardner, Stephen Tyree, Kilian~Q Weinberger, and
  Andrew~Gordon Wilson.
\newblock Exact {G}aussian processes on a million data points.
\newblock In H.~Wallach, H.~Larochelle, A.~Beygelzimer, F.~d'Alch\'{e} Buc,
  E.~Fox, and R.~Garnett, editors, \emph{Advances in Neural Information
  Processing Systems}, volume~32, 2019.

\bibitem[Wu et~al.(2019)Wu, Tucker, and Nachum]{wu2019behavior}
Yifan Wu, George Tucker, and Ofir Nachum.
\newblock Behavior regularized offline reinforcement learning.
\newblock \emph{arXiv preprint arXiv:1911.11361}, 2019.

\bibitem[Yu et~al.(2020)Yu, Thomas, Yu, Ermon, Zou, Levine, Finn, and Ma]{mopo}
Tianhe Yu, Garrett Thomas, Lantao Yu, Stefano Ermon, James~Y Zou, Sergey
  Levine, Chelsea Finn, and Tengyu Ma.
\newblock Mopo: Model-based offline policy optimization.
\newblock In H.~Larochelle, M.~Ranzato, R.~Hadsell, M.~F. Balcan, and H.~Lin,
  editors, \emph{Advances in Neural Information Processing Systems}, volume~33,
  pages 14129--14142. Curran Associates, Inc., 2020.

\bibitem[Yu et~al.(2021)Yu, Kumar, Rafailov, Rajeswaran, Levine, and
  Finn]{combo}
Tianhe Yu, Aviral Kumar, Rafael Rafailov, Aravind Rajeswaran, Sergey Levine,
  and Chelsea Finn.
\newblock {COMBO}: Conservative offline model-based policy optimization, 2021.

\end{thebibliography}

\clearpage

\begin{appendices}

\crefalias{section}{appsec}
\crefalias{subsection}{appsec}
\crefalias{subsubsection}{appsec}

\setcounter{equation}{0}
\renewcommand{\theequation}{\thesection.\arabic{equation}}

\onecolumn

\section*{\LARGE Supplementary Material}
\label{sec:appendix}

\section*{Table of Contents}
\vspace*{-10pt}
\startcontents[sections]
\printcontents[sections]{l}{1}{\setcounter{tocdepth}{2}}

\section{Derivations and Further Technical Details}

\subsection{Proof of Proposition 1}
\label{appsec:grad_calcs}

\begin{customproposition}{1}[Exploding Gradients in KL-Regularized RL]
\label{prop:gradients_policy_apdx}
Let $\pi_{0}(\cdot \vbar \bs)$ be a Gaussian behavioral reference policy with mean $\bmu_{0}(\bs)$ and variance $\bsigma^{2}_{0}(\bs)$, and let $\pi(\cdot \vbar \bs)$ be an online policy with reparameterization $\ba_{t} = f_{\phi} (\epsilon_{t} ; \bs_{t})$ and random vector $\epsilon_{t}$.
The gradient of the policy loss with respect to the online policy's parameters $\phi$ is then given by
\begin{align}
\begin{split}
    \hat{\nabla}_{\phi} J_{\pi}(\phi)
    &=
    \big( \alpha \nabla_{\ba_{t}} \log  \pi_{\phi}(\ba_{t} \vbar \bs_{t}) - \alpha \textcolor{black}{ \nabla_{\ba_{t}} \log  \pi_{0}(\ba_{t} \vbar \bs_{t})}
    \\
    &\qquad
    - \nabla_{\ba_{t}} Q(\bs_{t}, \ba_{t}) \big) \nabla_{\phi} f_{\phi} (\epsilon_{t} ; \bs_{t}) + \alpha \nabla_{\phi} \log \pi_{\phi}(\ba_{t} \vbar \bs_{t})
\end{split}
\end{align}%
with
\begin{align}
\SwapAboveDisplaySkip
    \textcolor{black}{\nabla_{\ba_{t}} \log \pi_{0}(\ba_{t} \vbar \bs_{t})} = -\frac{\ba_{t}-\bmu_{0}(\bs_{t})}{\bsigma^{2}_{0}(\bs_{t})}.
\end{align}
For fixed $|\ba_{t}-\bmu_{0}(\bs_{t})|$, $\nabla_{\ba_{t}} \log \pi_{0}(\ba_{t} \vbar \bs_{t})$ grows as $\calO(\bsigma^{-2}_{0}(\bs_{t}))$; thus,
\begin{align}
    \vbar \hat{\nabla}_{\phi} J_{\pi}(\phi) \vbar \rightarrow \infty \quad \text{as} \quad \bsigma^2_{0}(\bs_{t}) \rightarrow 0,
\end{align}
when $\nabla_{\phi} f_{\phi} (\epsilon_{t} ; \bs_{t}) \neq 0$.
\end{customproposition}

\begin{proof}
The policy loss, as given in~\Cref{eq:objective_policy}, is:
\begin{align}
   \label{eq:qpp_policyloss}
   J_{\pi}(\phi )
   &=
   \mathbb{E}_{\bs_{t} \sim \calD} \left[ \mathbb{D}_{\textrm{KL}}\bigl( \pi_\phi (\cdot \vbar \bs_{t}) \,||\, \pi_{0}(\cdot \vbar \bs_{t}) \bigr) \right]
   - \mathbb{E}_{\bs_{t} \sim \calD} \left[ \mathbb{E}_{\ba_{t} \sim \pi_{\phi}} \left[ Q_{\theta}(\bs_{t}, \ba_{t}) \right] \right].
\end{align}
To obtain a lower-variance gradient estimator, the policy is reparameterized using a neural network transformation
\begin{align}
    \ba_{t} = f_{\phi} (\epsilon_{t} ; \bs_{t})
\end{align}
where $\epsilon_{t}$ is an input noise vector.
Following~\citet{haarnoja2019soft}, we can now rewrite~\Cref{eq:qpp_policyloss} as
\begin{align}
\SwapAboveDisplaySkip
\label{eq:qpp_policyloss2}
   J_{\pi}(\phi )
   &=
   \mathbb{E}_{\bs_{t} \sim \calD, \epsilon_{t}} \left[ \alpha \big( \log  \pi_{\phi}(f_{\phi} (\epsilon_{t} ; \bs_{t}) \vbar \bs_{t}) - \log \pi_{0}(f_{\phi} (\epsilon_{t} ; \bs_{t}) \vbar \bs_{t}) \big) - Q(\bs_{t}, f_{\phi} (\epsilon_{t} ; \bs_{t})) \right] 
\end{align}
where $\calD$ is a replay buffer and $\pi_{\phi}$ is defined implicitly in terms of $f_{\phi}$.
We can approximate the gradient of~\Cref{eq:qpp_policyloss2} with
\begin{align}
\begin{split}
    \hat{\nabla}_{\phi} J_{\pi}(\phi)
    &=
    \big( \alpha \nabla_{\ba_{t}} \log \pi_{\phi}(\ba_{t} \vbar \bs_{t}) - \alpha \textcolor{black}{ \nabla_{\ba_{t}} \log  \pi_{0}(\ba_{t} \vbar \bs_{t})}
    \\
    &\qquad
    - \nabla_{\ba_{t}} Q(\bs_{t}, \ba_{t}) \big) \nabla_{\phi} f_{\phi} (\epsilon_{t} ; \bs_{t}) + \alpha \nabla_{\phi} \log \pi_{\phi}(\ba_{t} \vbar \bs_{t}).
\end{split}
\end{align}

\vspace{4pt}

Next, consider the term $\nabla_{\ba_{t}} \log  \pi_{0}(\ba_{t} \vbar \bs_{t})$ for a Gaussian policy:
\begin{align}
\SwapAboveDisplaySkip
    \log \pi_{0}(\ba_{t} \vbar \bs_{t})
    =
    \log \left( \frac{1}{\bsigma_{0}(\bs_{t})\sqrt{2\pi }} \right) -\frac{1}{2} \left(\frac{\ba_{t}-\bmu_{0}(\bs_{t})}{\bsigma_{0}(\bs_{t})(\bs_{t})}\right)^{2}
\end{align}
Thus,
\begin{align}
\SwapAboveDisplaySkip
    \textcolor{black}{\nabla_{\ba_{t}} \log \pi_{0}(\ba_{t} \vbar \bs_{t})} = -\frac{\ba_{t}-\bmu_{0}(\bs_{t})}{\bsigma^{2}_{0}(\bs_{t})}.
\end{align}
For fixed $|\ba_{t}-\bmu_{0}(\bs_{t})|$, $\nabla_{\ba_{t}} \log (\pi_{0}(\ba_{t} \vbar \bs_{t}))$ grows as $\mathcal{O}(\bsigma^{-2}_{0}(\bs_{t}))$, and so,
\begin{align}
    |\hat{\nabla}_{\phi} J_{\pi}(\phi) |\rightarrow \infty \quad \text{as} \quad \bsigma^2_{0}(\bs_{t}) \rightarrow 0.
\end{align}
whenever $\nabla_{\phi} f_{\phi} (\epsilon_{t} ; \bs_{t}) \neq 0$.
\end{proof}

\subsection{Laplace Parametric Behavioral Reference Policy}

A Laplace behavioral reference policy may be able to mitigate some of the problems posed by~\Cref{prop:gradients_policy} due to the heavy tails of the distribution.
The gradient for a Laplace behavioral reference policy
\begin{align}
    \pi_{0}(\ba_{t} \vbar \bs_{t})
    \defines
    \frac{1}{2 \bsigma_{0}(\bs_{t})} \exp \left(-\frac{|\a_{t}-\bmu_{0}(\bs_{t})|}{\bsigma_{0}(\bs_{t})}\right) ,
\end{align}
increases linearly for a given distance between $\ba_{t}$ and the mean $\bmu_{0}(\bs_{t})$ as the scale $\bsigma_{0}(\bs_{t})$ tends to zero.

\subsection{Regularized Maximum Likelihood Estimation}
\label{appsec:entregpredvar}

To address the collapse in predictive variance away from the offline dataset under MLE training seen in~\Cref{fig:heatmaps}, \citet{wu2019behavior} in practice augment the usual MLE loss with an entropy bonus as follows:
\begin{align}
    \pi_{0} \defines \pi_{\psi^\star} ~~~\text{with}~~~ \psi^\star
    \defines
    \argmax_{\psi} \left\{ \mathbb{E}_{(\bs, \ba) \sim \calD} [\log \pi_{\psi} (\ba \vbar \bs) + \beta \calH(\pi_{\psi}(\cdot\vbar \bs))] \right\} .
\end{align}
where $\beta$ is temperature tuned to an entropy constraint similar to~\citet{haarnoja2019soft}.
The entropy bonus is estimated by sampling from the behavioral policy as
\begin{align}
    \calH(\pi_{\psi}(\cdot\vbar \bs)) = \mathbb{E}_{\ba\sim\pi_{\psi}}[-\log\pi_{\psi}(\ba \vbar \bs)]
\end{align}

\Cref{fig:app_ent_heatmaps} shows the predictive variances of behavioral policies trained on expert demonstrations for the ``door-binary-v0'' environment with various entropy coefficients $\beta$. Whilst entropy regularization partially mitigates the collapse of predictive variance away from the expert demonstrations, we still observe the wrong trend similar to~\Cref{fig:heatmaps} with predictive variances high near the expert demonstrations and low on unseen data. The variance surface also becomes more poorly behaved, with ``islands'' of high predictive variance appearing away from the data.

We may also add Tikhonov regularization~\citep{groetsch1984theory} to the MLE objective, explicitly,
\begin{align}
    \pi_{0} \defines \pi_{\psi^\star} ~~~\text{with}~~~ \psi^\star
    \defines
    \argmax_{\psi} \left\{ \mathbb{E}_{(\bs, \ba) \sim \calD} [\log \pi_{\psi} (\ba \vbar \bs) - \lambda \psi^\top\psi] \right\} .
\end{align}
where $\lambda$ is the regularization coefficient.

\Cref{fig:app_tik_heatmaps} shows the predictive variances of behavioral policies trained on expert demonstrations for the ``door-binary-v0'' environment with varying Tikhonov regularization coefficients $\lambda$. Similarly, Tikhonov regularization does not resolve the issue with calibration of uncertainties. We also observe that too high a regularization strength causes the model to underfit to the variances of the data.

\clearpage

\subsection{Comparison to Prior Works}
\label{appsec:prior_comparisons}

To assess the usefulness of KL regularization for improving the performance and sample efficiency of online learning with expert demonstrations, we compare our approach to methods that incorporate expert demonstrations into online learning implicitly or explicitly via KL regularization as well as by means other than KL regularization.

\textbf{ABM \citep{Siegel2020Keep}.}$~$
ABM explicitly KL-regularizes the online policy against a behavioral policy.
This behavioral policy can be estimated via MLE, like BRAC, or alternatively via an ``advantage-weighted behavioral model'' where the RL algorithm is biased to choose actions that are both supported by the offline data and that are good for the current task.
This objective filters trajectory snippets by advantage-weighting, using an $n$-step advantage function.
We show that no carefully chosen objective with additional hyperparameters is required.

\textbf{AWAC \citep{nair2020accelerating}.}$~$
AWAC performs online fine-tuning of a policy pre-trained on offline.
It achieves state-of-the-art results on the dexterous hand manipulation and MuJoCo continuous locomotion tasks.
AWAC implicitly constrains the \kld of the online policy to be close to the behavioral policy by sampling from the replay buffer, which is initially filled with the offline data. 
The method requires additional off-policy data to be generated to saturate the replay buffer, thereby requiring a hidden number of environment interactions that do not involve learning.
Our approach does not require the offline data to be added to the replay buffer before training.

\textbf{AWR \citep{peng2019advantageweighted}.}$~$
AWR approximates constrained policy search by alternating between supervised value function and policy regression steps.
The objective derived is similar to AWAC but instead estimates the value function of the behavioral policy which was demonstrated to be less efficient than $Q$-function estimation via bootstrapping \citep{nair2020accelerating}.
The method may be converted to use offline data by adding prior data to the replay buffer before training.

\textbf{BEAR \citep{NIPS2019_9349}.}$~$
BEAR attempts to stabilize learning from off-policy data (such as offline data) by tackling bootstrapping error from actions far from the training data.
This is achieved by searching for policies with the same support as the training distribution.
This approach is too restrictive for the problem considered in this paper, since only a small number of expert demonstrations is available, which requires exploration.
In contrast, our approach encourages exploration away from the data by wider behavioral policy predictive variances.
BEAR uses an alternate divergence measure to the \kld, Maximum Mean Discrepancy~\citep{10.1007/978-3-540-75225-7_5}.
Other divergences such as Wasserstein Distances~\citep{pmlr-v119-pacchiano20a} have also been proposed for regularization in RL.

\textbf{BRAC \citep{wu2019behavior}.}$~$
BRAC regularizes the online policy against an offline behavioral policy as our method does.
However, BRAC exhibits the pathologies we have shown by learning a poor behavioral policy via MLE.
To mitigate this, in practice, BRAC adds an entropy bonus to the supervised learning objective which stabilizes the variance around the training set but has no guarantees away from the data.
We demonstrate that behavioral policy obtained via maximum likelihood estimation with entropy regularization exhibit a collapse in predictive uncertainty estimates way from the training data, resulting in the pathology described in~\Cref{prop:gradients_policy}.

\textbf{DAPG \citep{Rajeswaran-RSS-18}.}$~$
DAPG incorporates offline data into policy gradients by initially pre-training with a behaviorally cloned policy and then augmenting the RL loss with a supervised-learning loss.
We similarly pre-train the online policy at the start to avoid noisy KLs at the beginning of training.
However, training a joint loss that combines two disparate and often divergent terms can be unstable.

\textbf{SAC+BC \citep{nair2018overcoming}.}$~$
SAC+BC represents the approach of~\citet{nair2018overcoming} but uses SAC instead of DDPG~\citep{DBLP:journals/corr/LillicrapHPHETS15} as the underlying RL algorithm.
The method maintains a secondary replay buffer filled with offline data that is sampled each update step, augmenting the policy loss with a supervised learning loss that is filtered by advantage and hindsight experience replay.
Our method requires far fewer additional ad-hoc algorithmic design choices.

\textbf{SACfD \citep{haarnoja2019soft}.}$~$
SACfD uses the popular Soft Actor--Critic (SAC) algorithm with offline data loaded into the replay buffer before online training.
Our algorithm uses the same approximate policy iteration scheme as SAC with a modified objective. \citet{nair2020accelerating} show that including the offline data into the replay buffer does not significantly improve the training performance over the unmodified SAC objective and that pre-training the online policy with offline data results in catastrophic forgetting.
Thus, a different approach is needed to integrate offline data with SAC-style algorithms.

\clearpage

\section{Further Experimental Results}

\subsection{Exploding $Q$-function Gradients}
\label{appsec:qfngrad}

In~\Cref{prop:gradients_policy} and \Cref{sec:empirical_evaluation}, we showed that the policy gradient $\hat{\nabla}_{\phi} J_{\pi}(\phi )$ explodes due to the blow-up of the gradient of the behavioral reference policy's log-density as the behavioral policy predictive variance $\bsigma_{0}(\bs)$ tends to zero.
A similar relationship holds for the $Q$-function gradients, which we confirm empirically in~\Cref{fig:mujoco_gp_mean_fixed_var_abl_qfn}.

\begin{figure*}[ht!]
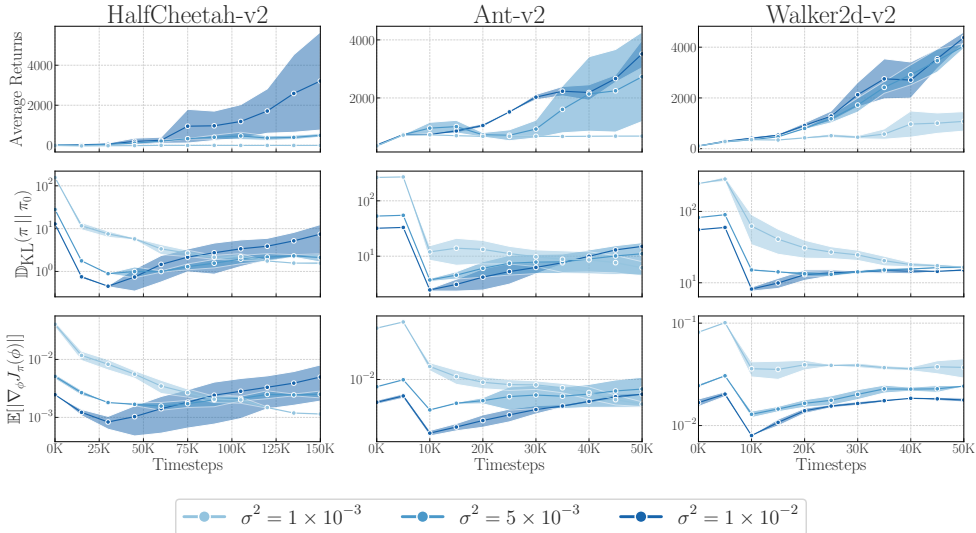

\centering
\includegraphics[width=0.95\textwidth]{figures/mujoco_gp_mean_fixed_var_abls.pdf}
\includegraphics[trim=0 100 0 100, clip, width=0.6\textwidth]{figures/gp_mean_cvar_abl_legend.pdf}
    \caption{
        Ablation study showing the effect of predictive variance collapse on the performance of KL-regularized RL on MuJoCo benchmarks.
        Policies shown from dark to light in order of decreasing constant predictive variance, simulating training under maximum likelihood estimation.
        The plots show the average return of the learned policy, magnitude of the KL penalty, and magnitude of the $Q$-function gradients during online training.
    }
    \label{fig:mujoco_gp_mean_fixed_var_abl_qfn}
\end{figure*}

\subsection{Ablation Study on the Effect of KL Divergence Temperature Tuning}
\label{appsec:kl_tuning}

\Cref{fig:kltemp_abl} shows that unlike in standard SAC \citep{haarnoja2019soft}, tuning of the KL-temperature is not necessary to achieve good online performance.
For simplicity, we use a fixed value throughout our experiments.

\begin{figure*}[ht!]
\centering
\includegraphics[width=0.95\textwidth]{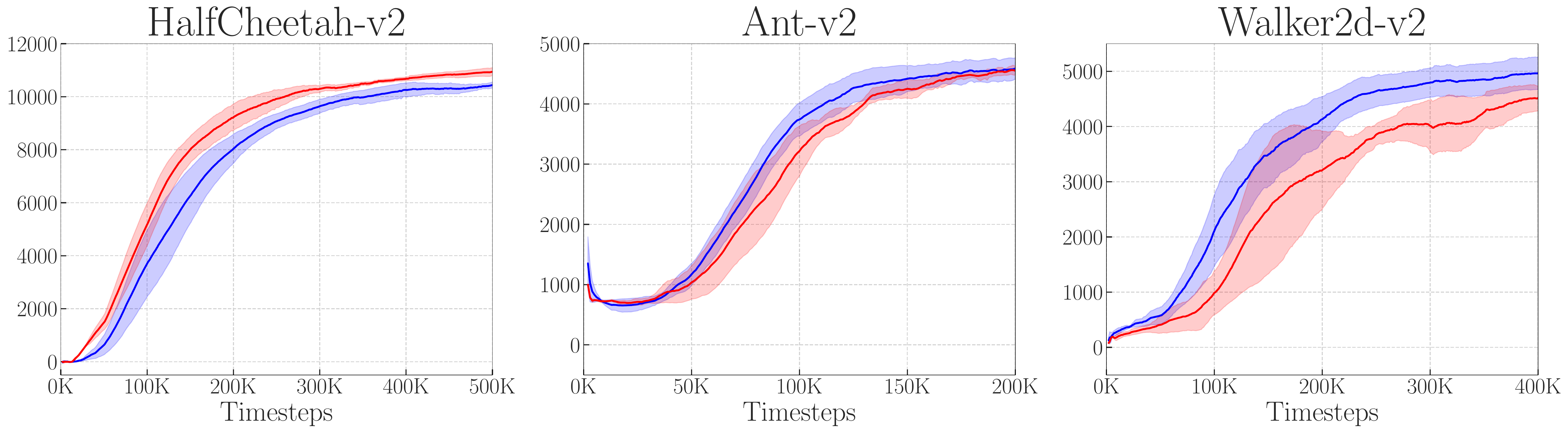}
\includegraphics[trim=0 100 0 100, clip, width=0.55\textwidth]{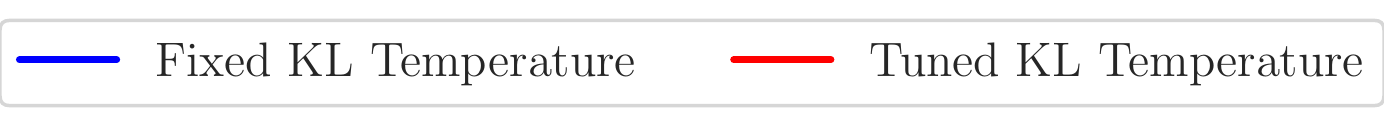}
\caption{
    Ablation study on the effect of automatic KL temperature tuning on the performance of KL-regularized RL with a non-parametric \gp behavioral reference policy on MuJoCo locomotion tasks.
}
\label{fig:kltemp_abl}
\end{figure*}

\clearpage

\subsection{Ablation Study: Performance under a Laplace Parametric Behavioral Reference Policy}
\label{appsec:laplace}

We use a Laplace behavioral reference policy to assess whether it is more effective at incorporating the expert demonstration data into online training.
\Cref{fig:lap_abl} shows empirical results using the Laplace behavioral reference policy compared against \textsc{n-ppac} (in blue) and a SAC baseline (in green) on three MuJoCo locomotion tasks.
We use automatic KL-temperature tuning for this ablation.
On the Ant-v2 environment, the Laplace behavioral reference policy slightly improves upon the baseline SAC performance, which does not use any prior information at all.
On the door and pen environment, the online policy learned under the Laplace behavioral reference policy does not learn any meaningful behavior.

In both MuJoCo locomotion tasks and the ``door-binary-v0'' and ``pen-binary-v0'' dexterous hand manipulation environments, \textsc{n-npac} significantly outperforms both the online policy learned under the Laplace behavioral reference policy and the SAC baseline.
We can understand the behavior under the Laplace behavioral reference policy in terms of collapse of predictive variance away from data for neural network parameterized policies, as it too has a decreasing variance away from the expert trajectories.

\begin{figure*}[ht!]
\centering
\includegraphics[width=0.95\textwidth]{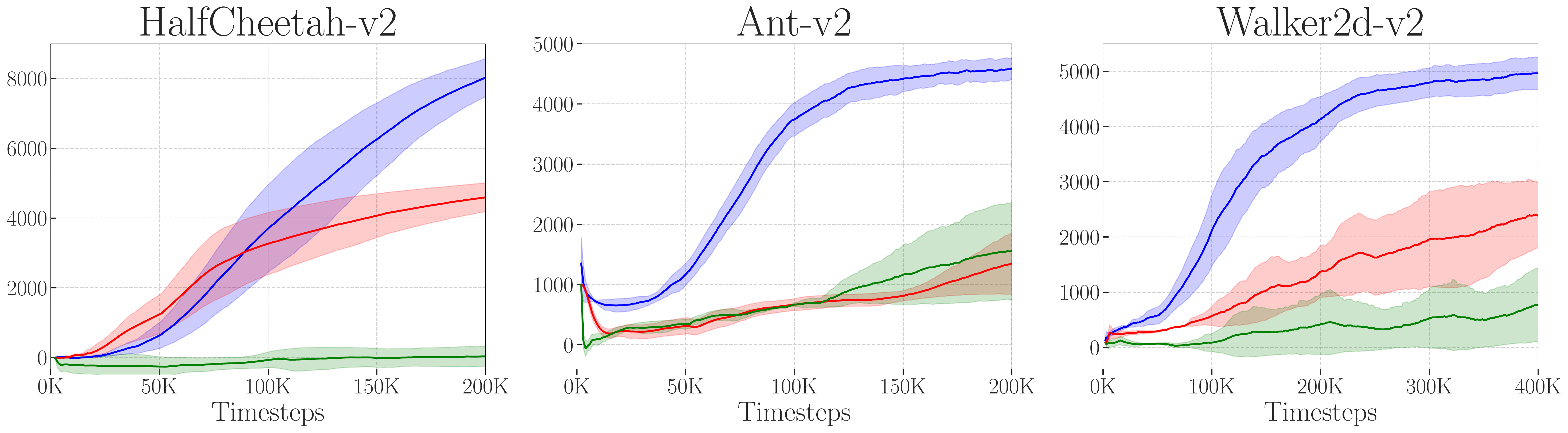}
\includegraphics[trim=0 100 0 100, clip, width=0.55\textwidth]{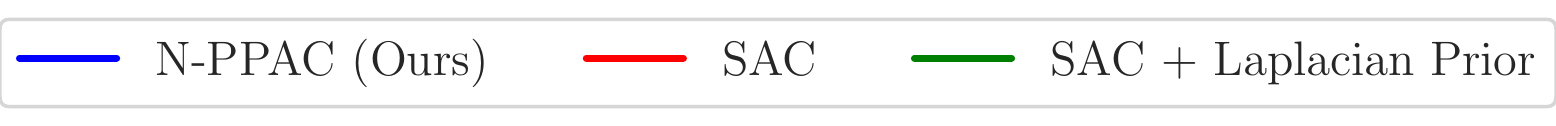}
    \caption{
        Ablation study using heavier-tailed Laplace behavioral reference policy on MuJoCo locomotion tasks.
    }
    \label{fig:lap_abl}
\end{figure*}

\clearpage

\subsection{Visualizations of Regularized Maximum Likelihood Parametric Behavioral Policies}

\begin{center}
    \large Maximum Likelihood + Entropy Maximization
\end{center}
\begin{figure}[ht!]
\vspace*{-15pt}
\centering
\begin{subfigure}[t]{0.28\columnwidth}
    \includegraphics[width=\columnwidth]{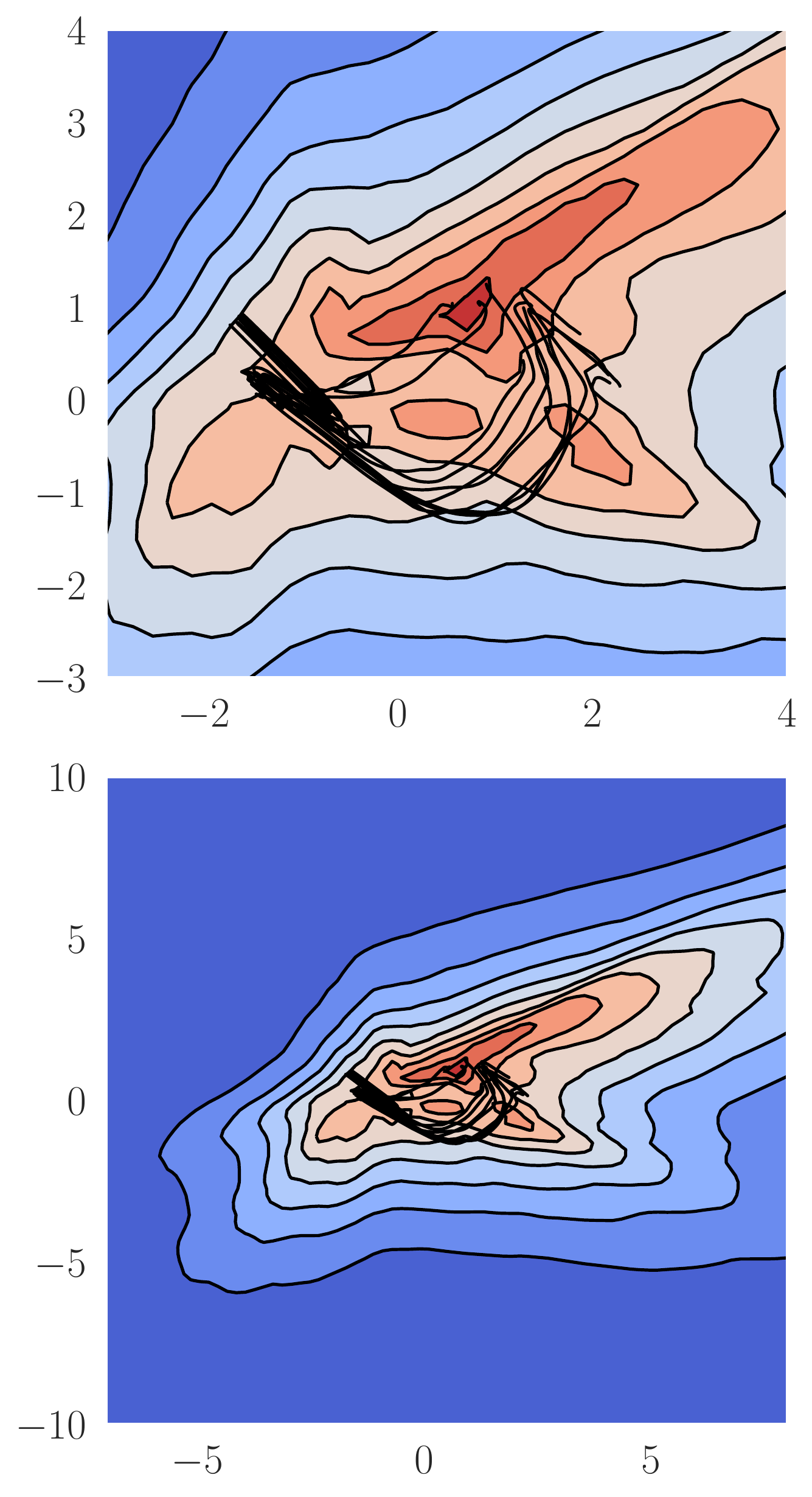}
    \caption{$\beta = 10^{-2}$}
\end{subfigure}
\begin{subfigure}[t]{0.28\columnwidth}
    \includegraphics[width=\columnwidth]{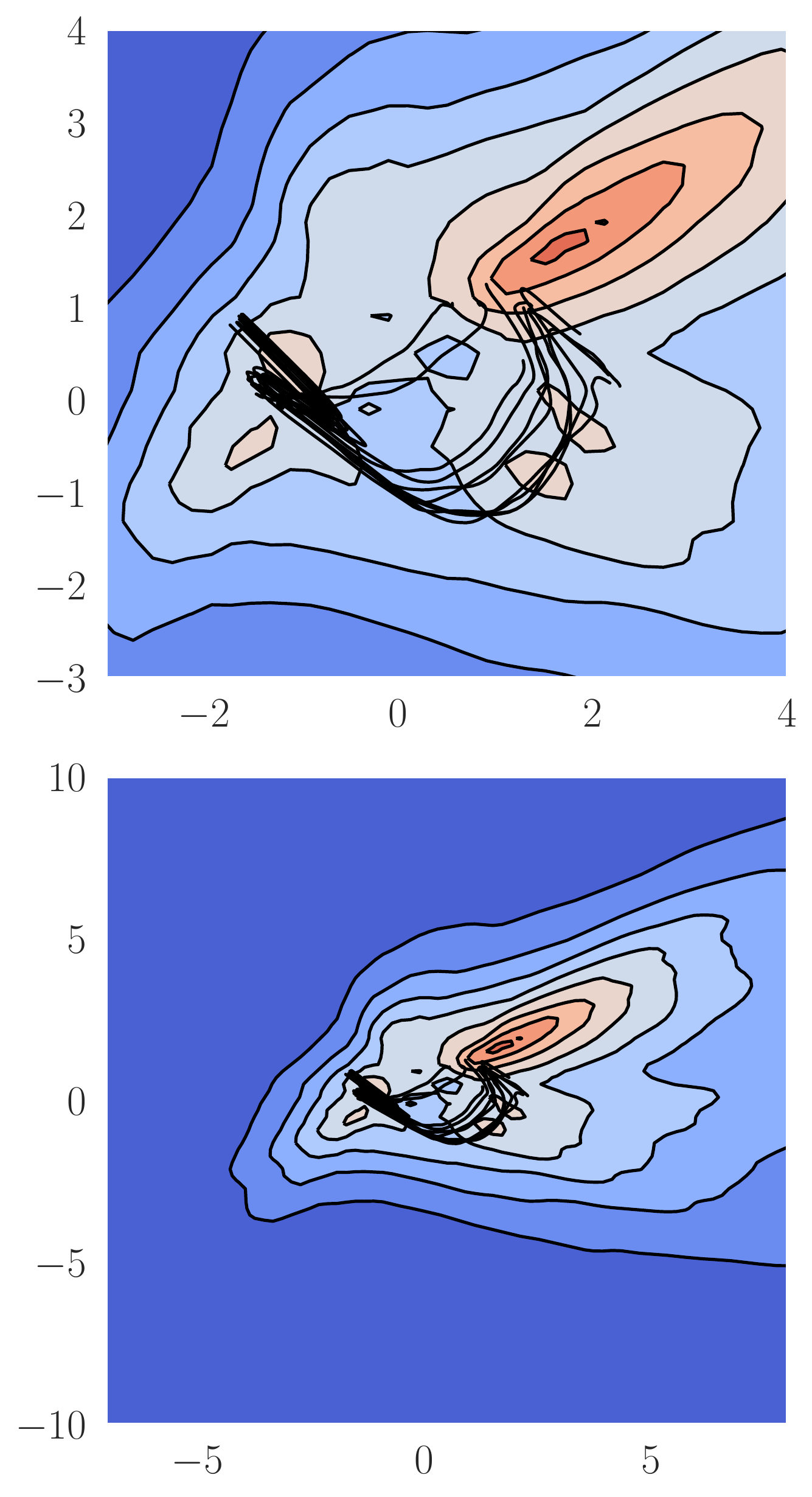}
    \caption{$\beta = 10^{-3}$}
\end{subfigure}
\begin{subfigure}[t]{0.28\columnwidth}
    \includegraphics[width=\columnwidth]{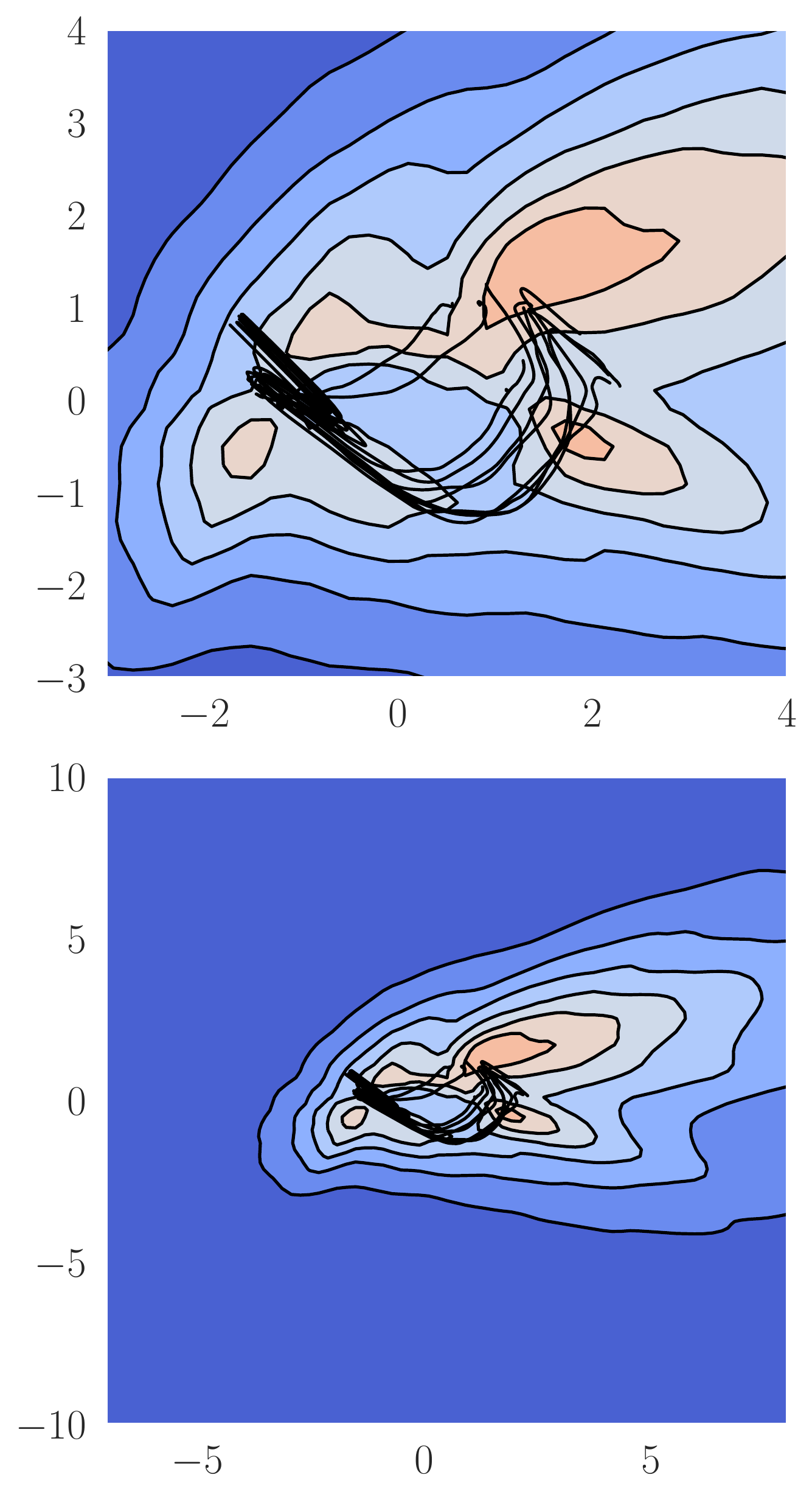}
    \caption{$\beta = 10^{-4}$}
\end{subfigure}
\begin{subfigure}[t]{0.071\columnwidth}
    \includegraphics[width=\columnwidth]{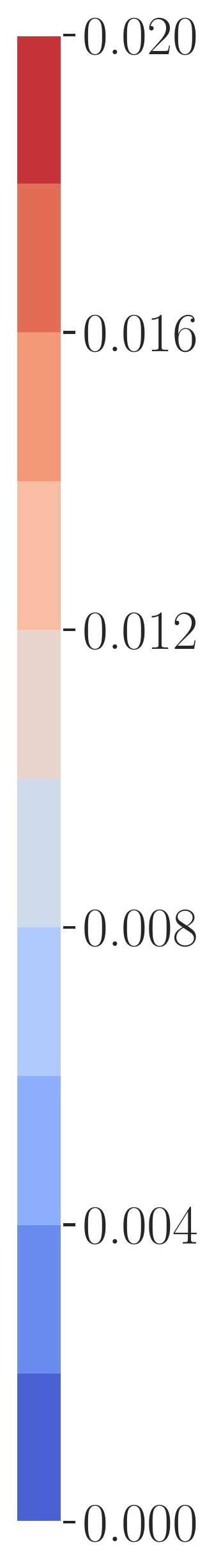}
\end{subfigure}
\caption{
    Predictive variances of parametric neural network Gaussian behavioral policies $\pi_{\psi}(\cdot \vbar \bs) = \calN(\bmu_{\psi}(\bs), \bsigma^2_{\psi}(\bs))$ trained with different entropy regularization coefficients $\beta$.
    }
\label{fig:app_ent_heatmaps}
\end{figure}

\medskip

\begin{center}
    \large Maximum Likelihood + Tikhonov Regularization
\end{center}
\begin{figure}[ht!]
\vspace*{-15pt}
\centering
\begin{subfigure}[t]{0.275\columnwidth}
    \includegraphics[width=\columnwidth]{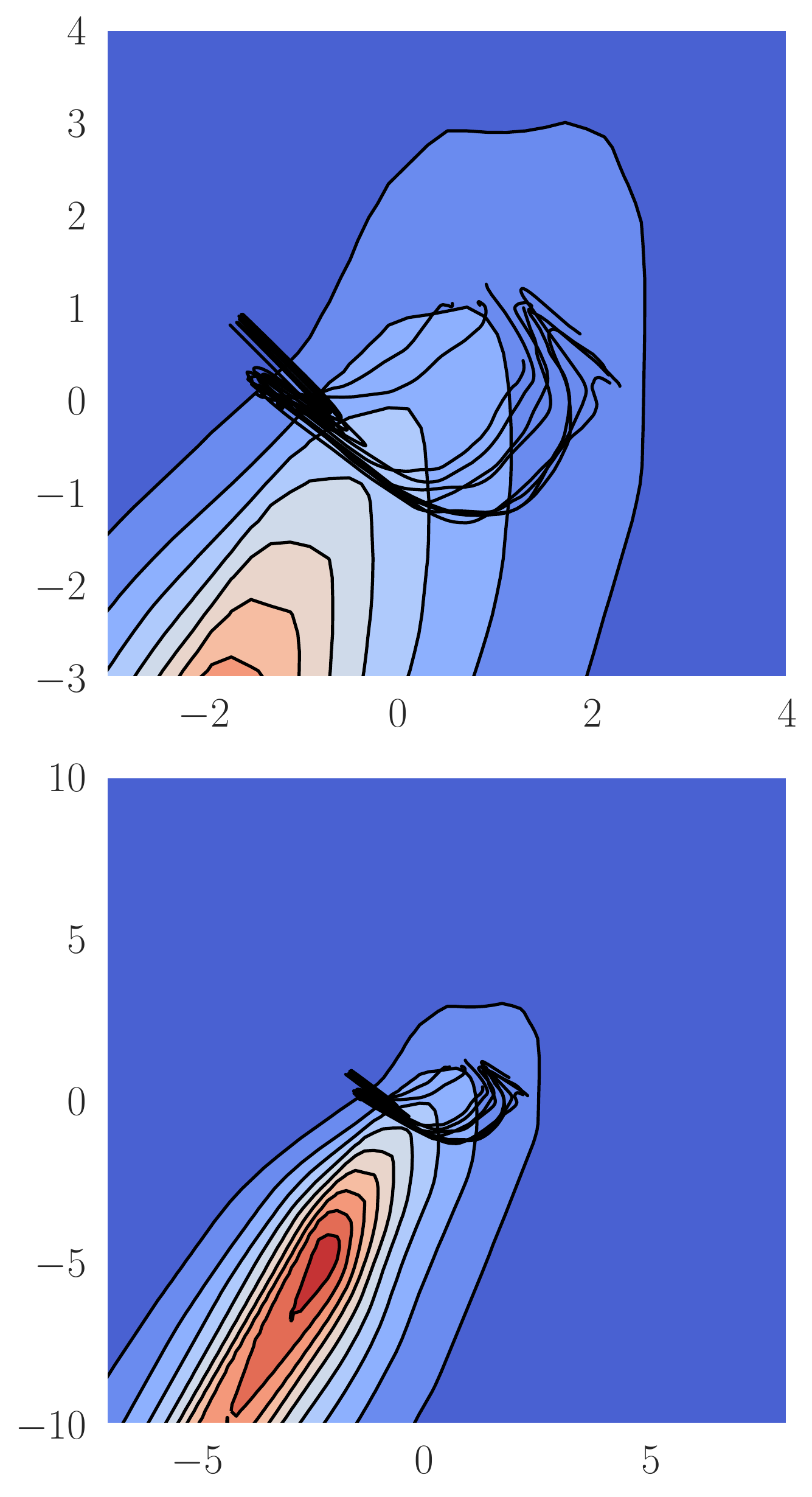}
    \caption{$\lambda = 10^{-2}$}
\end{subfigure}
\begin{subfigure}[t]{0.07\columnwidth}
    \includegraphics[width=\columnwidth]{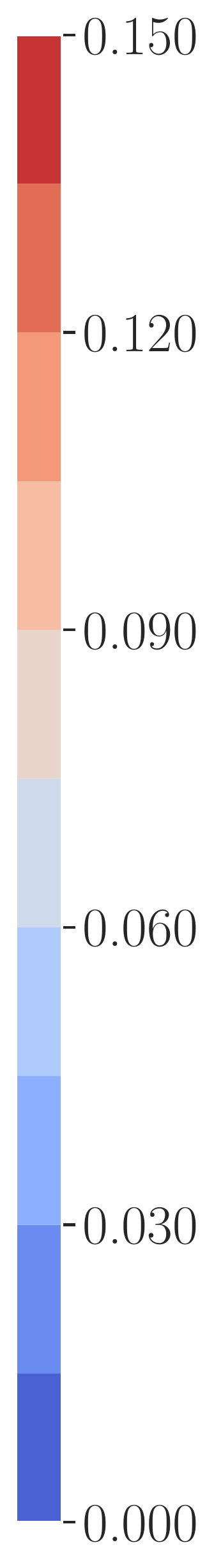}
\end{subfigure}
\begin{subfigure}[t]{0.275\columnwidth}
    \includegraphics[width=\columnwidth]{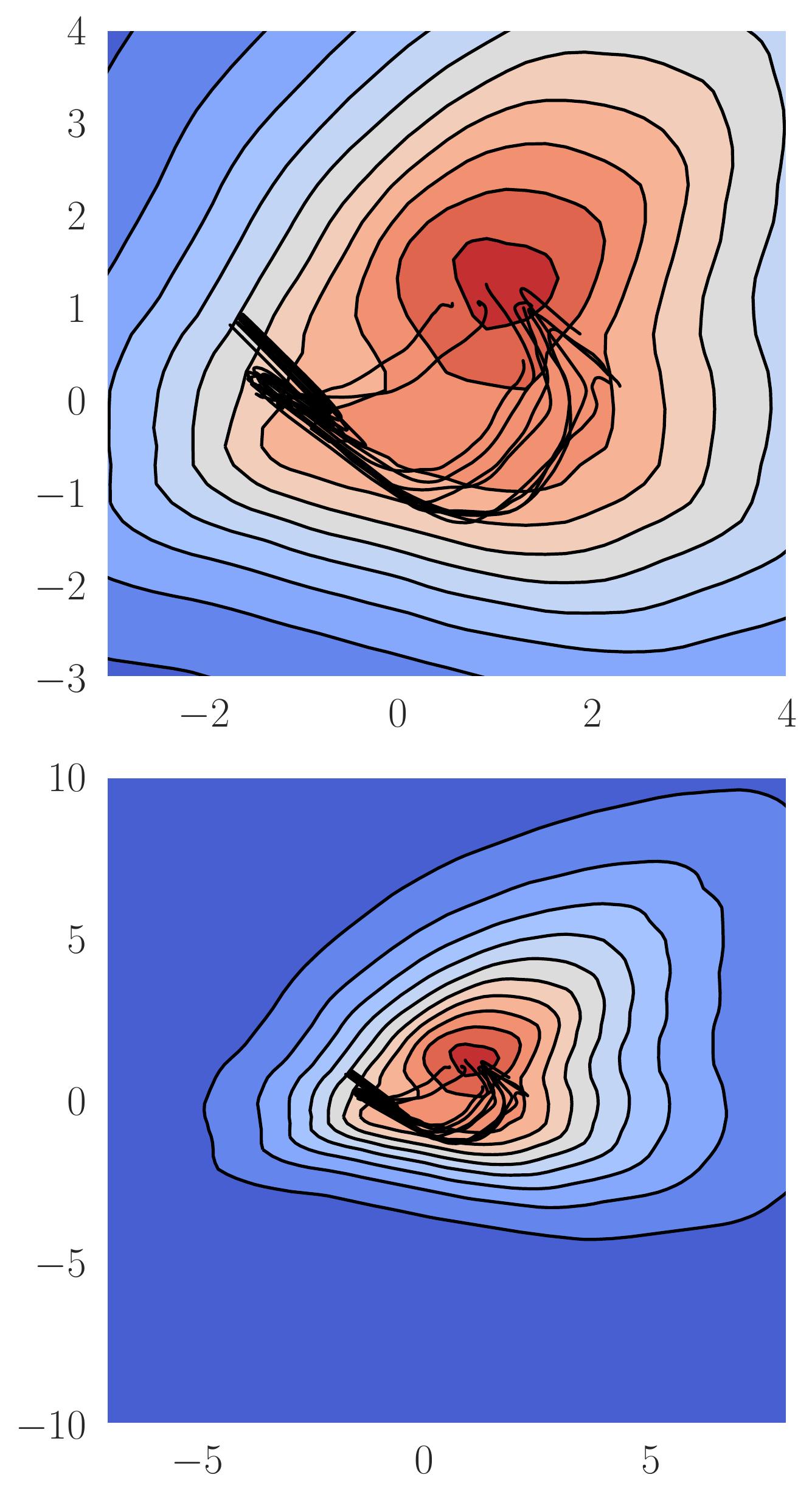}
    \caption{$\lambda = 10^{-3}$}
\end{subfigure}
\begin{subfigure}[t]{0.275\columnwidth}
    \includegraphics[width=\columnwidth]{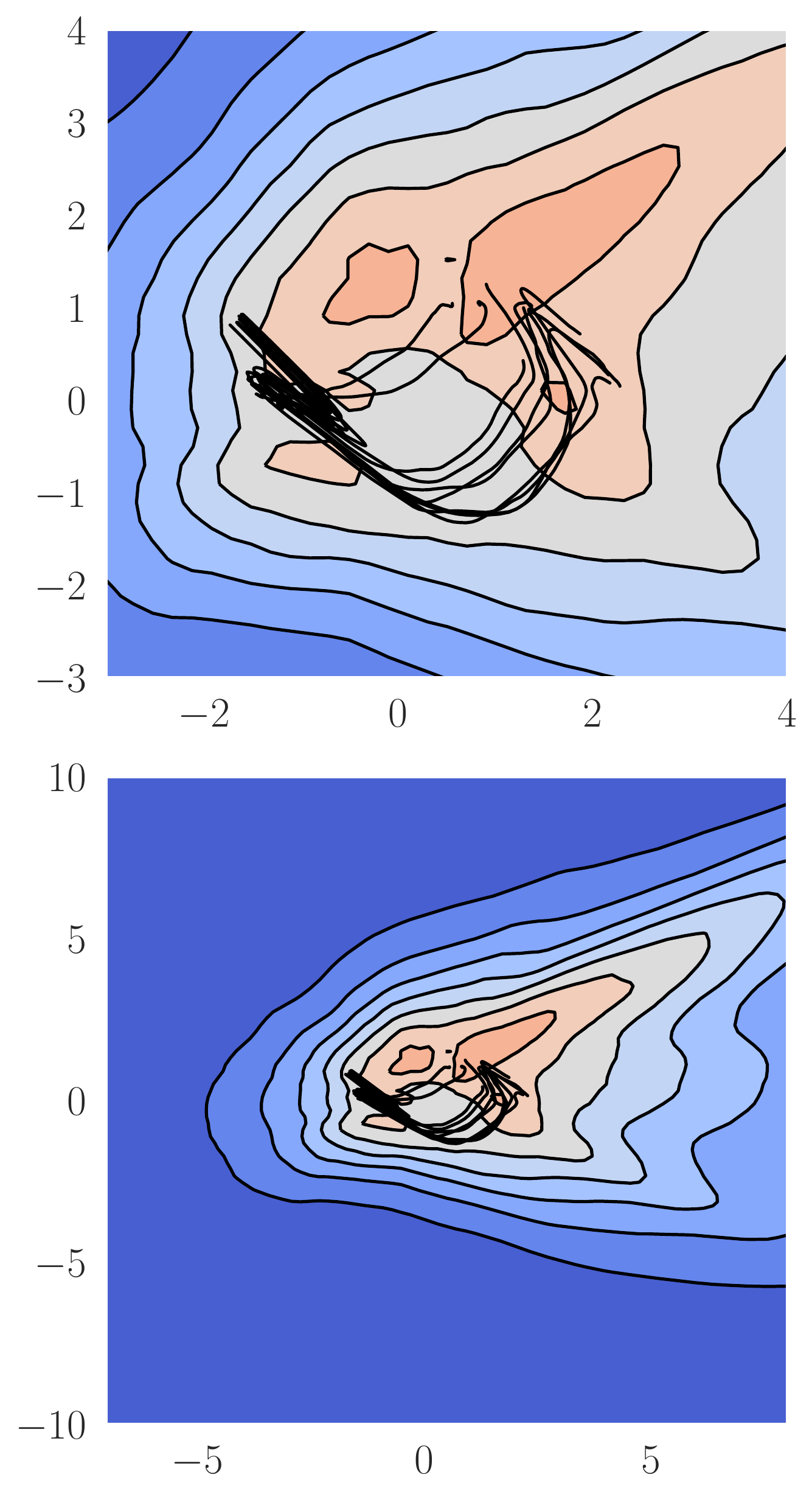}
    \caption{$\lambda = 10^{-4}$}
\end{subfigure}
\begin{subfigure}[t]{0.069\columnwidth}
    \includegraphics[width=\columnwidth]{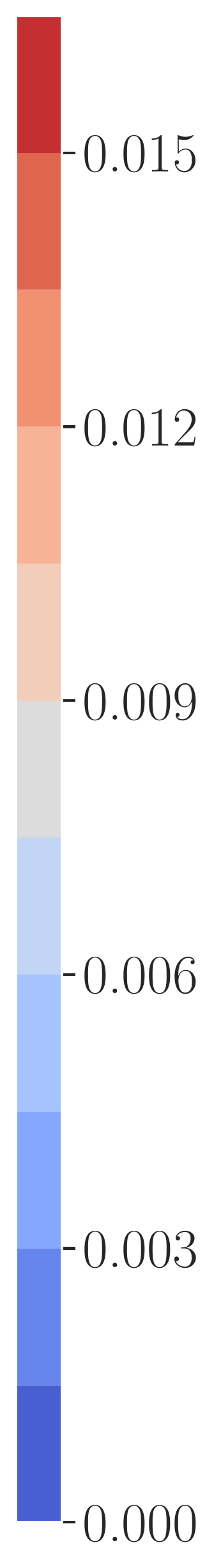}
\end{subfigure}
\caption{
    Predictive variances of parametric neural network Gaussian behavioral policies $\pi_{\psi}(\cdot \vbar \bs) = \calN(\bmu_{\psi}(\bs), \bsigma^2_{\psi}(\bs))$ trained with different Tikhonov regularization coefficients $\lambda$.
    }
\label{fig:app_tik_heatmaps}
\vspace*{-40pt}
\end{figure}

\clearpage

\subsection{Visualizations of Ensemble Maximum Likelihood Parametric Behavioral Policies}
\label{appsec:policy_ensemble}

On the ``door-binary-v0'' environment, we consider an ensemble of parametric neural network Gaussian policies $\pi_{\psi^{1:K}}(\cdot \vbar \bs) \defines \calN( \bmu_{\psi^{1:K}}(\bs), \bsigma^2_{\psi^{1:K}}(\bs) )$ with
\begin{align}
    \mu_{\psi^{1:K}}(\bs) \defines \frac{1}{K} \sum_{k=1}^K \bmu_{\psi^{k}}(\bs), \quad \bsigma^2_{\psi^{1:K}}(\bs) \defines \frac{1}{K} \sum_{k=1}^K \left( \bsigma^2_{\psi^{k}}(\bs) + \bmu^2_{\psi^{k}}(\bs) \right) - \mu^2_{\psi^{1:K}}(\bs)
\end{align}

\begin{figure}[ht!]
\vspace{-20pt}
\centering
\begin{subfigure}[t]{0.28\columnwidth}
    \includegraphics[width=\columnwidth]{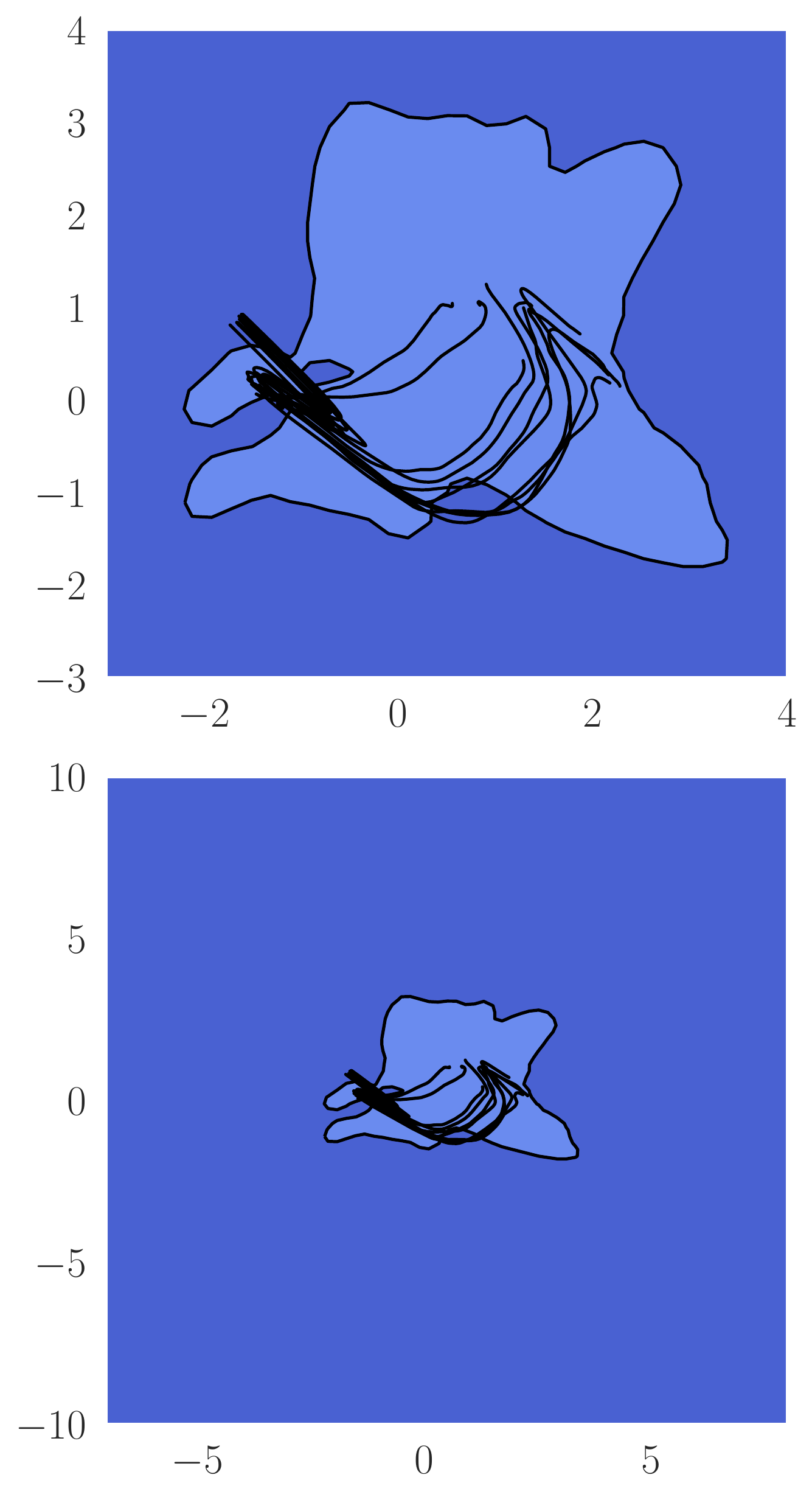}
    \caption{Single Model}
\end{subfigure}
\begin{subfigure}[t]{0.28\columnwidth}
    \includegraphics[width=\columnwidth]{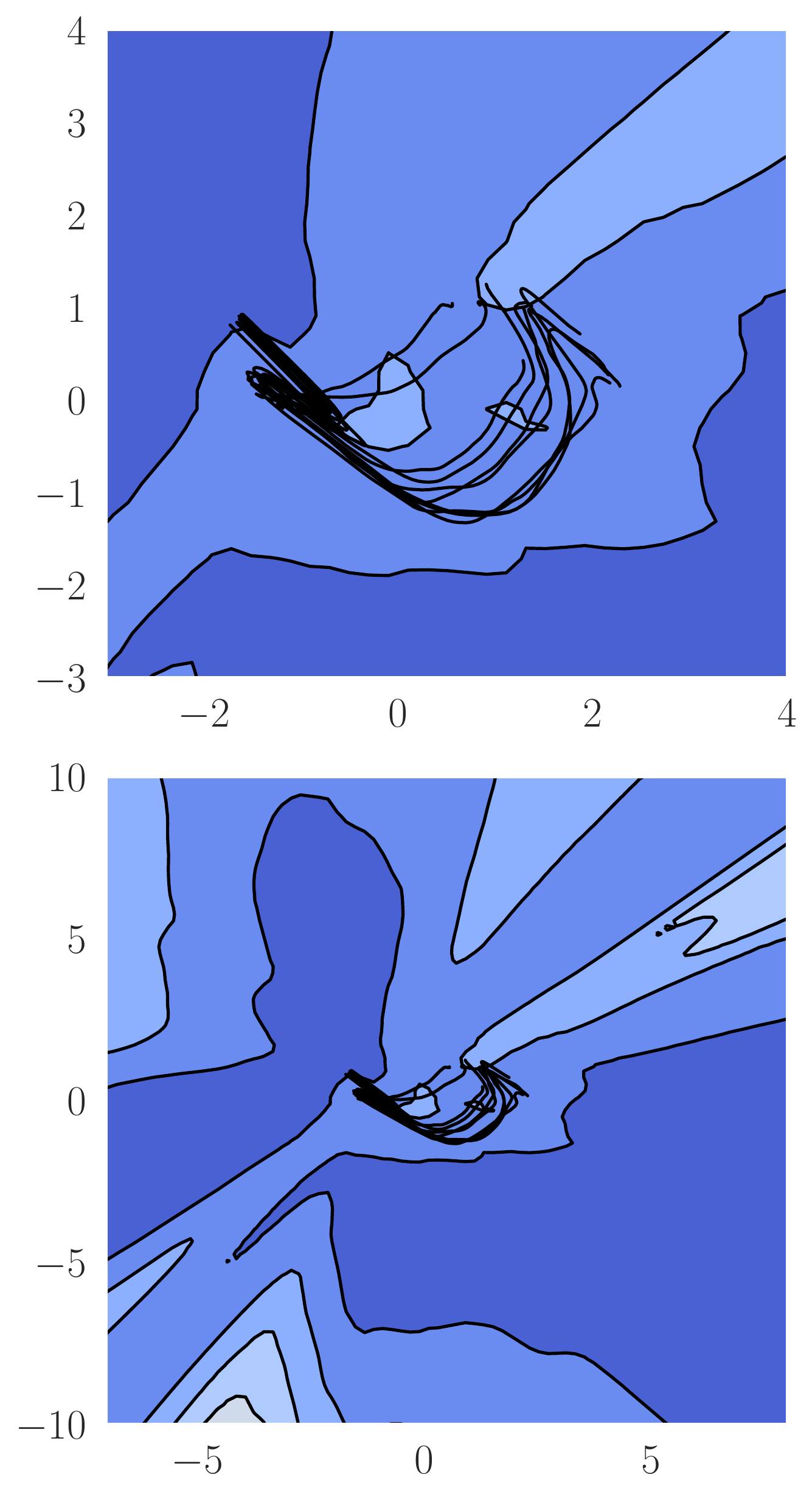}
    \caption{$K=2$}
\end{subfigure}
\begin{subfigure}[t]{0.28\columnwidth}
    \includegraphics[width=\columnwidth]{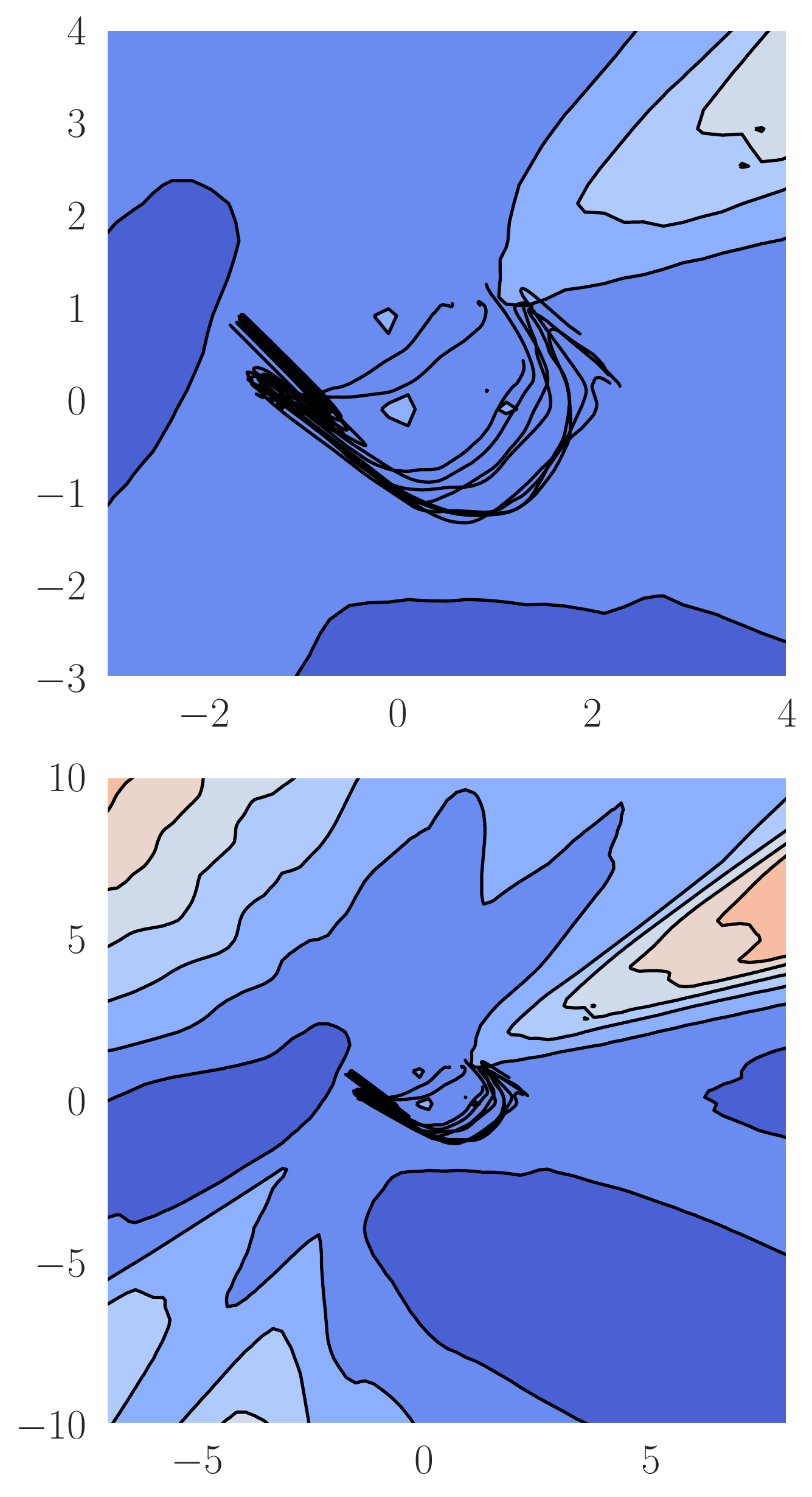}
    \caption{$K=4$}
\end{subfigure}
\begin{subfigure}[t]{0.071\columnwidth}
    \includegraphics[width=\columnwidth]{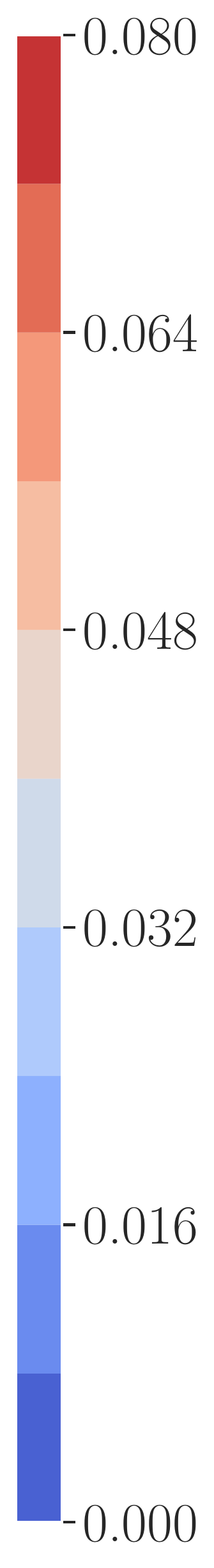}
\end{subfigure}
\begin{subfigure}[t]{0.28\columnwidth}
    \includegraphics[width=\columnwidth]{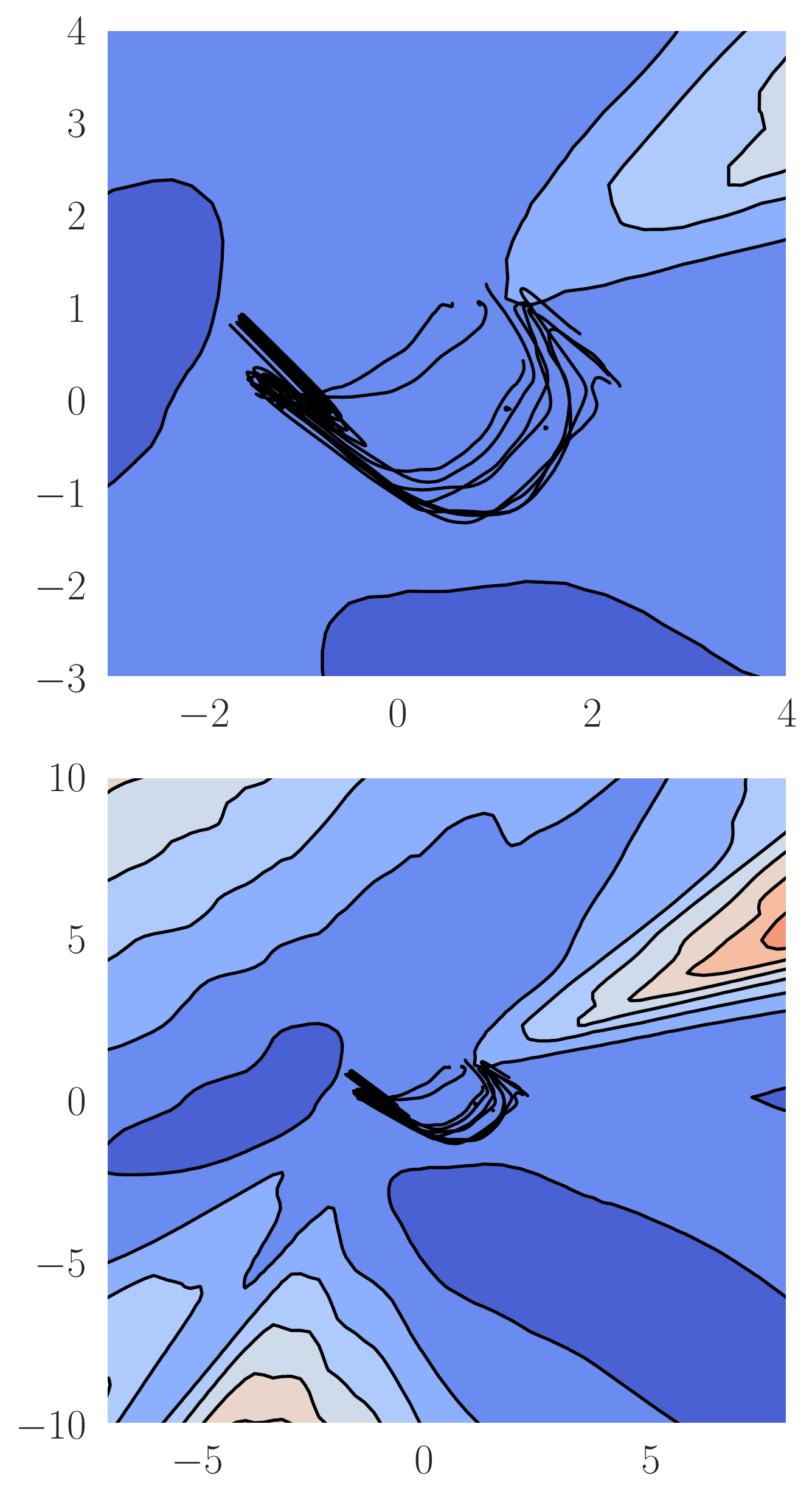}
    \caption{$K=6$}
\end{subfigure}
\begin{subfigure}[t]{0.28\columnwidth}
    \includegraphics[width=\columnwidth]{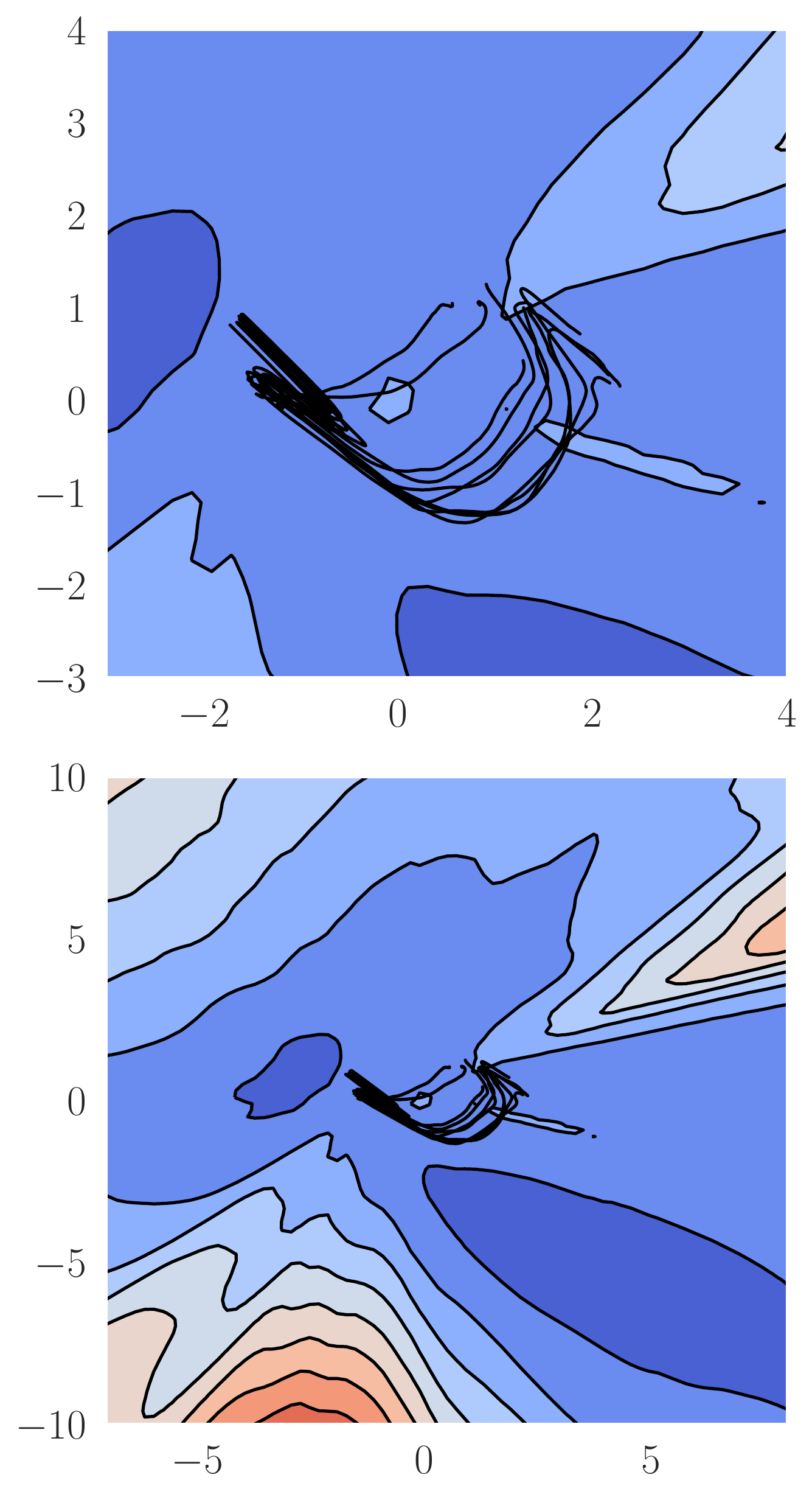}
    \caption{$K=8$}
\end{subfigure}
\begin{subfigure}[t]{0.28\columnwidth}
    \includegraphics[width=\columnwidth]{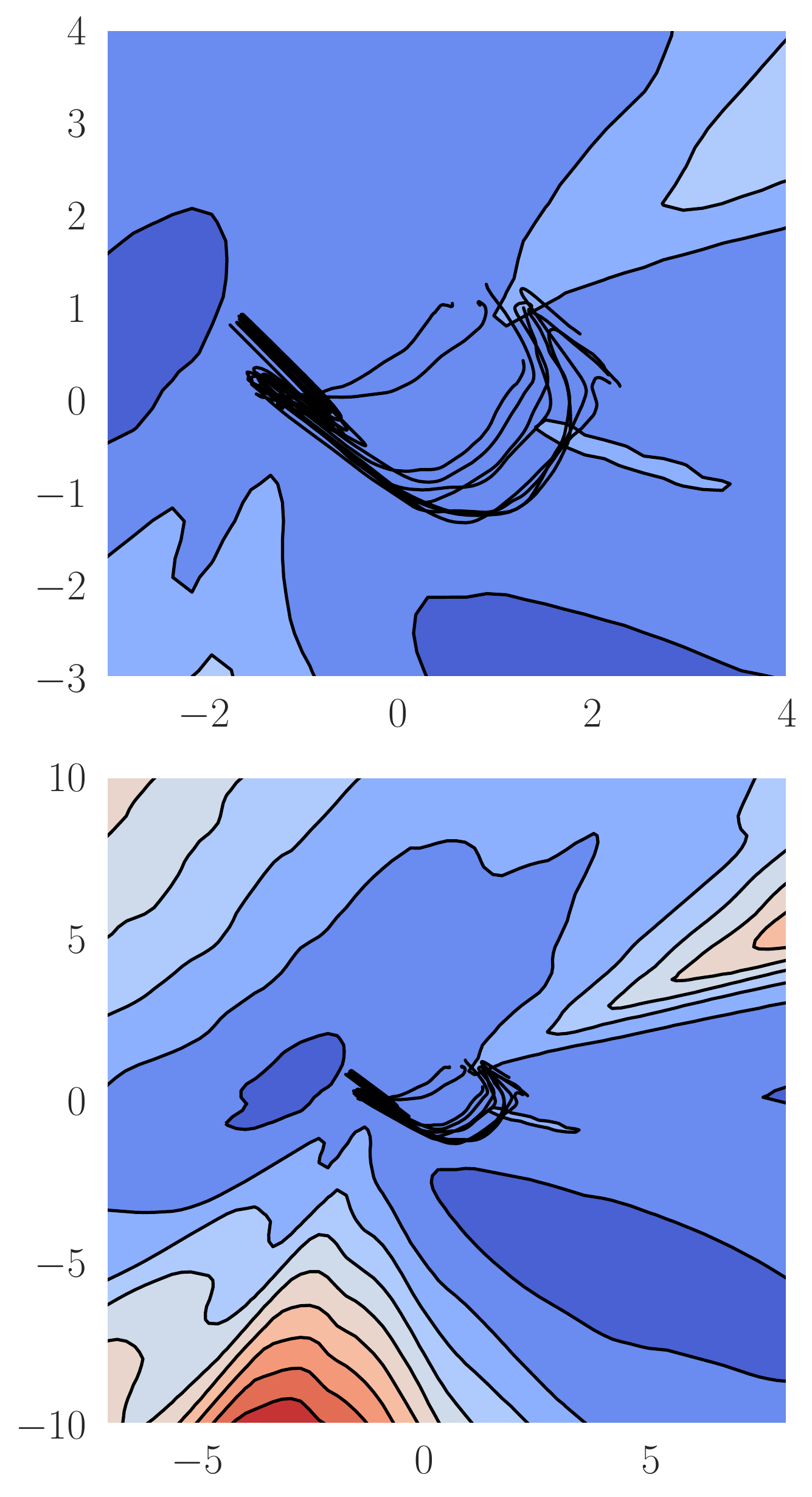}
    \caption{$K=10$}
\end{subfigure}
\begin{subfigure}[t]{0.071\columnwidth}
    \includegraphics[width=\columnwidth]{figures/appendix_heatmaps_ensemble/heatmap_ensemble_cbar.pdf}
\end{subfigure}
\caption{
    Predictive variances of ensembles of parametric neural network Gaussian behavioral policies \mbox{$\pi_{\psi^{1:K}}(\cdot \vbar \bs)$} with each neural network in the ensemble trained via MLE.
    The ensemble policies are marginally better calibrated than parametric neural network policies in that their predictive variance only collapses in some but not all regions away from the expert trajectories.
    }
\label{fig:app_ensemble_heatmaps}
\vspace{-40pt}
\end{figure}

\clearpage

\subsection{Parametric vs. Non-Parametric Predictive Variance Visualizations Across Environments}
\label{appsec:further_env_vis}

\Cref{fig:more_heatmaps} shows the predictive variances of non-parametric and parametric behavioral policies on low dimensional representations of the environments considered in Figures~\ref{fig:mujcoo_eval} and~\ref{fig:dex_eval} (excluding ``door-binary-v0'', which is shown in~\Cref{fig:heatmaps}).
\begin{figure}[h!]
\centering
\begin{subfigure}[t]{0.8\columnwidth}
\begin{subfigure}[t]{0.49\columnwidth}
    \begin{center}
        \large Non-Parametric
    \end{center}
    \includegraphics[width=\columnwidth]{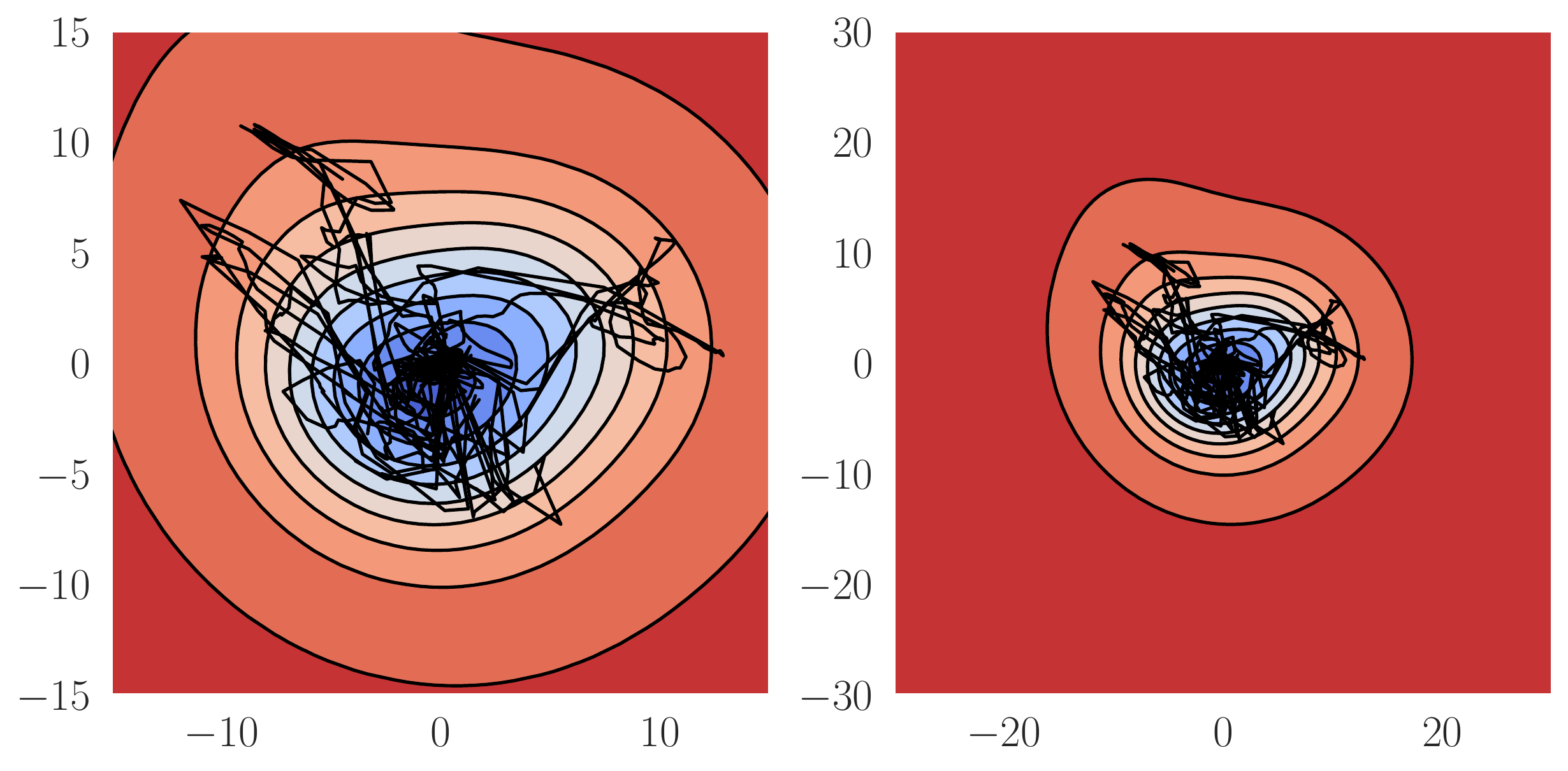}
\end{subfigure}
\begin{subfigure}[t]{0.49\columnwidth}
    \begin{center}
        \large Parametric
    \end{center}
    \includegraphics[width=\columnwidth]{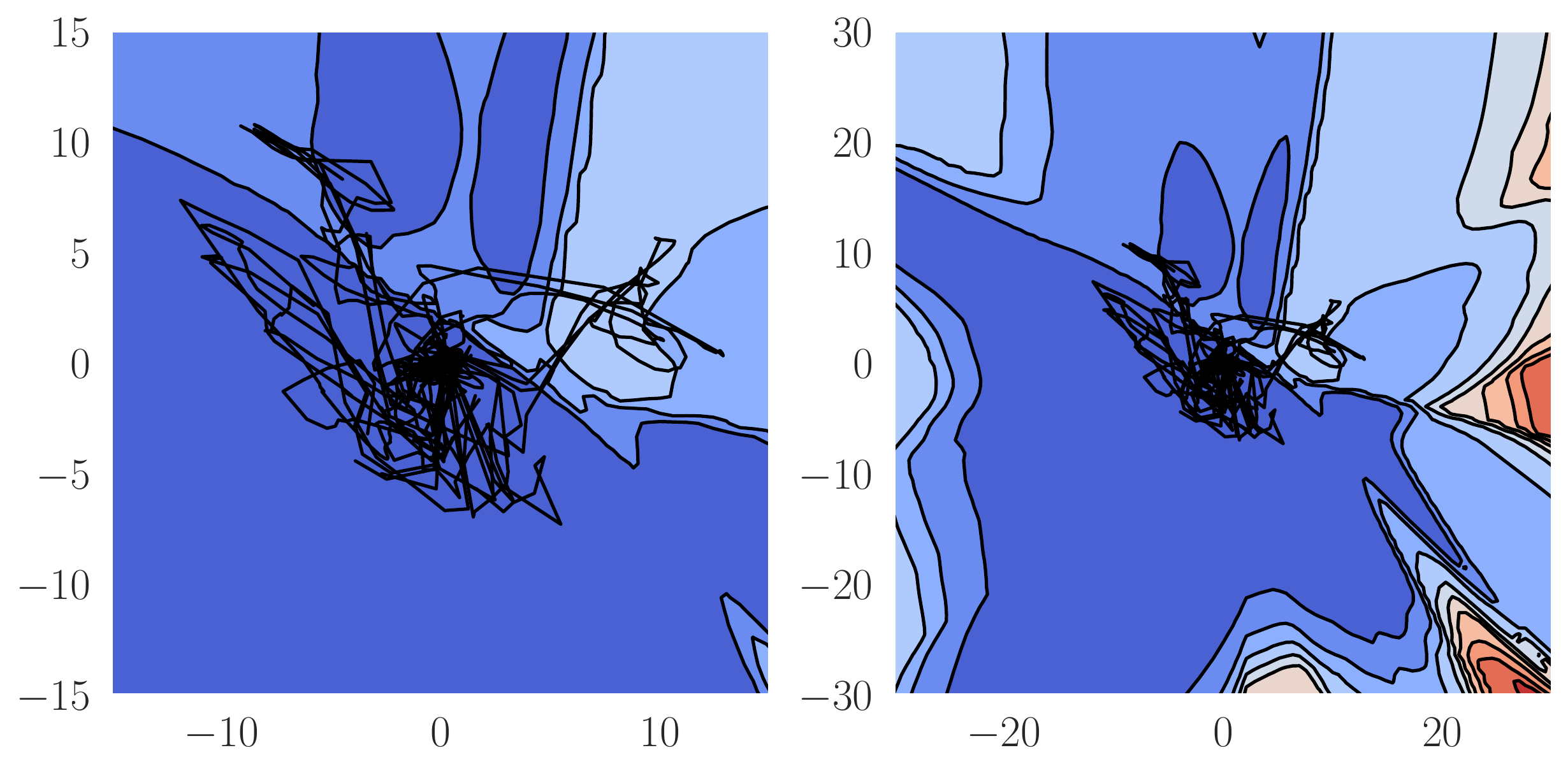}
\end{subfigure}
\\
\centering
\hspace*{6pt}
\begin{subfigure}[t]{0.47\columnwidth}
    \includegraphics[width=\columnwidth]{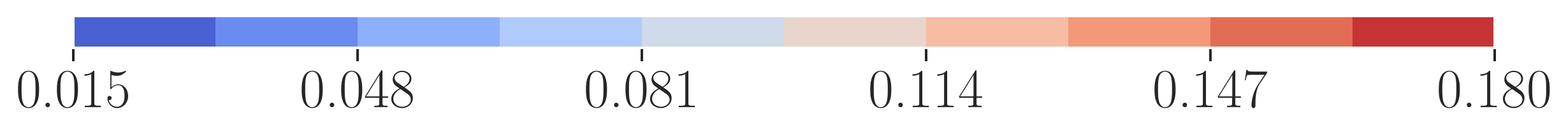}
\end{subfigure}
\hspace*{6pt}
\begin{subfigure}[t]{0.47\columnwidth}
    \includegraphics[width=\columnwidth]{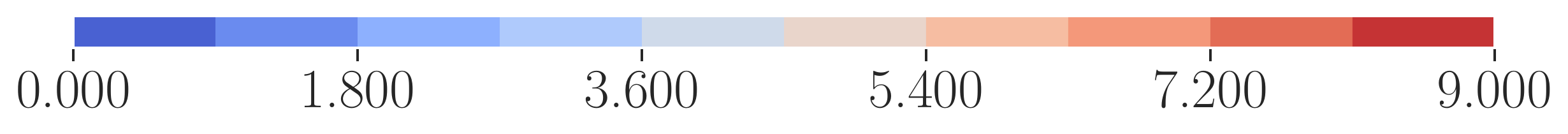}
\end{subfigure}
\caption{pen-binary-v0}
\end{subfigure}
\begin{subfigure}[t]{0.8\columnwidth}
\begin{subfigure}[t]{0.49\columnwidth}
    \includegraphics[width=\columnwidth]{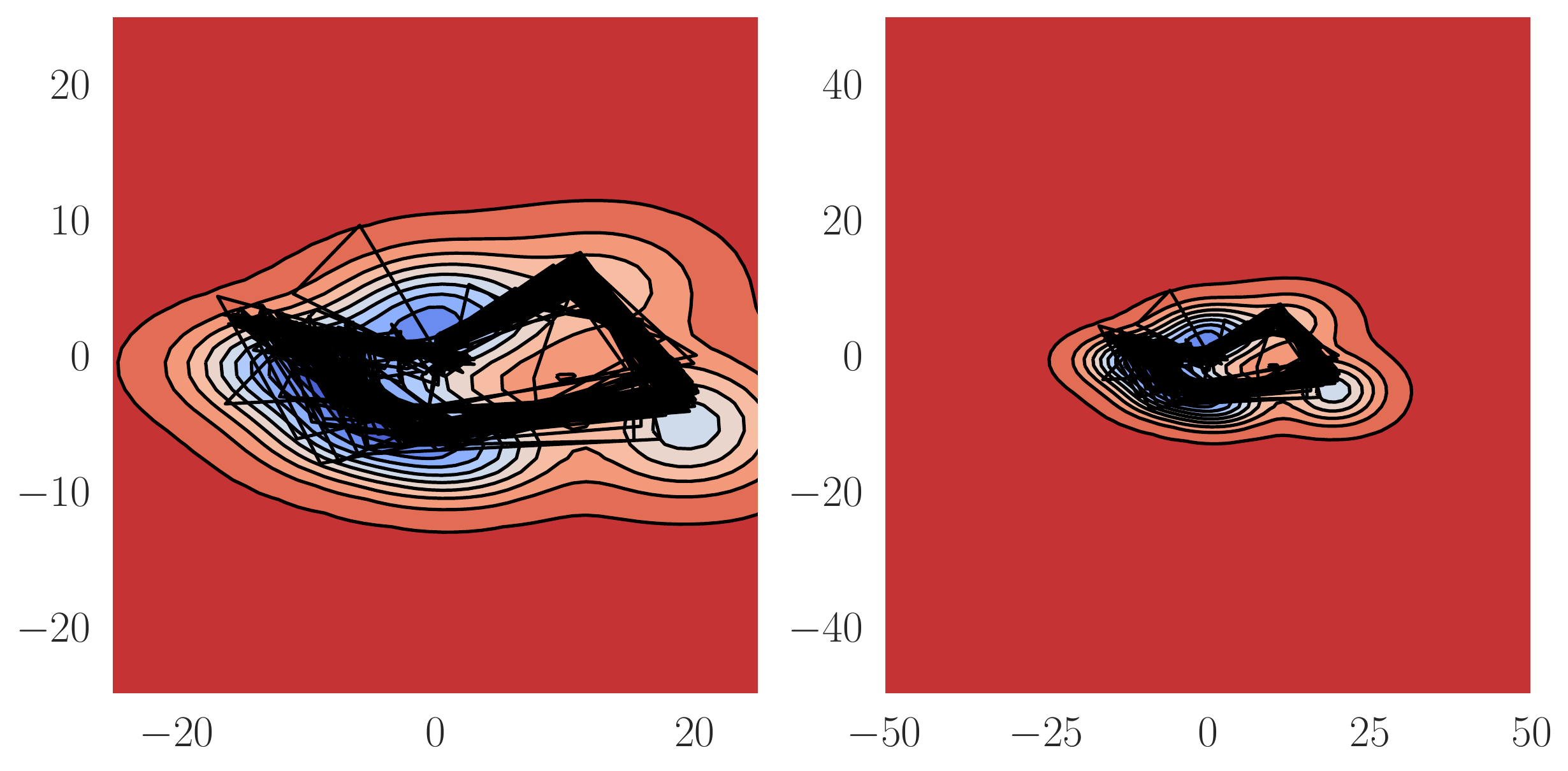}
\end{subfigure}
\begin{subfigure}[t]{0.49\columnwidth}
    \includegraphics[width=\columnwidth]{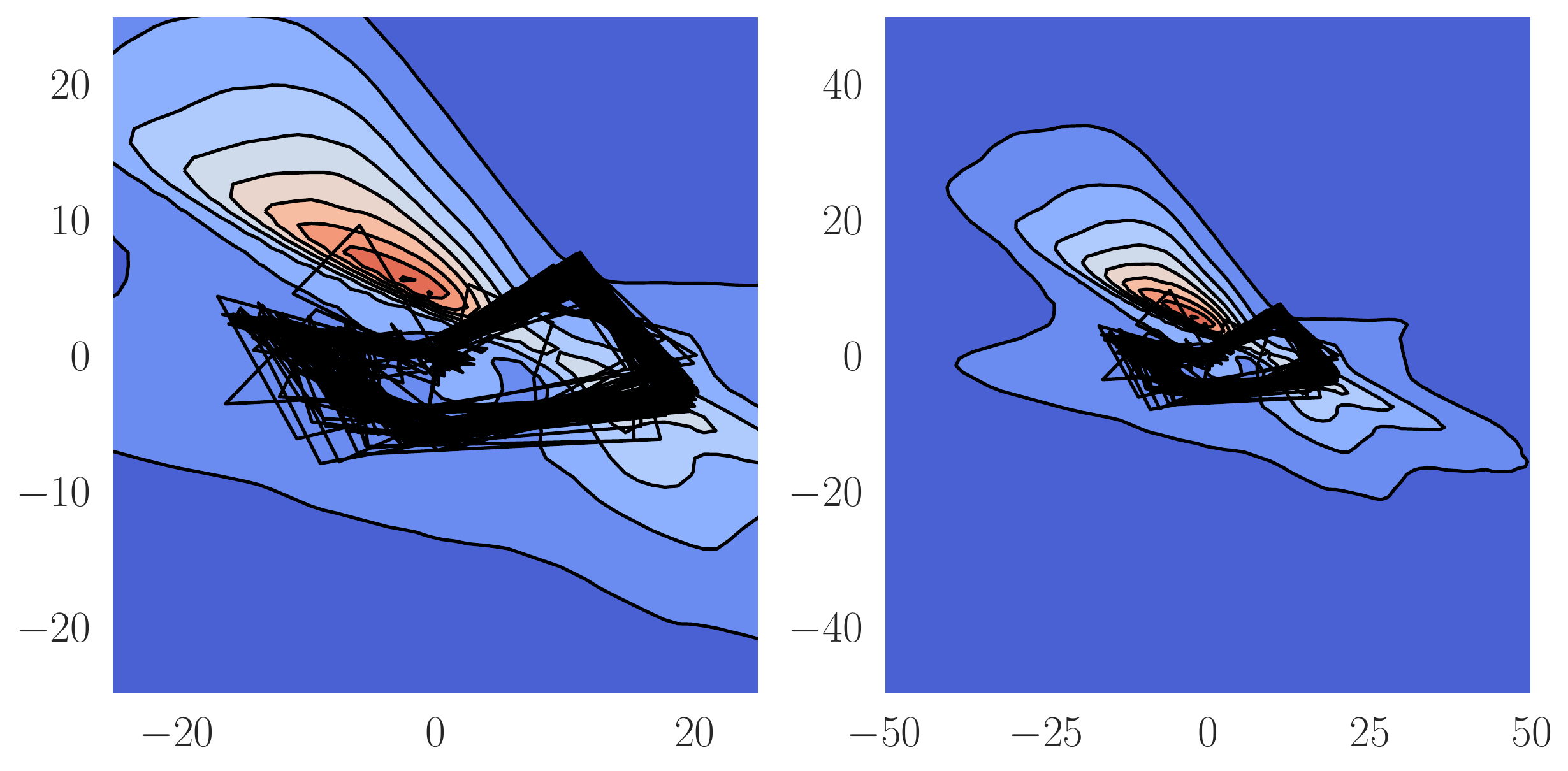}
\end{subfigure}
\\
\centering
\hspace*{6pt}
\begin{subfigure}[t]{0.47\columnwidth}
    \includegraphics[width=\columnwidth]{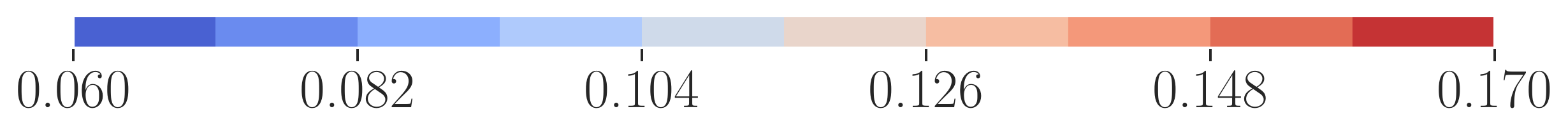}
\end{subfigure}
\hspace*{6pt}
\begin{subfigure}[t]{0.47\columnwidth}
    \includegraphics[width=\columnwidth]{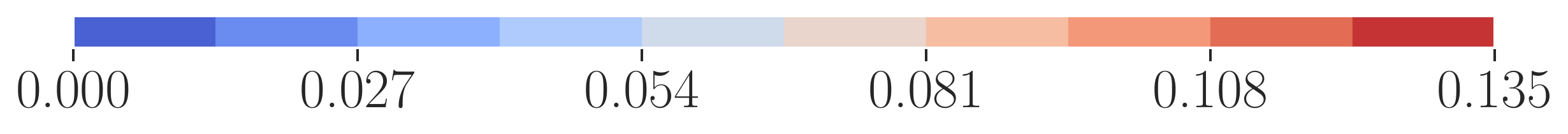}
\end{subfigure}
\caption{Ant-v2}
\end{subfigure}
\begin{subfigure}[t]{0.8\columnwidth}
\begin{subfigure}[t]{0.49\columnwidth}
    \includegraphics[width=\columnwidth]{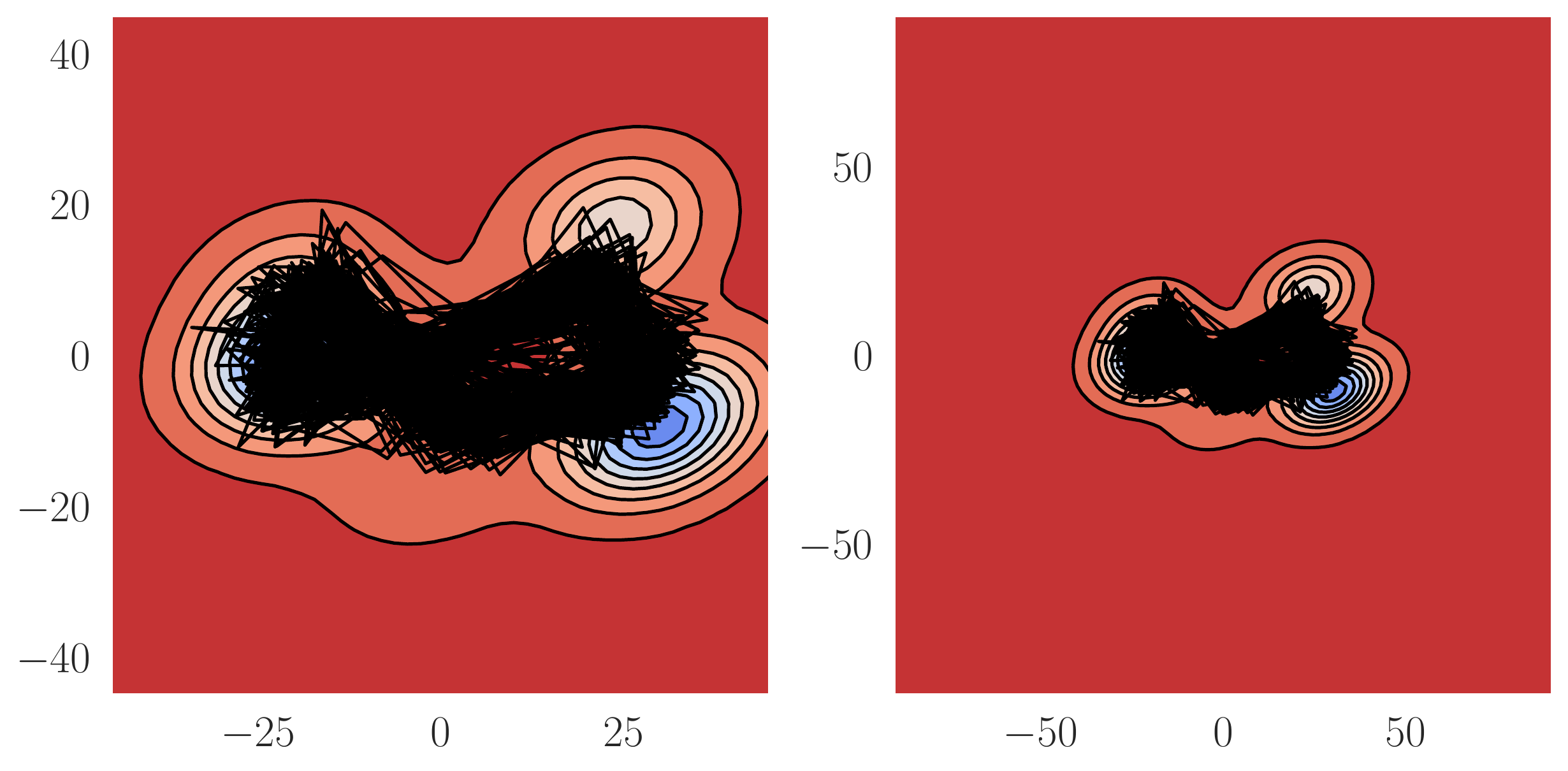}
\end{subfigure}
\begin{subfigure}[t]{0.49\columnwidth}
    \includegraphics[width=\columnwidth]{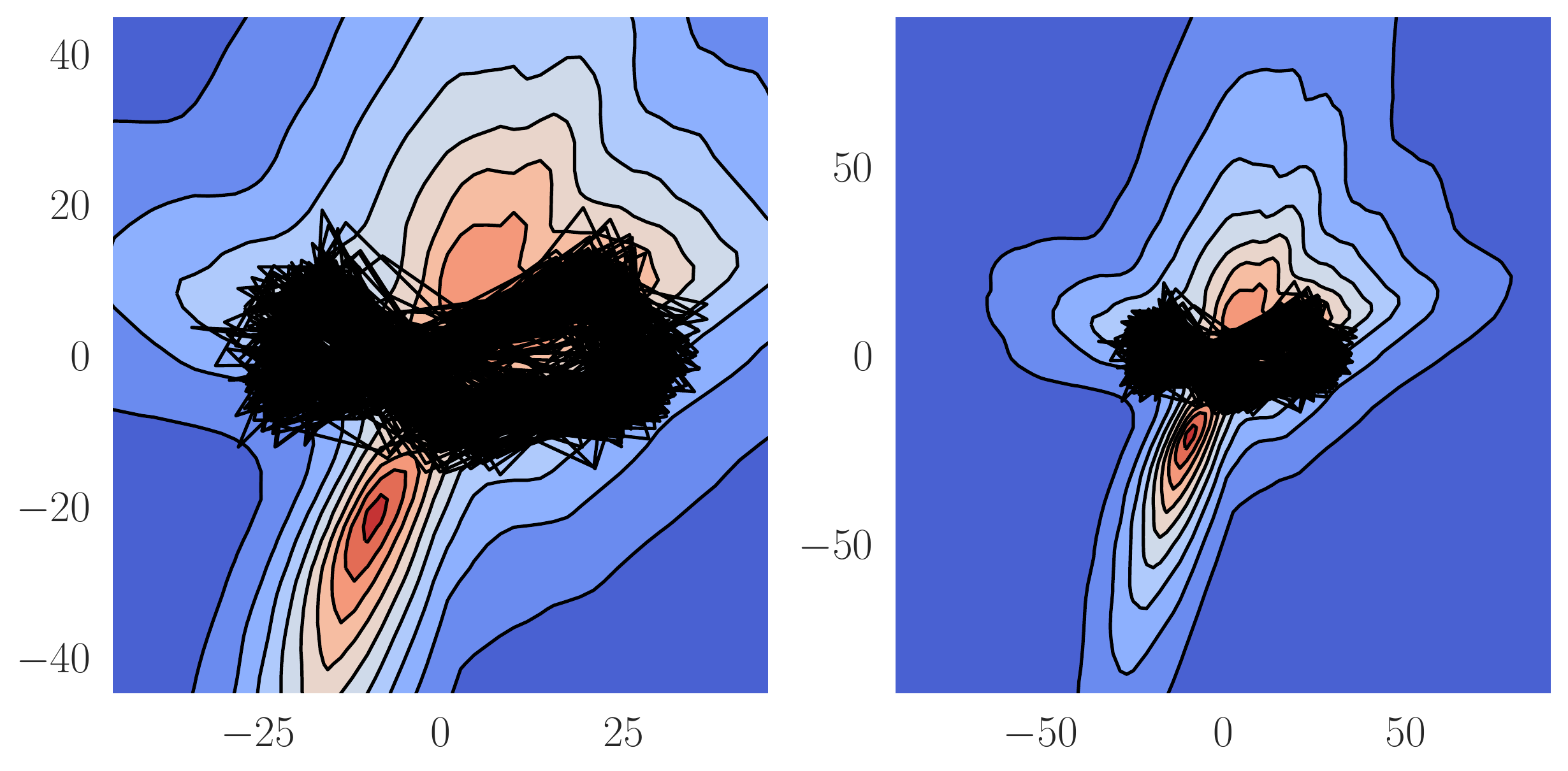}
\end{subfigure}
\\
\centering
\hspace*{6pt}
\begin{subfigure}[t]{0.47\columnwidth}
    \includegraphics[width=\columnwidth]{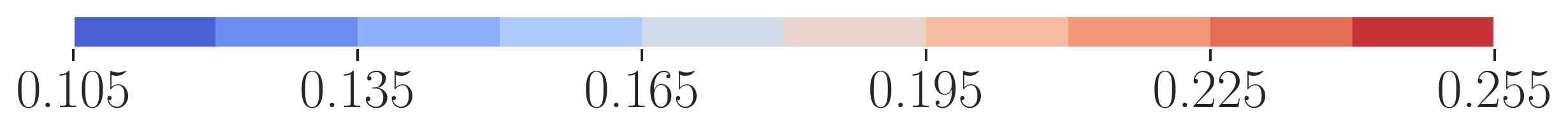}
\end{subfigure}
\hspace*{6pt}
\begin{subfigure}[t]{0.47\columnwidth}
    \includegraphics[width=\columnwidth]{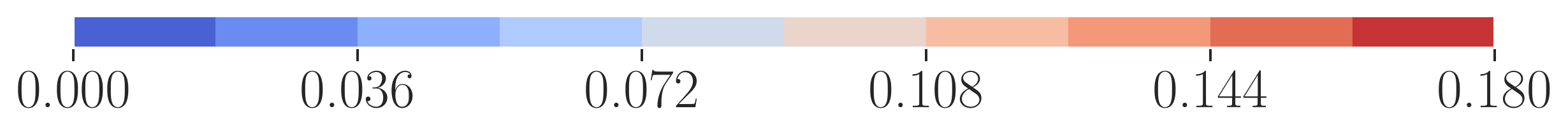}
\end{subfigure}
\caption{HalfCheetah-v2}
\end{subfigure}
\begin{subfigure}[t]{0.8\columnwidth}
\begin{subfigure}[t]{0.49\columnwidth}
    \includegraphics[width=\columnwidth]{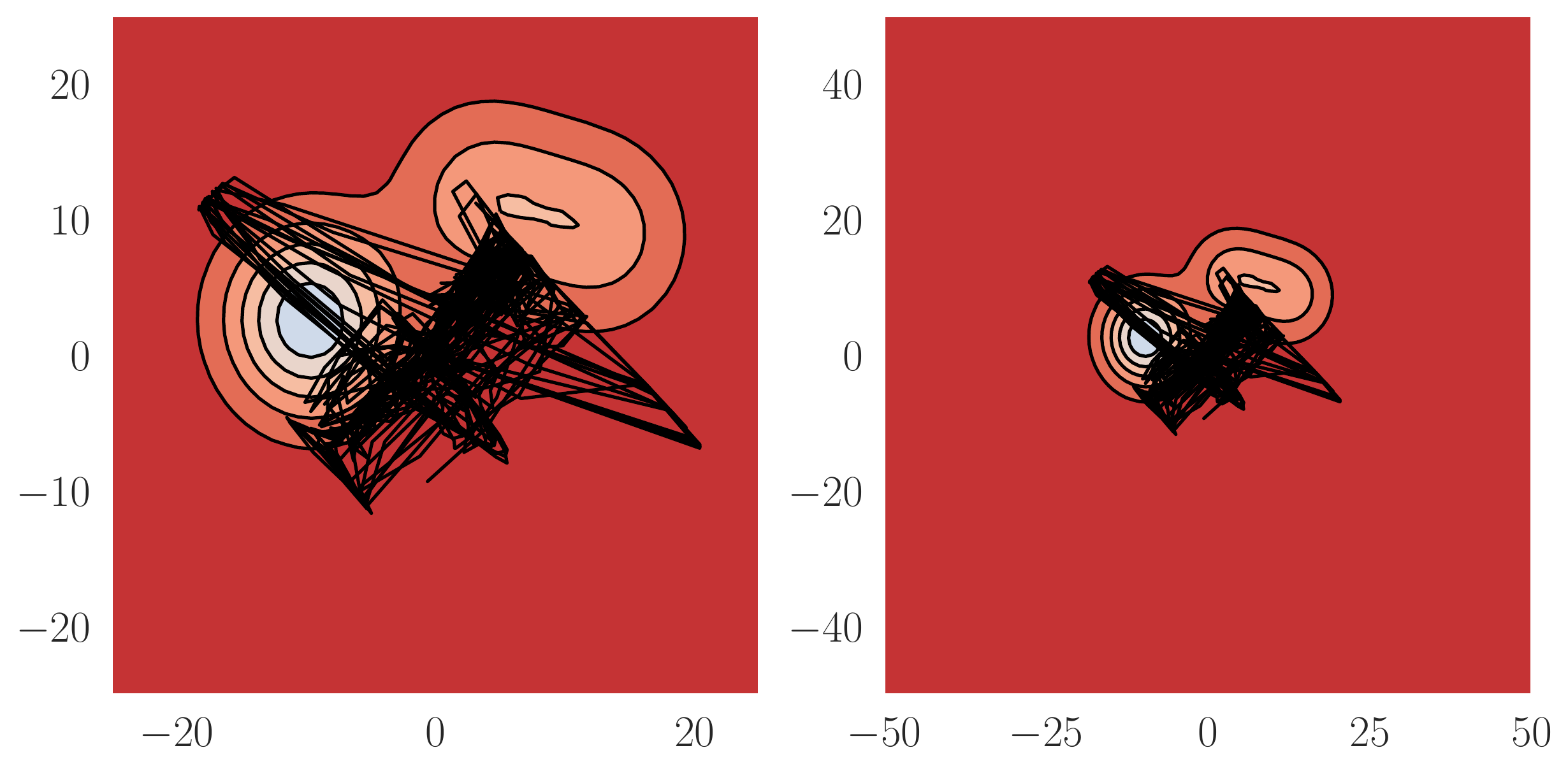}
\end{subfigure}
\begin{subfigure}[t]{0.49\columnwidth}
    \includegraphics[width=\columnwidth]{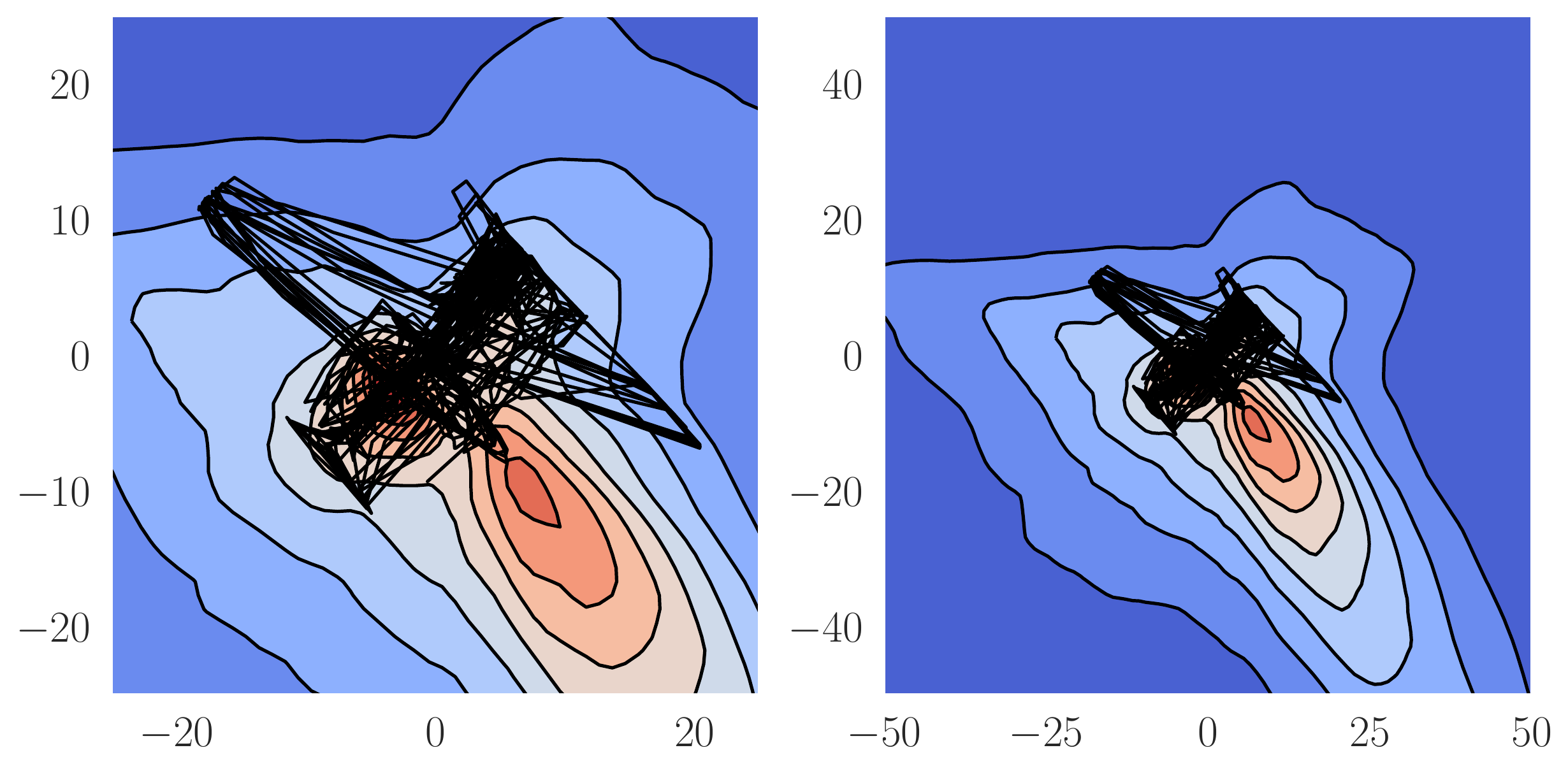}
\end{subfigure}
\\
\centering
\hspace*{6pt}
\begin{subfigure}[t]{0.47\columnwidth}
    \includegraphics[width=\columnwidth]{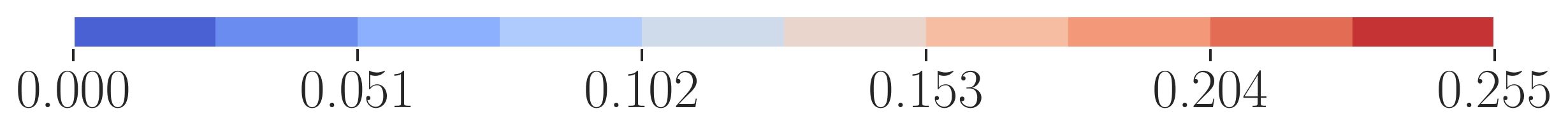}
\end{subfigure}
\hspace*{6pt}
\begin{subfigure}[t]{0.47\columnwidth}
    \includegraphics[width=\columnwidth]{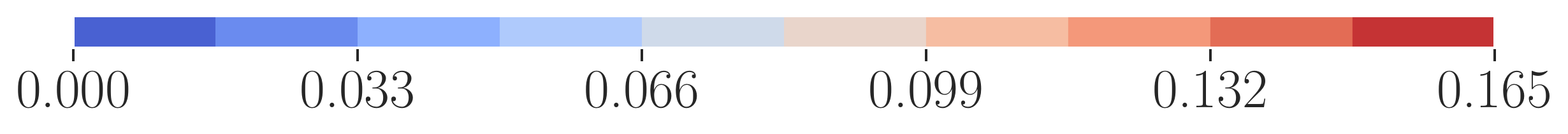}
\end{subfigure}
\caption{Walker-v2}
\end{subfigure}
\caption{
    Predictive variances of non-parametric and parametric behavioral policies on low dimensional representations of the environments considered in Figures~\ref{fig:mujcoo_eval} and~\ref{fig:dex_eval} (excluding ``door-binary-v0'', which is shown in~\Cref{fig:heatmaps}).
    \textbf{Left Column}: Non-parametric Gaussian process posterior behavioral policy \mbox{$\pi_{\mathcal{GP}}(\cdot \vbar \bs, \calD_{0}) = \mathcal{GP}(\bmu_{0}(\bs), \bSigma_{0}(\bs, \bs'))$}.
    \textbf{Right Column}: Parametric neural network Gaussian behavioral policy \mbox{$\pi_{\psi}(\cdot \vbar \bs) = \calN(\bmu_{\psi}(\bs), \bsigma^2_{\psi}(\bs))$}.
    Expert trajectories $\calD$ used to train the behavioral policies are shown in black.
    As in~\Cref{fig:heatmaps}, the predictive variance of the \gp is well-calibrated, whereas the predictive variance of the neural network is not.
    }
\label{fig:more_heatmaps}
\end{figure}

\clearpage

\subsection{Visual Comparison of Parametric vs. Non-Parametric Behavioral Policy Trajectories}
\label{appsec:visualizations}

To better understand the significance of the behavioral policy's model class, we sample trajectories from different behavioral policies on the door-opening task in~\Cref{fig:uncertainty_collapse}.
We visualize the mean trajectory and predictive variances of various behavioral policies showing a more sensible mean trajectory and predictive variance from the non-parametric \gp policy leading to better regularization compared to a behavioral policy parameterized by a neural network and the implicit uniform prior in SAC, a state-of-the-art RL algorithm.
On a randomly sampled unseen goal, we can see in~\Cref{fig:traj_plot} that a neural network policy trained via MLE produces a confident but incorrect trajectory.
The starting position is shown in black and the goal position is shown in green.
We also visualize a uniform prior, which SAC implicitly regularizes against.
Informative priors from offline data can greatly accelerate the online performance of such actor-critic methods.

\begin{figure*}[ht]
\centering
\vspace*{-8pt}
  \begin{subfigure}{0.18\textwidth}
    \includegraphics[width=\textwidth]{figures/hand_dapg_door.png}
    \caption{}
    \label{fig:door_3d_env}
  \end{subfigure}
  ~
  \hspace*{1cm}
  \begin{subfigure}{0.18\textwidth}
    \includegraphics[width=\textwidth]{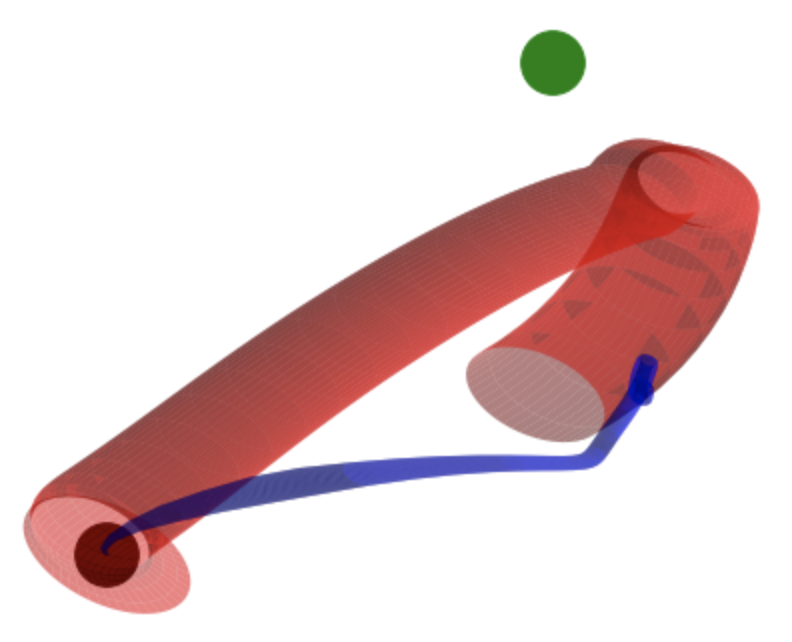}
    \caption{}
    \label{fig:traj_plot}
  \end{subfigure}
  ~
  \hspace*{1cm}
  \begin{subfigure}{0.18\textwidth}
    \includegraphics[width=\textwidth]{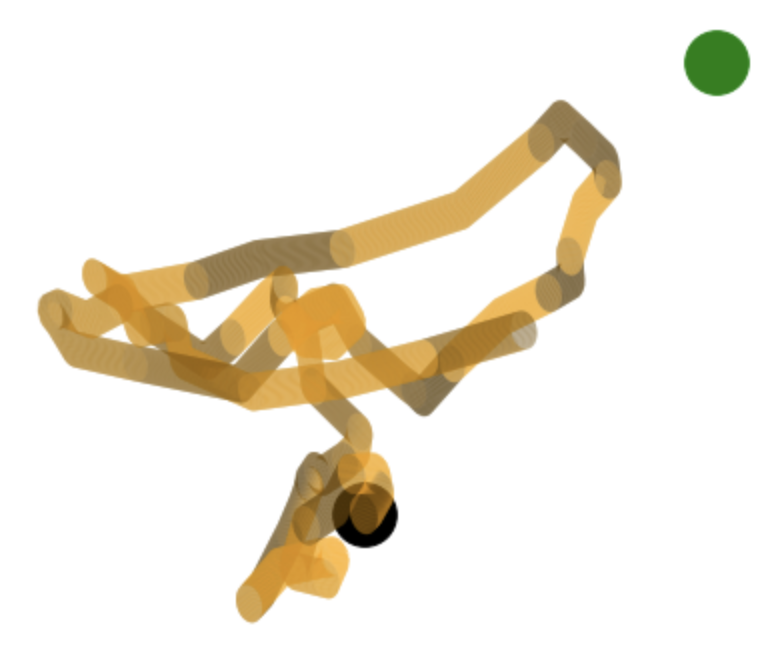}
    \caption{}
    \label{fig:rand_traj_plot}
  \end{subfigure}
  \\
  \begin{subfigure}{\textwidth}
  \includegraphics[trim=0 0 0 90, clip, width=\textwidth]{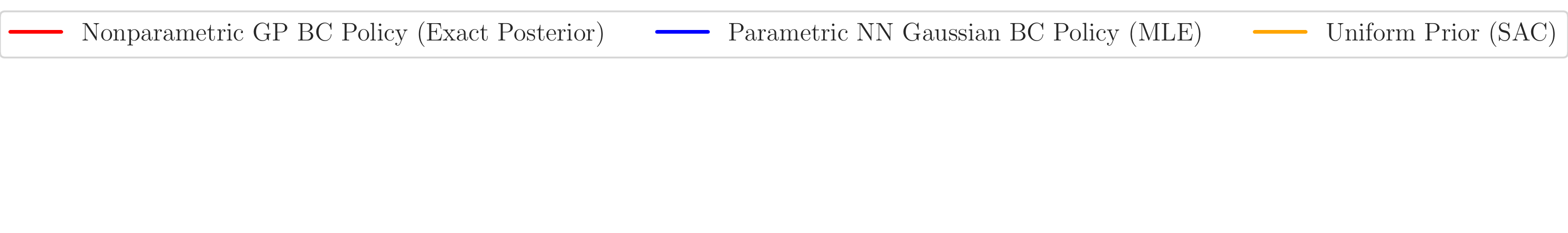}
  \end{subfigure}
  \vspace*{-45pt}
  \caption{
    Left: challenging door opening task~\citep{rajeswaran2018learning} which standard RL algorithms struggle on. Right and center: 3D plots of sampled mean trajectories and predictive variances from different behavioral policies from expert demonstration $\pi_{0}$, showing a more sensible mean trajectory and predictive variance from the non-parametric \gp policy leading to better regularization over both: \textbf{(b)} a behavioral policy using a poor model class, and \textbf{(c)} the implicit uniform prior in SAC.
    Starting position shown in \textbf{black} and goal position shown in \textbf{\color{darkgreen}{green}}.}
  \label{fig:uncertainty_collapse}
  \vspace*{-13pt}
\end{figure*}

\section{Further Implementation Details}

\subsection{Algorithmic Details}
\label{appsec:algorithm}

\textbf{Pre-training}$~$
On the dexterous hand manipulation tasks, before online training, the online policy is pre-trained to minimize the \kld to the behavioral reference policy on the offline dataset:
\begin{align*}
   J_{\mathcal{GP}}(\phi)
   \defines
   \mathbb{E}_{\bs \sim \calD_{0}} \left[ \DKL{\pi_{\phi}(\cdot \vbar \bs)}{\pi_{0}(\cdot \vbar \bs)} \right].
\end{align*}\\[-30pt]

\newcommand{\algrule}[1][.2pt]{\par\vskip.5\baselineskip\hrule height #1\par\vskip.5\baselineskip}

\begin{algorithm}[!hbt]
\caption{Non-Parametric Prior Actor--Critic}
\label{alg:nppac}
\begin{algorithmic}
   \State {\bfseries Input:} offline dataset $\calD_{0}$, initial parameters $\theta_{1}$, $\theta_{2}$, $\phi$, \gp $\pi_{0}(\cdot \vbar \bs) = \mathcal{GP}\bigl(m(\bs), k(\bs, \bs')\bigr)$
   
   \State Condition $\pi_{0}(\cdot \vbar \bs)$ on $\calD_{0}$ to obtain $\pi_{0}(\cdot \vbar \bs, \calD_{0})$
   
   \For{each offline batch}
      \State $\phi \gets \phi - \lambda_{\mathcal{GP}} \hat{\nabla}_{\phi} J_{\mathcal{GP}}(\phi)$ \Comment{Minimize KL between online and behavioral reference policy.}
   \EndFor
   
   \algrule
   
   \State $\bar{\theta}_{1} \gets \theta_{1}$, $\bar{\theta}_{2} \gets \theta_{2}$ \Comment{Initialize target network weights.}
   \State $\calD \gets \emptyset $ \Comment{Initialize an empty replay pool.}
   
   \For{each iteration}
   
   \For{each environment step}
      \State $\ba_{t} \sim \pi_{\phi}(\cdot \vbar \bs_{t})$
      \State $\bs_{t+1} \sim p(\cdot \vbar \bs_{t}, \ba_{t})$
      \State $\calD \gets \calD \cup \{ (\bs_{t}, \ba_{t}, r(\bs_{t}, \ba_{t}), \bs_{t+1}) \}$
   \EndFor
   
   \For{each gradient step}
      \State $\theta_{i} \gets \theta_{i} - \lambda_{Q} \hat{\nabla}_{\theta_{i}} J_{Q}(\theta_{i})$ for $i\in \{ 1, 2 \}$
      \State $\phi \gets \phi - \lambda_{\pi} \hat{\nabla}_{\phi} J_{\pi}(\phi)$ \Comment{Minimize $J_{Q}$ and $J_{\pi}$ using \gp $\pi_{0}(\cdot \vbar \bs, \calD_{0})$.}
      \State $\hat{\theta}_{i} \gets \tau \theta_{i} + (1 - \tau ) \hat{\theta}_{i}$ for $i\in \{ 1, 2 \}$ \Comment{Update target network weights.}
   \EndFor
   
   \EndFor
   
   \State {\bfseries Output:} Optimized parameters $\theta_{1}$, $\theta_{2}$, $\phi$
\end{algorithmic}
\end{algorithm}
\vspace*{-20pt}

\subsection{Hyperparameters}
\label{appsec:hyperparam}

\Cref{tab:fpac_hyperparams} lists the hyperparameters used for \textsc{n-ppac}.
For other hyperparameter values, we used the default values in the RLkit repository.
When multiple values are given, the former refer to MuJoCo continuous control and the latter to dexterous hand manipulation tasks.

\setlength{\tabcolsep}{47.5pt}
\begin{table}[h!]
\vspace*{-10pt}
\centering
\caption{\textsc{n-ppac} hyperparameters.}
\label{tab:fpac_hyperparams}
\begin{tabular}{l|c}
\toprule
\textbf{Parameter}                    & \textbf{Value(s)}   \\ \hline
optimizer                             & Adam                \\
learning rate                         & $3\cdot 10^{-4}$    \\
discount ($\gamma$)                   & 0.99                \\
reward scale                          & 1                   \\
replay buffer size                    & $10^{6}$            \\
number of hidden layers               & \{2, 4\}            \\
number of hidden units per layer      & 256                 \\
number of samples per minibatch       & \{256, 1024\}       \\
activation function                   & ReLU                \\
target smoothing coefficient ($\tau$) & 0.005               \\
target update interval                & 1                   \\
number of policy pretraining epochs   & 400                 \\ 
GP covariance function                & \{RBF, Mat\'ern\}   \\ \hline
\hline
\end{tabular}
\end{table}

\Cref{tab:gpopt_hyperparams} lists the hyperparameters used to train the Gaussian process on the offline data.
The hyperparameters are trained by maximizing the log-marginal likelihood.
The offline data is provided under the Apache License 2.0.

\setlength{\tabcolsep}{75.0pt}
\begin{table}[h!]
\centering
\vspace*{-10pt}
\caption{
    \gp optimization hyperparameters.
}
\begin{tabular}{l|c}
\toprule
\textbf{Parameter}                    & \textbf{Value}      \\ \hline
optimizer                             & Adam                \\
learning rate                         & 0.1                 \\
number of epochs                      & 500                 \\ \hline
\hline
\end{tabular}
\label{tab:gpopt_hyperparams}
\end{table}

\textbf{Hyperparameter Sweep for~\Cref{sec:model_comp_reduced}.}
For the BNN behavioral policy trained via Monte Carlo dropout, a dropout probability of $p=0.1$ and a weight decay coefficient $1e-6$ were used.
These values were found via a hyperparameter search over $\{ 0.1, 0.2 \}$ for $p$ and $\{ 1e-4, 1e-5, 1e-6, 1e-7 \}$ for the dropout probability and the weight decay coefficient, respectively.

For the deep ensemble behavioral policy, $M=15$ ensemble members and a weight decay coefficient of $1e-6$ were used.
The weight decay coefficient was found via a hyperparameter search over $\{ 5, 10, 15, 20\}$ for $M$ and $\{ 1e-4, 1e-5, 1e-6, 1e-7 \}$ for the weight decay coefficient.
Each ensemble member was trained on a different 80-20 training--validation split and initialized using different random seeds.

\end{appendices}

\end{document}